\newcommand{\R}{\mathbb{R}}
\newcommand{\rbr}[1]{\left(#1\right)}
\newcommand{\diag}{\mathop{\mathrm{diag}}}
\newcommand{\bB}{\bar B}
\newcommand{\bn}{\bar n}
\theoremstyle{definition}
\theoremstyle{remark}
\icmltitlerunning{Optimization of  Graph Neural Networks: Implicit Acceleration by Skip Connections and More Depth}
\begin{document}

\twocolumn[
\icmltitle{Optimization of  Graph Neural Networks: \\ Implicit Acceleration by Skip Connections and More Depth}



\icmlsetsymbol{equal}{*}

\begin{icmlauthorlist}
\icmlauthor{Keyulu Xu}{equal,mit}
\icmlauthor{Mozhi Zhang}{m}
\icmlauthor{Stefanie Jegelka}{mit}
\icmlauthor{Kenji Kawaguchi}{equal,harvard}
\end{icmlauthorlist}

\icmlaffiliation{mit}{Massachusetts Institute of Technology (MIT)}
\icmlaffiliation{m}{The University of Maryland}
\icmlaffiliation{harvard}{Harvard University}

\icmlcorrespondingauthor{Keyulu Xu}{keyulu@mit.edu}
\icmlcorrespondingauthor{Kenji Kawaguchi}{kkawaguchi@fas.harvard.edu}

\icmlkeywords{deep learning, graph neural networks}

\vskip 0.3in
]

\printAffiliationsAndNotice{\icmlEqualContribution} 

\begin{abstract}
Graph Neural Networks (GNNs) have been studied through the lens of expressive power and generalization. However, their optimization properties are less well understood. We take the first step towards analyzing GNN training by studying the gradient dynamics of GNNs. First, we analyze linearized GNNs and prove that despite the non-convexity of training, convergence to a global minimum at a linear rate is guaranteed under mild assumptions that we validate on real-world graphs. Second, we study what may affect the GNNs' training speed. Our results show that the training  of GNNs is implicitly accelerated by skip connections, more depth, and/or a good label distribution. Empirical results confirm that our theoretical results for linearized GNNs align with the training behavior of nonlinear GNNs. Our results provide the first theoretical support for the success of GNNs with skip connections in terms of optimization, and suggest that deep GNNs with skip connections would be promising in practice.
\end{abstract}

\section{Introduction}
\label{sec:intro}

Graph Neural Networks (GNNs)~\citep{gori2005new, scarselli2009graph} are an effective framework for learning with graphs. GNNs learn node representations  on a graph by extracting high-level features not only from a node itself but also from a node's surrounding subgraph. Specifically, the node representations are recursively aggregated and updated using neighbor representations  ~\citep{merkwirth05, duvenaud2015convolutional, defferrard2016convolutional,  kearnes2016molecular,  gilmer2017neural,  hamilton2017inductive, velivckovic2017graph, liao2020graph}.   


Recently, there has been a surge of interest in studying the theoretical aspects of GNNs to understand their success and limitations. Existing works have studied GNNs' expressive power~\citep{keriven2019universal, maron2019provably, chen2019equivalence, xu2018how, sato2019approximation, Loukas2020}, generalization capability~\citep{scarselli2018vapnik, du2019graph, Xu2020What, garg2020generalization}, and extrapolation properties~\citep{xu2020neural}. However, the understanding of the optimization properties of GNNs has remained limited.  For example, researchers working on the fundamental problem of designing more expressive GNNs hope and often empirically observe that more powerful GNNs better fit the training set~\citep{xu2018how, sato2020random,  vignac2020building}. Theoretically, given the non-convexity of GNN training, it is still an open question whether better representational power always translates into smaller training loss. This motivates the more general questions: 
\begin{center}
    \textit{Can gradient descent find a global minimum for GNNs? What affects the speed of convergence in training?}
\end{center}
In this work, we take an initial step towards answering the questions above by analyzing the trajectory of gradient descent, i.e., \textit{gradient dynamics} or \textit{optimization dynamics}. A complete understanding of the dynamics of GNNs, and deep learning in general, is challenging. Following prior works on gradient dynamics~\citep{saxe2013exact, arora2019convergence, bartlett2019gradient}, we consider the  linearized regime, i.e., GNNs with \textit{linear} activation. Despite the linearity, key properties of nonlinear GNNs are present: The objective function is \textit{non-convex} and the dynamics are \textit{nonlinear}~\citep{saxe2013exact, kawaguchi2016deep}. Moreover, we observe the learning curves of linear GNNs and ReLU GNNs are surprisingly similar, both converging to nearly zero training loss at the same linear rate (Figure~\ref{fig:intro}).  Similarly, prior works report comparable performance in node classification benchmarks even if we remove the non-linearities~\citep{thekumparampil2018attention, wu2019simplifying}. Hence, understanding the dynamics of linearized GNNs is a valuable step towards understanding the general GNNs.

 Our analysis leads to an affirmative answer to the first question. We establish that gradient descent training of a linearized GNN with squared loss converges to a global minimum at a linear rate. Experiments confirm that the assumptions of our theoretical results for global convergence hold on real-world datasets. The most significant contribution of our convergence analysis is on multiscale GNNs, i.e., GNN architectures that use \textit{skip connections} to combine graph features at various scales~\citep{xu2018representation,li2019deepgcns,  abu2020n, chen2020simple, li2020deepergcn}. The skip connections introduce complex interactions among layers, and thus the resulting dynamics are more intricate. To our knowledge, our results are the first convergence results for GNNs with \textit{more than one} hidden layer, with or without skip connections.

\begin{figure}[!t]
    \centering
        \includegraphics[width=0.2382\textwidth]{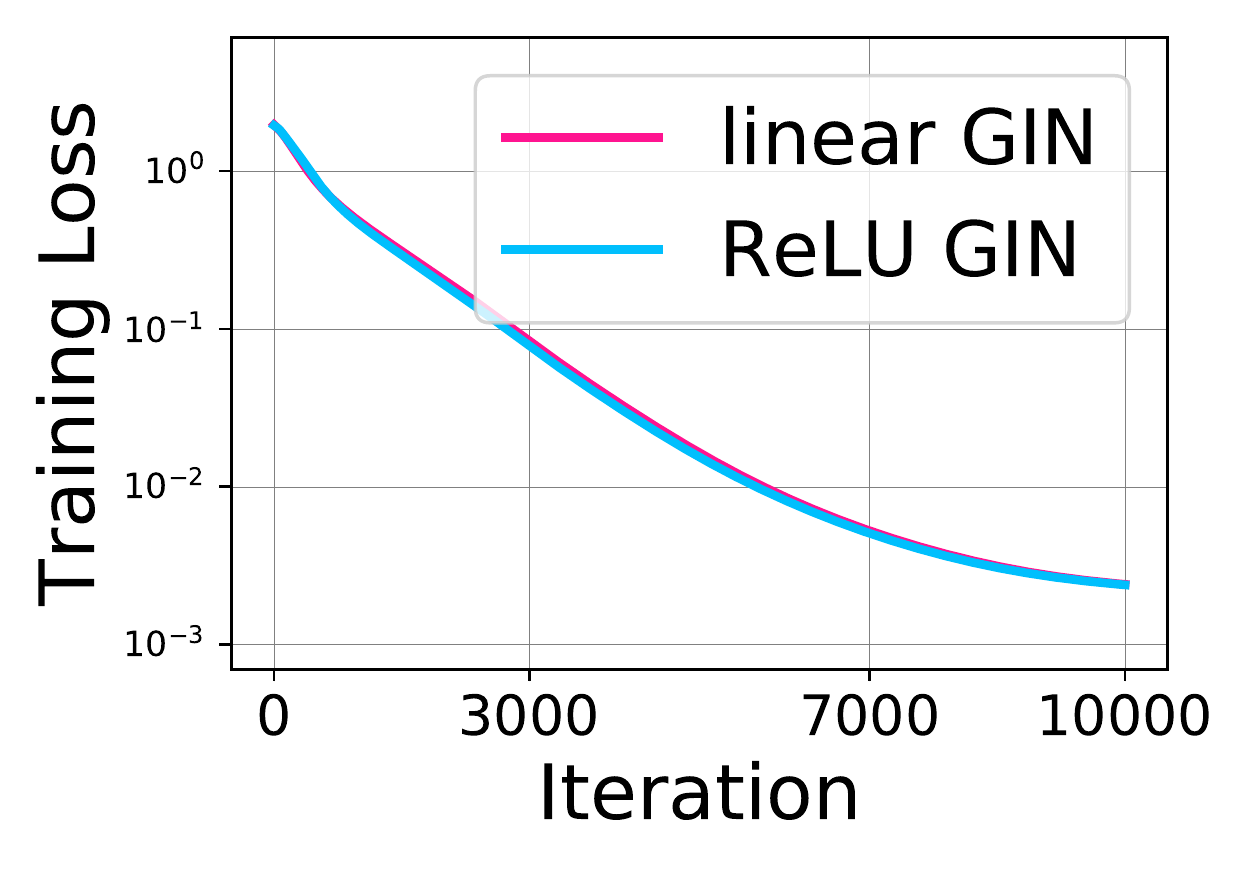}
         \includegraphics[width=0.2382\textwidth]{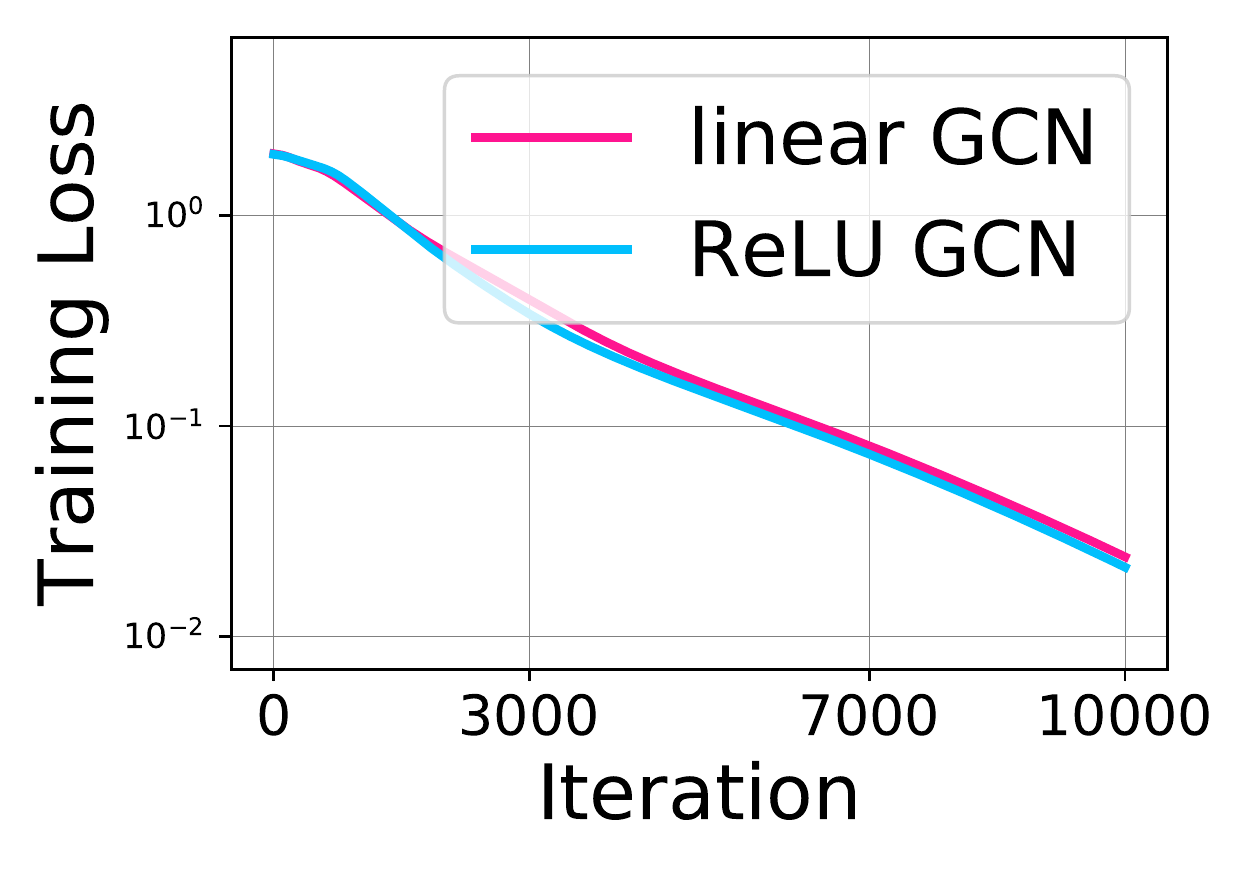}
   \caption{\textbf{Training curves of linearized GNNs vs. ReLU GNNs} on the Cora node classification dataset.  }
\label{fig:intro}
\end{figure}

We then study what may affect the training speed of GNNs.  First, for any fixed depth,  GNNs with skip connections train faster. Second, increasing the depth further accelerates the training of GNNs. Third, faster training is obtained when the labels are more correlated with the graph features, i.e., labels contain ``signal'' instead of ``noise''.  Overall, experiments for nonlinear GNNs agree with the prediction of our theory for linearized GNNs. 

Our results provide the first theoretical justification for the empirical success of multiscale GNNs in terms of optimization, and suggest that deeper GNNs with skip connections may be promising in practice. In the GNN literature, skip connections are initially motivated by the ``over-smoothing'' problem~\citep{xu2018representation}: via the recursive neighbor aggregation, node representations of a \textit{deep} GNN on expander-like subgraphs would be mixing features from almost the entire graph, and may thereby ``wash out'' relevant local information. In this case, shallow GNNs may perform better. Multiscale GNNs with \textit{skip connections} can combine and adapt to the graph features at various scales, i.e., the output of intermediate GNN layers, and such architectures are shown to help with this over-smoothing problem~\citep{xu2018representation, li2019deepgcns, li2020deepergcn, abu2020n, chen2020simple}. However, the properties of multiscale GNNs have mostly been understood at a conceptual level.   \citet{xu2018representation} relate the learned representations to random walk distributions and \citet{oono2020optimization} take a boosting view, but they do not consider the optimization dynamics. We give an  explanation from the lens of optimization. The training losses of deeper GNNs may be worse due to over-smoothing. In contrast, multiscale GNNs can express any shallower GNNs and fully exploit the  power by converging to a global minimum. Hence, our results suggest that deeper GNNs with skip connections are guaranteed to train faster with smaller training losses.

We present our results on global convergence in Section~\ref{sec:convergence}, after introducing  relevant background (Section~\ref{sec:preliminary}). In Section~\ref{sec:accelerate}, we compare the training speed of GNNs as a function of skip connections, depth, and the label distribution. All proofs are deferred to the Appendix.

\section{Preliminaries}
\label{sec:preliminary}

\begin{figure*}[!t]
    \centering
    \begin{subfigure}[b]{0.32\textwidth}
        \includegraphics[width=0.9\textwidth]{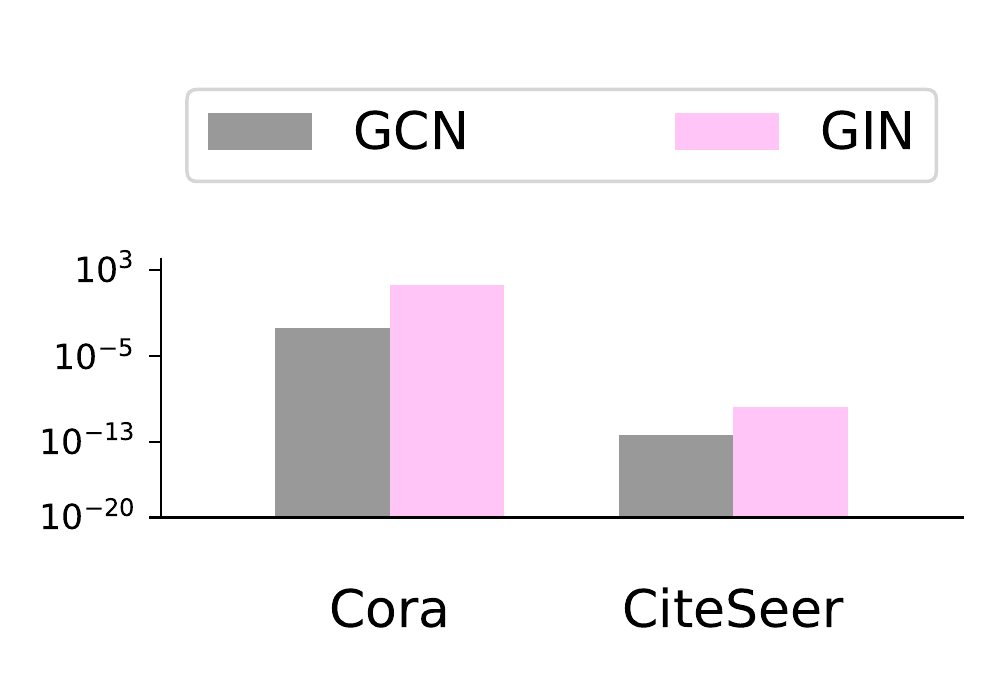} 
        \vspace{0.02in}
        \caption{Graph  $\sigma^2_{\min}(X(S^{H})_{*\Ical})$}
        \label{fig:c11}
    \end{subfigure}   \hspace{-0.03\textwidth}
      \centering
    \begin{subfigure}[b]{0.32\textwidth}
        \includegraphics[width=0.9\textwidth]{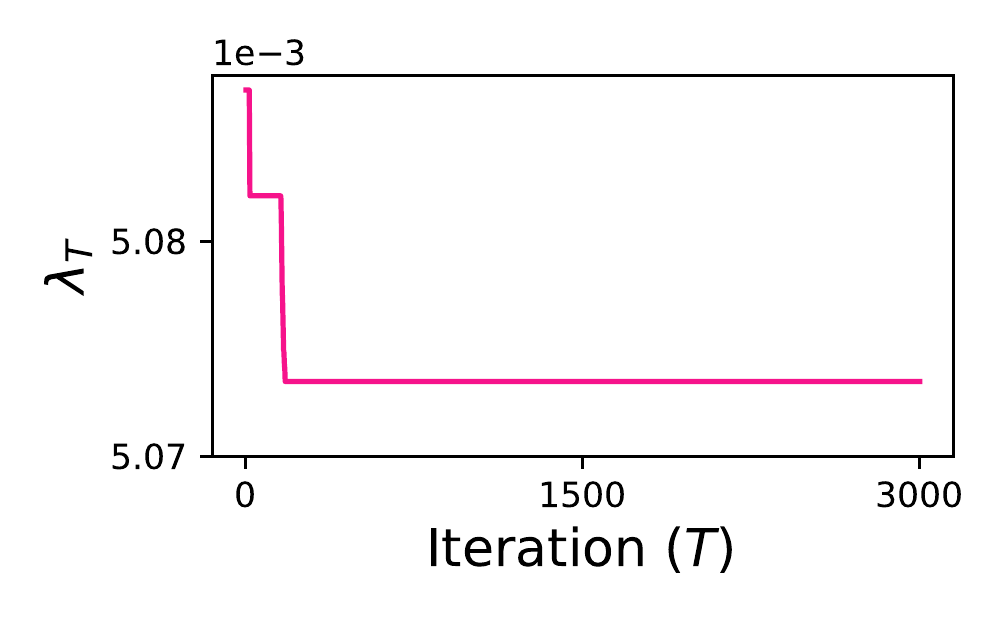} 
        \caption{Time-dependent  $\lambda_T^{(H)}$}
        \label{fig:c12}
    \end{subfigure}  
      \centering
      \begin{subfigure}[b]{0.32\textwidth}
        \includegraphics[width=0.9\textwidth]{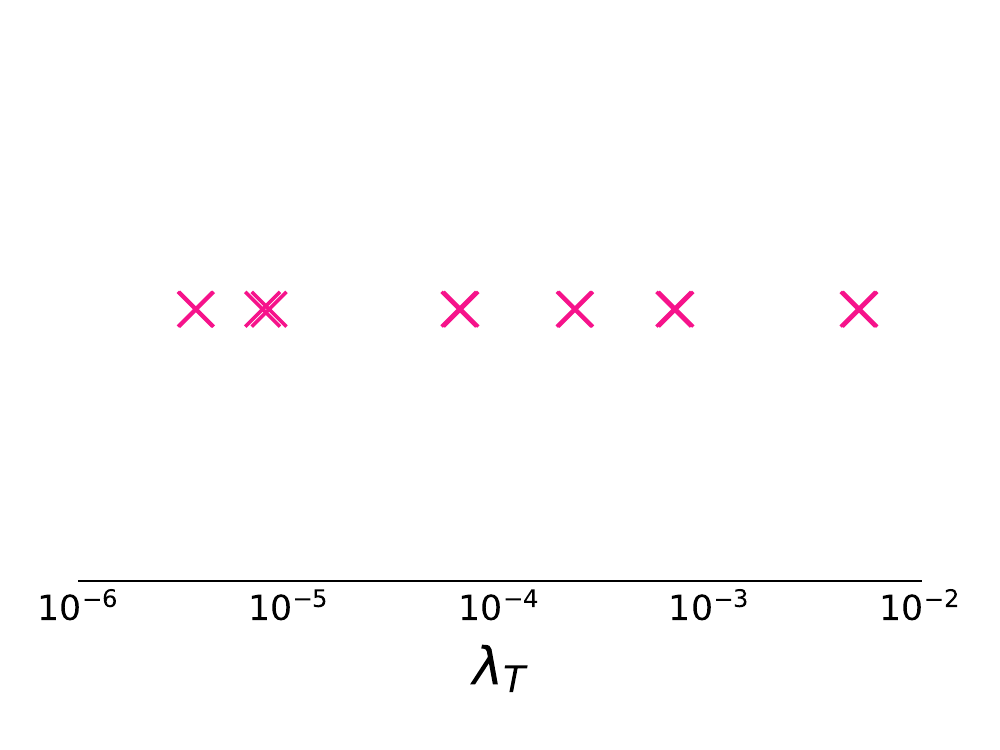} 
        \caption{$\lim_{T \rightarrow \infty}\lambda_T^{(H)}$ across training settings}
        \label{fig:c13}
    \end{subfigure}   
   \caption{\textbf{Empirical validation of assumptions for global convergence of  linear GNNs.} Left panel confirms the graph condition $\sigma^2_{\min}(X(S^{H})_{*\Ical})>0$ for  datasets Cora and Citeseer, and for models GCN  and GIN. Middle panel shows the time-dependent  $\lambda_T^{(H)}$ for one training setting (linear GCN on Cora). Each point in right panel is  $\lambda_T^{(H)} >0$ at the last iteration for different training settings. }
\label{fig:condition1}
\end{figure*}

\subsection{Notation and Background}
We begin by introducing our notation. Let $G = \left(V, E \right)$ be a graph with $n$ vertices $V = \{v_1, v_2, \cdots, v_n \}$. Its adjacency matrix  $A\in\R^{n\times n}$ has entries $A_{ij}=1$ if $(v_i,v_j)\in E$ and $0$ otherwise. The degree matrix associated with $A$ is  $D=\diag \rbr{d_1, d_2, \dots, d_n}$ with $d_i=\sum_{j=1}^{n} A_{ij}$. For any matrix $M \in \RR^{m \times m'}$, we denote its $j$-th column vector  by $M_{*j} \in \RR^m$, its $i$-th row vector by $M_{i*} \in \RR^{m'}$, and its largest and smallest (i.e., $\min(m, m')$-th largest) singular values by $\sigma_{\max}(M)$ and $\sigma_{\min}(M)$, respectively.
The data matrix 
$X \in \R^{m_x \times n}$ has columns $X_{\ast j}$ corresponding to the feature vector of node $v_j$, with input dimension $m_x$. 

The task of interest is node  classification or regression.  Each node $v_i \in V$ has an associated label $y_i \in \R^{m_y}$. In  the transductive (semi-supervised) setting, we have access to training labels for only a subset $\mathcal{I}  \subseteq [n]$ of nodes on $G$,  and the goal is to predict the labels for the other nodes in $[n] \setminus \mathcal{I}$. Our problem formulation easily extends to the inductive setting by letting $\Ical=[n]$, and we can use the trained model for prediction on unseen graphs. Hence, we have access to $\bn = |\Ical| \le n$ training labels $Y = [y_i]_{i \in \Ical}\in \R^{m_y \times \bn}$, and we train the GNN using $X, Y, G$. Additionally, for any  $M \in \RR^{m \times m'}$, $\Ical$ may index sub-matrices  $M_{*\Ical}=[M_{*i}]_{i\in \Ical} \in \RR^{m \times \bn}$ (when $m' \ge n$) and $M_{\Ical*}=[M_{i*}]_{i\in \Ical} \in \RR^{\bn \times m}$ (when $m \ge n$). 

Graph Neural Networks (GNNs) use the graph structure and node features to learn representations of nodes~\citep{scarselli2009graph}. GNNs maintain hidden representations $h^v_{(l)} \in \R^{m_l}$ for each node $v$, where $m_l$ is the hidden dimension on the $l$-th layer. We let $X_{(l)} = \big[h^1_{(l)}, h^2_{(l)}, \cdots, h^n_{(l)}\big]\in\R^{m_l \times n}$, and set $X_{(0)}$ as the  input features $X$. The node hidden representations $X_{(l)}$ are updated by  aggregating and transforming the neighbor representations:
\begin{align}
    X_{(l)} = \sigma \big( B_{(l)} X_{(l-1)} S \big) \in \R^{m_l \times n},
\end{align}
where $\sigma$ is a nonlinearity such as ReLU, $B_{(l)} \in \R^{m_l \times m_{l-1}}$ is the weight matrix, and $S \in \R^{n \times n}$ is the GNN aggregation matrix, whose  formula depends on the exact variant of GNN.  In Graph Isomorphism Networks (GIN)~\citep{xu2018how}, $S  = A  +I_n$ is  the adjacency matrix of $G$ with self-loop, where $I_n \in \R^{n \times n}$ is an identity matrix. In Graph Convolutional Networks (GCN)~\citep{kipf2016semi}, $S=\hat{D}^{-\frac{1}{2}} (A + I_n) \hat{D}^{-\frac{1}{2}}$ is the normalized adjacency matrix, where $\hat{D}$ is the degree matrix of $A+I_n$.

\subsection{Problem Setup}
We first formally define linearized GNNs.
\begin{definition} \label{def:1}
(Linear GNN). Given data  matrix $X \in \R^{m_x \times n}$, aggregation matrix $S \in \R^{n \times n}$, weight matrices $W \in \R^{m_y \times m_H}$,  $B_{(l)} \in \R^{m_l \times m_{l-1}}$, and their collection $B=(B_{(1)},\dots, B_{(H)})$, a linear GNN with $H$ layers $f(X, W, B)  \in \R^{m_y \times n}$  is defined as
\begin{align}
    f(X, W, B)  = W X_{(H)}, \quad
    X_{(l)} = B_{(l)} X_{(l-1)} S.
\end{align}
\end{definition}

Throughout this paper, we refer multiscale GNNs to the commonly used \textit{Jumping Knowledge Network (JK-Net)}~\citep{xu2018representation}, which connects the output of all intermediate GNN layers to the final layer with skip connections: 
\begin{definition} \label{def:2}
(Multiscale linear GNN). Given data  $X \in \R^{m_x \times n}$, aggregation matrix $S \in \R^{n \times n}$, weight matrices $W_{(l)} \in \R^{m_y \times m_l}$,  $B_{(l)} \in \R^{m_l \times m_{l-1}}$ with $W=(W_{(0)},W_{(1)},\dots, W_{(H)})$, a multiscale linear GNN with $H$ layers  $f(X, W, B)  \in \R^{m_y \times n}$ is defined as
\begin{align}
    f(X, W, B)  &= \sum_{l=0}^H W_{(l)} X_{(l)}, \\
    X_{(l)} &= B_{(l)} X_{(l-1)} S.
\end{align}
\end{definition}

Given a GNN $f(\cdot)$ and a loss function $\ell(\cdot, Y)$, we can train the GNN by minimizing the training loss $L(W, B)$:
\begin{align}
    L(W, B) = \ell \big( f(X, W, B)_{*\Ical}, Y \big),
\end{align}
where  $f(X, W, B)_{*\Ical}$ corresponds to the GNN's predictions on nodes that have training labels and thus incur training losses. The pair $(W,B)$ represents the trainable weights: 
$$
L(W, B) = L(W_{(1)},\dots,W_{(H)},B_{(1)},\dots,B_{(H)})
$$

For completeness, we define the global minimum of GNNs.
\begin{definition} \label{def:3}
(Global minimum). For any $H \in \NN_{0}$, $L^*_{H}$ is the global minimum value of the $H$-layer linear GNN $f$: 
\begin{align}
L^*_{H}=\inf_{W,B}  \ell \big( f_{}(X, W, B)_{*\Ical}, Y \big).
\end{align}
Similarly, we define $L^*_{1:H}$ as the global minimum value of the multiscale linear GNN $f$ with $H$ layers.
\end{definition}

We are ready to present our main results on global convergence for linear GNNs and multiscale linear GNNs.

\section{Convergence Analysis}
\label{sec:convergence}

In this section, we show that gradient descent training a linear GNN with squared loss, with or without skip connections, converges linearly to a global minimum. Our conditions for global convergence hold on real-world datasets and provably hold under assumptions, e.g., initialization.

In linearized GNNs, the loss $L(W, B)$ is non-convex (and non-invex) despite the linearity. The graph aggregation  $S$ creates  interaction among the data and poses additional challenges in the analysis. We show a fine-grained analysis of the GNN's gradient dynamics can overcome these challenges. Following previous works on gradient dynamics \citep{saxe2013exact,huang2020dynamics,ji2020directional, kawaguchi2021}, we analyze the GNN learning process via the  \textit{gradient flow}, i.e., gradient descent with infinitesimal steps: $\forall t \ge 0,$ the network weights evolve as
\begin{align}
\frac{d}{dt}W_{t} = - \frac{\partial L}{\partial W}(W_{t},B_{t}), \quad  \frac{d}{dt}B_{t} = - \frac{\partial L}{\partial B}(W_{t},B_{t}),
\end{align}
where $(W_{t}, B_t)$ represents the trainable parameters at time $t$ with  initialization $(W_{0}, B_0)$.

\subsection{Linearized GNNs} \label{sec:convergence:1}

Theorem~\ref{thm:6} states our result on global convergence for linearized GNNs without skip connections. 
\begin{theorem} \label{thm:6}
Let $f$ be an $H$-layer linear GNN and $\ell(q,Y)=\|q-Y\|^{2}_F$ where $q,Y \in \RR^{m_y \times \bn}$. Then, for any $T > 0$, 
\begin{align} \label{eq:thm:6:1} 
&L(W_T, B_T) -L^*_{H}
\\ \nonumber &\le(L(W_0,B_0) -L^*_{H})
 e^{- 4\lambda_T^{(H)} \sigma^2_{\min}(X(S^{H})_{*\Ical})T}, 
\end{align}
where $\lambda_T^{(H)}$ is the smallest eigenvalue $\lambda_T^{(H)}:= \inf_{t \in[0,T]}  \lambda_{\min}((\bB^{(1:H)}_t)\T \bB^{(1:H)}_t)$ and $\bB^{(1:l)}:=B_{(l)}B_{(l-1)} \cdots B_{(1)}$ for any $l \in \{0,\dots, H\}$ with $\bB^{(1:0)}:=I$. 
\end{theorem}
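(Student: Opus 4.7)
The plan is to establish a Polyak--\L{}ojasiewicz-style inequality using only the $W$-gradient and then integrate via Grönwall. First, unrolling $X_{(l)} = B_{(l)}X_{(l-1)}S$ gives $X_{(H)} = \bB^{(1:H)} X S^{H}$, so $L(W,B) = \|W\bB^{(1:H)}\tilde X - Y\|_{F}^{2}$ with $\tilde X := X(S^{H})_{*\Ical}$. The effective linear map $\Theta := W\bB^{(1:H)}$ ranges over all of $\R^{m_y\times m_x}$ whenever $\bB^{(1:H)}$ has full column rank---which is forced at every $t\in[0,T]$ when $\lambda_T^{(H)}>0$, the only regime in which the bound is non-vacuous---so $L_{H}^{*}$ equals the unconstrained least-squares minimum $\min_{\Theta}\|\Theta\tilde X-Y\|_F^2$. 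Fix any OLS minimizer $\Theta^{*}$; its residual $r^{*} := \Theta^{*}\tilde X - Y$ is unique and satisfies the normal equation $r^{*}\tilde X\T = 0$. Writing $\Delta := W\bB^{(1:H)} - \Theta^{*}$ and $r := r^{*}+\Delta\tilde X$, Pythagoras gives $L - L_{H}^{*} = \|\Delta\tilde X\|_F^2$, and the normal equation gives $r\tilde X\T = \Delta\tilde X\tilde X\T$.

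Second, under gradient flow $\tfrac{dL}{dt} = -\|\partial_W L\|_F^2 - \sum_{l=1}^H\|\partial_{B_{(l)}}L\|_F^2 \le -\|\partial_W L\|_F^2$, so it suffices to lower bound the $W$-gradient. From $\partial_W L = 2\,r\tilde X\T(\bB^{(1:H)})\T$ and cyclic permutation of the trace,
\[
\tfrac{1}{4}\|\partial_W L\|_F^2 \,=\, \mathrm{tr}\!\bigl(M_t\,(\tilde X r\T)(\tilde X r\T)\T\bigr) \,\ge\, \lambda_{\min}(M_t)\,\|r\tilde X\T\|_F^2,
\]
where $M_t := (\bB^{(1:H)}_t)\T \bB^{(1:H)}_t$, using $\mathrm{tr}(AB)\ge\lambda_{\min}(A)\mathrm{tr}(B)$ for PSD $A,B$. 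Next, set $Q := \tilde X\tilde X\T \succeq 0$: its nonzero eigenvalues are the $\sigma_i^2(\tilde X) \ge \sigma_{\min}^2(\tilde X)$, and $Q$ and $Q^2$ share the same kernel, so $Q^2 \succeq \sigma_{\min}^2(\tilde X)\,Q$. Substituting $r\tilde X\T = \Delta Q$ yields
\[
\|r\tilde X\T\|_F^2 \,=\, \mathrm{tr}(\Delta Q^2\Delta\T) \,\ge\, \sigma_{\min}^2(\tilde X)\,\mathrm{tr}(\Delta Q \Delta\T) \,=\, \sigma_{\min}^2(\tilde X)\,(L - L_H^{*}).
\]
Chaining gives $\tfrac{d}{dt}(L-L_H^{*}) \le -4\lambda_{\min}(M_t)\,\sigma_{\min}^2(\tilde X)\,(L-L_H^{*})$, and Grönwall combined with $\int_0^T\lambda_{\min}(M_s)\,ds \ge T\,\lambda_T^{(H)}$ delivers the stated exponential bound.

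The main obstacle lies in the last step: because $r$ enters the gradient only through $r\tilde X\T$ and $\tilde X$ may be rank deficient, one cannot directly use $\tilde X\T\tilde X \succeq \sigma_{\min}^2(\tilde X)\,I$. The normal-equation identity $r\tilde X\T = \Delta\tilde X\tilde X\T$---which also absorbs the orthogonal $r^{*}$ component for free---combined with the spectral identity $Q^2 \succeq \sigma_{\min}^2(\tilde X)\,Q$ is what makes $\sigma_{\min}^2(\tilde X)$ appear cleanly without a condition-number factor, producing exactly the two factors appearing in the theorem. Everything else (chain rule, trace manipulations, Grönwall) is mechanical.
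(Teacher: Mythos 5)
Your proof is correct, and its skeleton matches the paper's: both arguments discard the (nonnegative) contribution of the $B$-gradients, retain only $\|\partial_W L\|_F^2$ --- which in the paper appears in disguise as $\vect[\nabla_{(H)}L]\T F_{(H)}\vect[\nabla_{(H)}L]$ with $F_{(H)}=(\bB^{(1:H)})\T\bB^{(1:H)}\otimes I_{m_y}$, reached via the induced dynamics of $Z_{(H)}=W\bB^{(1:H)}$ --- then extract the factors $\lambda_{\min}((\bB^{(1:H)}_t)\T\bB^{(1:H)}_t)$ and $\sigma^2_{\min}(X(S^H)_{*\Ical})$ and integrate. Where you genuinely diverge is in the key inequality $\|r\tilde X\T\|_F^2\ge\sigma^2_{\min}(\tilde X)(L-L^*_H)$ with $\tilde X=X(S^H)_{*\Ical}$: the paper vectorizes, decomposes $\vect[\hat Y-Y]$ against the orthogonal projector onto the column space of $\tilde X\T\otimes I_{m_y}$, and identifies $L^*_H$ with the squared residual of that projection, whereas you work at the matrix level with an OLS minimizer, the normal equation $r^*\tilde X\T=0$, Pythagoras, and $Q^2\succeq\sigma^2_{\min}(\tilde X)Q$ for $Q=\tilde X\tilde X\T$. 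Your route is more elementary, avoids the Kronecker bookkeeping, makes explicit the identification of $L^*_H$ with the unconstrained least-squares optimum (which the paper asserts without comment), and sidesteps an intermediate step in the paper's chain that is shaky as written (the passage from $(\|a\|-\|b\|)^2$ to $\|a\|^2-\|b\|^2$ fails when $\|a\|\ge\|b\|$; the paper's conclusion survives only because the orthogonal component of $\vect[\hat Y-Y]$ equals $-(I-P)\vect[Y]$ exactly). Two small points to tighten: the case $\lambda_T^{(H)}=0$ is not vacuous but reduces to $L(W_T,B_T)\le L(W_0,B_0)$, which you should dispatch explicitly via the monotonicity $\frac{d}{dt}L\le 0$ you already have; and your full-column-rank justification of $L^*_H=\min_{\Theta}\|\Theta\tilde X-Y\|_F^2$ only needs $\bB^{(1:H)}_0$ to have rank $m_x$ at initialization (implied by $\lambda_T^{(H)}>0$), since the infimum defining $L^*_H$ ranges over all parameters rather than over the trajectory.
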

\begin{proof} (Sketch) We decompose the gradient dynamics into three components: the graph interaction,  non-convex factors, and convex factors. We then bound the effects of the graph interaction and  non-convex factors  through  $\sigma^2_{\min}(X(S^{H})_{*\Ical})$ and $\lambda_{\min}((\bB^{(1:H)}_t)\T \bB^{(1:H)}_t)$ respectively. The complete proof is in Appendix~\ref{sec:proof_thm:6}.
\end{proof}
Theorem \ref{thm:6} implies that  convergence to a global minimum at a linear rate is guaranteed if $\sigma^2_{\min}(X(S^{H})_{*\Ical})>0$ and $\lambda_T > 0$. The first condition on  
the product of $X$ and $S^{H}$ indexed by $\Ical$ only depends on the node features $X$ and the GNN aggregation matrix $S$.  It is satisfied if $\rank(X(S^{H})_{*\Ical})=\min(m_{x}, \bn)$, because $\sigma_{\min}(X(S^{H})_{*\Ical})$ is  the  $\min(m_{x}, \bn)$-th largest singular value of $X(S^{H})_{*\Ical} \in \RR^{m_x \times \bn}$.  The second condition $\lambda_T^{(H)}>0$ is  time-dependent and requires a more careful treatment.
Linear convergence is implied as long as $\lambda_{\min}((\bB^{(1:H)}_t)\T \bB^{(1:H)}_t) \geq \epsilon > 0$ for all times $t$  before stopping.

\textbf{Empirical validation of conditions.} We verify both the  graph condition $\sigma^2_{\min}(X(S^{H})_{*\Ical})>0$ and the time-dependent condition $\lambda_T^{(H)} > 0$ for (discretized) $T>0$. First, on the popular graph datasets, Cora and Citeseer~\cite{sen2008collective}, and the GNN models, GCN~\cite{kipf2016semi} and GIN~\cite{xu2018how}, we have $\sigma^2_{\min}(X(S^{H})_{*\Ical})>0$ (Figure~\ref{fig:c11}). Second, we train linear GCN and GIN on Cora and Citeseer to plot an example of how the $\lambda_T^{(H)} = \inf_{t \in[0,T]}  \lambda_{\min}((\bB^{(1:H)}_t)\T \bB^{(1:H)}_t)$ changes with respect to time $T$ (Figure~\ref{fig:c12}). We further confirm that $\lambda_T^{(H)}>0$  until convergence, $\lim_{T \rightarrow \infty} \lambda_T^{(H)} > 0$ across different settings, e.g., datasets, depths, models (Figure~\ref{fig:c13}). Our experiments use the squared loss, random initialization, learning rate 1e-4, and set the hidden dimension to the input dimension (note that Theorem~\ref{thm:6} assumes the hidden dimension is at least the input dimension). Further experimental details are in Appendix~\ref{sec:experiments}. Along with Theorem~\ref{thm:6}, we conclude that linear GNNs converge linearly to a global minimum. Empirically, we indeed see both linear and ReLU  GNNs converging  at the same linear rate to nearly zero training loss in node classification tasks (Figure~\ref{fig:intro}). 

\textbf{Guarantee via initialization.}  Besides  the empirical verification, we theoretically show that a \textit{good initialization} guarantees the time-dependent condition $\lambda_T>0$ for any $T > 0$.  Indeed, like other neural networks, GNNs do not converge to a global optimum with certain initializations: e.g., initializing all weights to zero leads to zero gradients and $\lambda_T^{(H)}=0$ for all $T$, and hence no learning. We introduce a notion of \textit{singular margin} and say an initialization is good if it has a positive singular margin. Intuitively, a good initialization starts with an already small loss.

\begin{definition} \label{def:4}
(Singular margin). The initialization $(W_0,B_0)$ is said to have 
singular margin $\gamma>0$ with respect to a layer $l \in \{1,\dots, H\}$ if $\sigma_{\min}(B_{(l)}B_{(l-1)} \cdots B_{(1)}) \ge \gamma$ for all $(W,B)$ such that $L(W,B)\le  L(W_0, B_0)$.
\end{definition}

Proposition~\ref{prop:2} then states that an initialization with positive singular margin $\gamma$ guarantees  $\lambda_T^{(H)} \ge \gamma^2 > 0$ for all $T$:
\begin{proposition} \label{prop:2}
Let $f$ be a linear GNN with $H$ layers and $\ell(q,Y)=\|q-Y\|^{2}_F$. If the initialization $(W_0,B_0)$ has 
singular margin $\gamma>0$ with respect to the layer $H$ and $m_{H}\ge m_{x}$, then  $\lambda_T^{(H)} \ge \gamma^{2}$ for all $T \in [0, \infty)$.
\end{proposition}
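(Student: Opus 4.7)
The plan is to reduce the claim to two ingredients: monotonicity of the loss under gradient flow, and the fact that for a tall matrix the smallest eigenvalue of the Gram matrix equals the squared smallest singular value. The singular margin definition is already parameterized by the sublevel set $\{(W,B) : L(W,B) \le L(W_0,B_0)\}$, so once I know the trajectory stays in this sublevel set, the pointwise bound on $\sigma_{\min}(\bar B^{(1:H)}_t)$ is immediate and can be passed to $\lambda^{(H)}_T$ by a singular value identity.

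First I would verify that $L(W_t,B_t)\le L(W_0,B_0)$ for all $t \ge 0$. Since $L$ is a polynomial in $(W,B)$ (the squared Frobenius loss of a product of matrices), it is $C^\infty$, so along the gradient flow
\begin{equation*}
\frac{d}{dt} L(W_t,B_t) = -\Big\|\tfrac{\partial L}{\partial W}(W_t,B_t)\Big\|_F^{2}-\Big\|\tfrac{\partial L}{\partial B}(W_t,B_t)\Big\|_F^{2} \le 0,
\end{equation*}
which gives monotonicity for as long as the flow exists; this is the standard energy dissipation argument. Integrating yields $L(W_t,B_t)\le L(W_0,B_0)$ for every $t \in [0,\infty)$, which is precisely the hypothesis needed to invoke Definition~\ref{def:4} at the iterate $(W_t,B_t)$. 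Hence $\sigma_{\min}\bigl(\bar B^{(1:H)}_t\bigr)\ge \gamma$ for all $t \ge 0$.

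Second, I would convert the singular value bound into the eigenvalue bound that defines $\lambda_T^{(H)}$. The matrix $\bar B^{(1:H)}_t$ lives in $\mathbb{R}^{m_H\times m_x}$; under the assumption $m_H\ge m_x$ it is a tall (or square) matrix, so $(\bar B^{(1:H)}_t)^\T \bar B^{(1:H)}_t \in \mathbb{R}^{m_x\times m_x}$ and
\begin{equation*}
\lambda_{\min}\bigl((\bar B^{(1:H)}_t)^\T \bar B^{(1:H)}_t\bigr)=\sigma_{\min}\bigl(\bar B^{(1:H)}_t\bigr)^{2}\ge \gamma^{2}.
\end{equation*}
Taking the infimum over $t\in[0,T]$ gives $\lambda_T^{(H)}\ge \gamma^{2}$, as desired.

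I do not expect a serious obstacle; the only subtleties are (i) ensuring the flow is defined on all of $[0,\infty)$, which follows because the sublevel set of $L$ reached by the flow keeps the weights in a set where the squared singular value of $\bar B^{(1:H)}$ is bounded below by $\gamma^2$, preventing blow-up of the factors in an escape-to-infinity mode relevant here (and standard continuation arguments for polynomial gradient flows suffice), and (ii) the dimension condition $m_H\ge m_x$, which is essential: without it $\bar B^{(1:H)}_t$ is wide and its Gram matrix is singular, forcing $\lambda_T^{(H)}=0$ regardless of any singular margin. Both points are addressed by cleanly separating the monotonicity step from the dimensional identity, so the proof should be short.
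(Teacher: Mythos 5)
Your proposal is correct and follows essentially the same route as the paper: monotonicity of the loss along the gradient flow keeps the trajectory in the sublevel set where the singular margin applies, and the assumption $m_H\ge m_x$ converts $\sigma_{\min}(\bB^{(1:H)}_t)\ge\gamma$ into $\lambda_{\min}((\bB^{(1:H)}_t)\T\bB^{(1:H)}_t)\ge\gamma^2$. The only cosmetic difference is that you derive $\frac{d}{dt}L\le 0$ directly from the energy-dissipation identity of the gradient flow, whereas the paper cites the inequality already established in the proof of Theorem~\ref{thm:6}; these are the same fact.
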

Proposition~\ref{prop:2} follows since $L(W_t,B_t)$ is  non-increasing with respect to time $t$ (proof in Appendix~\ref{sec:proof_prop:2}).

Relating to previous works,  our singular margin is a generalized variant of the deficiency margin of linear feedforward networks~\citep[Definition 2 and Theorem 1]{arora2019convergence}: 
\begin{proposition} \label{prop:1}
(Informal) If initialization $(W_0,B_0)$ has 
deficiency margin $c>0$, then it has  singular margin $\gamma>0$.  \end{proposition}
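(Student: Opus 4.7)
The plan is to translate the deficiency margin of Arora et al. into the GNN setting and then combine it with loss monotonicity to obtain the singular margin. Recall that an initialization with end-to-end product $\bB_0^{(1:H)}$ has deficiency margin $c > 0$ relative to a target matrix $\Phi$ when
\begin{equation*}
\|\bB_0^{(1:H)} - \Phi\|_F \le \sigma_{\min}(\Phi) - c.
\end{equation*}
In the linear GNN loss $L(W,B) = \|W \bB^{(1:H)} X(S^{H})_{*\Ical} - Y\|_F^2$, the natural choice of target $\Phi$ is the end-to-end factor that, for the optimal $W$, minimizes the loss over $\bB^{(1:H)}$; under the graph condition $\sigma_{\min}(X(S^H)_{*\Ical}) > 0$ of Theorem~\ref{thm:6}, this minimizer exists and is unique (on the appropriate row-space).

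First, by Weyl's inequality the deficiency margin inequality immediately gives $\sigma_{\min}(\bB_0^{(1:H)}) \ge \sigma_{\min}(\Phi) - \|\bB_0^{(1:H)} - \Phi\|_F \ge c$, so the initial product already has singular values bounded away from zero. Second, and this is the crux, I would show that low loss values force the product $\bB^{(1:H)}$ to remain close to $\Phi$. Specifically, using the quadratic structure of the squared loss together with the graph condition, one obtains a bound of the form $\|\bB^{(1:H)} - \Phi\|_F^2 \le \kappa \cdot L(W,B)$ for a constant $\kappa$ depending on $W$ and on $\sigma_{\min}(X(S^H)_{*\Ical})$. Third, combining these two steps, any $(W,B)$ with $L(W,B) \le L(W_0,B_0)$ satisfies $\|\bB^{(1:H)} - \Phi\|_F \le \sqrt{\kappa \cdot L(W_0,B_0)}$; provided the initial deficiency margin $c$ is large enough to keep this quantity strictly below $\sigma_{\min}(\Phi)$, one more application of Weyl's inequality yields $\sigma_{\min}(\bB^{(1:H)}) \ge \gamma$ for some $\gamma > 0$, which is precisely the condition of Definition~\ref{def:4}.

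The main obstacle is the second step. In the pure matrix-factorization setting of Arora et al., the loss directly measures $\|M - \Phi\|_F$, but here the pre-factor $W$ is not part of the product $\bB^{(1:H)}$ and must be handled separately; a priori one cannot rule out sublevel-set points where $\|W\|$ blows up while $\bB^{(1:H)}$ shrinks, and the constant $\kappa$ above depends on $W$. Overcoming this likely requires either an additional balancedness assumption on $(W_0,B_0)$, the standard companion to the deficiency margin in the linear-network literature, ensuring $W$ and $\bB^{(1:H)}$ are simultaneously controlled, or a gradient-flow invariant that bounds $W$ throughout training. This coupling between $W$ and the product factors is presumably why Proposition~\ref{prop:1} is stated informally, with the formal version requiring those supplementary hypotheses before the reduction above can be carried out rigorously.
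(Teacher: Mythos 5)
There is a genuine gap, and it sits exactly where you flag it—but the repair you propose is not the one the paper uses, and it would not work here. The first problem is a misidentification of the object carrying the deficiency margin: in \citet{arora2019convergence} (and in the paper's reduction), the margin is a property of the \emph{end-to-end} matrix $W_0\bB^{(1:H)}_0$ relative to the target $\Phi$, not of the partial product $\bB^{(1:H)}_0=B_{(H),0}\cdots B_{(1),0}$. Indeed $\Phi\in\R^{m_y\times m_x}$ while $\bB^{(1:H)}\in\R^{m_H\times m_x}$, so the quantity $\|\bB^{(1:H)}_0-\Phi\|_F$ in your first step is not even well formed in general. The second, fatal problem is your key step~2: the loss only sees the product $W\bB^{(1:H)}$, so no inequality of the form $\|\bB^{(1:H)}-\Phi\|_F^2\le\kappa\,L(W,B)$ can hold—rescaling $W$ up and one of the $B_{(l)}$ down leaves the loss unchanged while moving $\bB^{(1:H)}$ arbitrarily along a ray, and even for fixed full-rank $W$ the loss would push $\bB^{(1:H)}$ toward $W^{+}\Phi$, not toward $\Phi$. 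You correctly sense that decoupling $W$ is the obstruction, but the fixes you suggest are unavailable: the proposition assumes no balancedness, and since Definition~\ref{def:4} quantifies over the whole sublevel set $\{(W,B): L(W,B)\le L(W_0,B_0)\}$ rather than over a gradient-flow trajectory, a conserved quantity along the flow cannot help.

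The paper's argument sidesteps quantitative control of $\bB^{(1:H)}$ entirely. It invokes Claim~1 of \citet{arora2019convergence}, which already delivers the sublevel-set statement you were trying to re-derive, but for the right matrix: every $(W,B)$ with $L(W,B)\le L(W_0,B_0)$ satisfies $\sigma_{\min}(WB_{(H)}\cdots B_{(1)})\ge c>0$. This forces $\rank(WB_{(H)}\cdots B_{(1)})=\min(m_y,m_x)=m_x$, and then the elementary inequality $\rank(MM')\le\min(\rank(M),\rank(M'))$ forces $m_H\ge m_x$ and $\rank(B_{(H)}\cdots B_{(1)})=m_x$ at every point of the sublevel set, hence $\sigma_{\min}(B_{(H)}\cdots B_{(1)})>0$ there. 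Since Definition~\ref{def:4} only asks for \emph{some} positive $\gamma$ (the proposition is purely qualitative), full rank of the partial product is all that is needed; no closeness of $\bB^{(1:H)}$ to $\Phi$, no Weyl perturbation, and no bound on $\|W\|$ enter the argument.
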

The formal version of Proposition~\ref{prop:1} is in Appendix~\ref{sec:complete}.

To summarize,  Theorem \ref{thm:6} along with Proposition \ref{prop:2} implies that we have a prior guarantee of  linear  convergence to a global minimum for any graph with $\rank(X(S^{H})_{*\Ical})=\min(m_{x}, \bn)$ and initialization   $(W_0,B_0)$ with singular margin $\gamma>0$: i.e., for any desired  $\epsilon>0$, we have that $L(W_T,B_T) - L^*_{H}\le \epsilon$ for any $T$ such that
\begin{align}
 \quad \text{  } T \ge  \frac{1}{4\gamma^{2}\sigma_{\min}^2(X(S^{H})_{*\Ical})} \log \frac{L(A_{0},B_{0}) -L^*_{H}}{\epsilon}.
\end{align}
While the margin condition theoretically guarantees linear convergence, empirically, we have already seen that the convergence conditions of across different training settings for widely used random initialization. 

Theorem~\ref{thm:6} suggests that the convergence rate depends on a combination of data features $X$, the GNN architecture and graph structure via $S$ and $H$,  the label distribution and initialization via $\lambda_T$. For example, GIN has better such constants than GCN on the Cora dataset with everything else held equal (Figure~\ref{fig:c11}). Indeed, in practice, GIN  converges faster than GCN on Cora (Figure~\ref{fig:intro}). In general, the computation and comparison of the rates given by Theorem~\ref{thm:6} requires computation such as those in Figure~\ref{fig:condition1}. In Section~\ref{sec:accelerate}, we will study an alternative way of comparing the  speed of training by directly comparing the gradient dynamics.

\begin{figure*}[!t]
    \centering
    \begin{subfigure}[b]{0.32\textwidth}
        \includegraphics[width=0.9\textwidth]{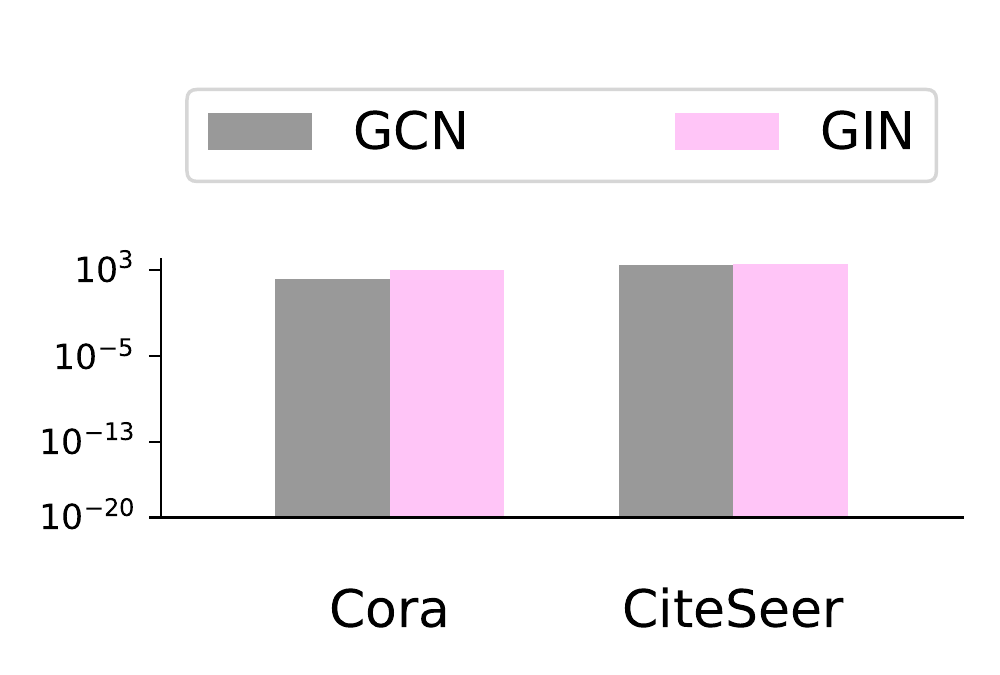} 
        \vspace{0.02in}
        \caption{Graph  $\sigma^2_{\min}((G_{H})_{*\Ical})$}
        \label{fig:c21}
    \end{subfigure}   \hspace{-0.03\textwidth}
      \centering
    \begin{subfigure}[b]{0.32\textwidth}
        \includegraphics[width=0.9\textwidth]{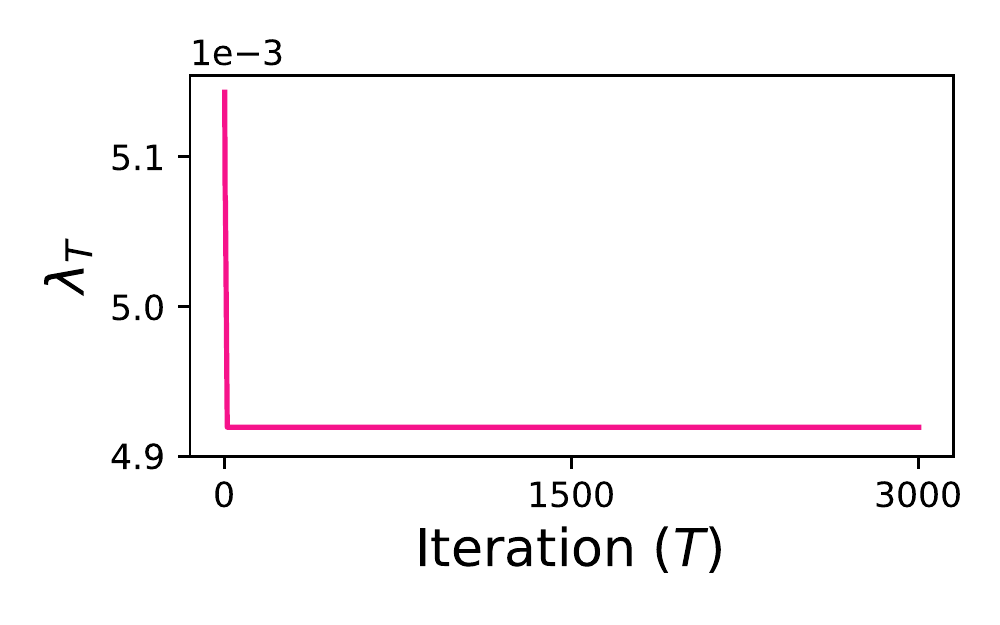} 
        \caption{Time-dependent $\lambda_T^{(1:H)}$}
        \label{fig:c22}
    \end{subfigure}  
      \centering
      \begin{subfigure}[b]{0.32\textwidth}
        \includegraphics[width=0.9\textwidth]{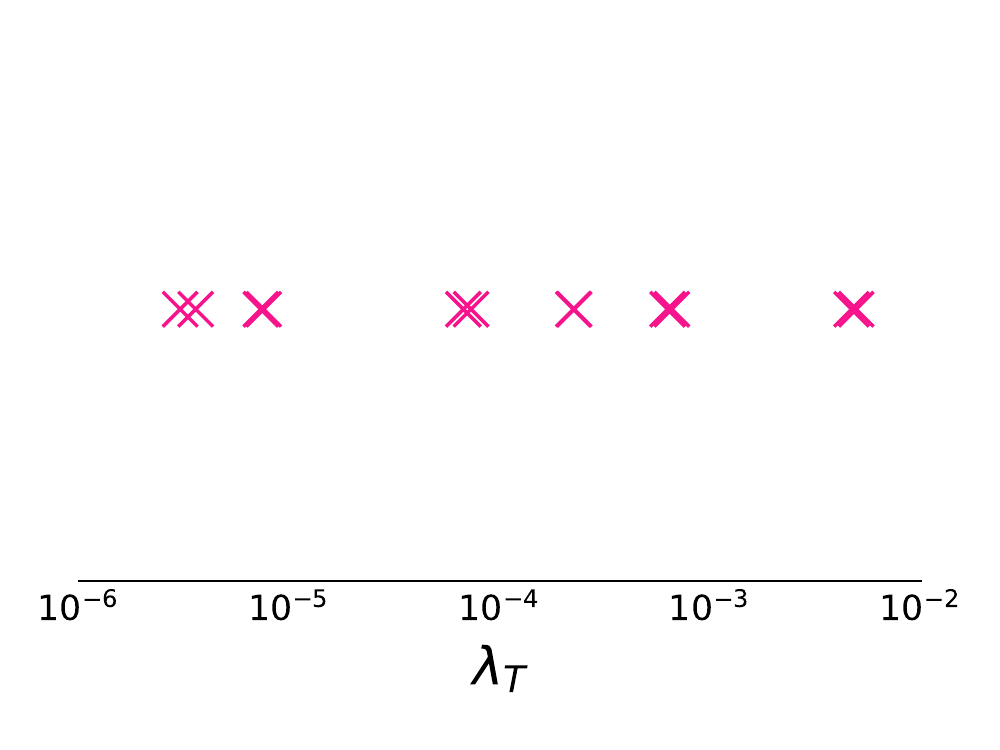} 
        \caption{$\lim_{T \rightarrow \infty}\lambda_T^{(1:H)}$ across training settings}
        \label{fig:c23}
    \end{subfigure}   
   \caption{\textbf{Empirical validation of assumptions for global convergence of multiscale linear GNNs.} Left panel confirms the graph condition $\sigma^2_{\min}((G_{H})_{*\Ical})>0$ for   Cora and Citeseer, and for  GCN  and GIN. Middle panel shows the time-dependent $\lambda_T^{(1:H)}$ for one training setting (multiscale linear GCN on Cora). Each point in right panel is $\lambda_T^{(1:H)} >0$ at the last  iteration for different training settings. }
\label{fig:condition2}
\end{figure*}

\subsection{Multiscale Linear GNNs}

Without skip connections, the GNNs under linearization still behave like linear feedforward networks with augmented graph features. With skip connections, the dynamics and analysis become much more intricate. The expressive power of multiscale linear GNNs changes significantly as depth increases. Moreover, the skip connections create complex interactions among different layers and graph structures of various scales in the optimization dynamics. Theorem~\ref{thm:1} states our convergence results for multiscale linear GNNs in three cases: (i) a general form; (ii) a weaker condition for boundary cases that uses $\lambda_T^{H'}$ instead of $\lambda_T^{1:H}$; 
(iii) a faster rate if we have  monotonic expressive power as depth increases. 

\begin{theorem} \label{thm:1}
Let $f$ be a multiscale linear GNN with $H$ layers and $\ell(q,Y)=\|q-Y\|^{2}_F$ where $q,Y \in \RR^{m_y \times \bn}$. Let $\lambda_T^{(1:H)}:=  \min_{0\le l \le H}\lambda_T^{(l)}$.   For any $T > 0$, the following hold:
\begin{enumerate}
\item[(i)] \emph{(General)}. Let $G_H :=  [X\T, (XS)\T, \dots, (XS^H)\T]\T$ $\in \RR^{(H+1)m_{x}  \times n}$. Then  
\begin{align}  \label{eq:7}
&L(W_T, B_T) -L^*_{1:H}
\\ \nonumber &\le(L(W_0,B_0) -L^*_{1:H})
 e^{- 4\lambda_T^{(1:H)} \sigma^2_{\min}((G_{H})_{*\Ical})T}. 
\end{align}

\item[(ii)] \emph{(Boundary cases)}. For any $H'  \in \{0,1,\dots, H\}$, 
\begin{align} \label{eq:6}
&L(W_T, B_T) -L^*_{H'}
\\ \nonumber &\le(L(W_0,B_0) -L^*_{H'})
 e^{- 4\lambda_T^{(H')} \sigma^2_{\min}(X(S^{H'})_{*\Ical})T}.
\end{align}

\item[(iii)] \emph{(Monotonic expressive power)}.
If there exist   $l,l' \in \{0,\dots, H\}$ with  $l< l'$ such that $L^*_{l} \ge L^*_{l+1} \ge \cdots \ge L^*_{l'}$  or    $L^*_{l} \le L^*_{l+1} \le \cdots \le L^*_{l'}$, then\begin{align} 
&L(W_T, B_T) -L^*_{l''}
\\ \nonumber &\le(L(W_0,B_0) -L^*_{l''})
 e^{- 4 \sum_{k=l}^{l'} \lambda_T^{(k)} \sigma^2_{\min}(X(S^{k})_{*\Ical})T},
\end{align}
where $l''=l$ if $L^*_{l} \ge L^*_{l+1} \ge \cdots \ge L^*_{l'}$, and $l''=l'$ if $L^*_{l} \le L^*_{l+1} \le \cdots \le L^*_{l'}$. 
\end{enumerate}
\end{theorem}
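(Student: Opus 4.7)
The strategy is a Polyak--\L{}ojasiewicz plus Gr\"onwall argument: for each case I would derive an inequality $\|\nabla L(W_t, B_t)\|_F^2 \ge c_T (L(W_t, B_t) - L^*)$ uniform in $t \in [0, T]$, where $L^*$ is the reference in the corresponding bound and $c_T$ is the exponent appearing in the stated rate. Plugging into the gradient-flow ODE $\dot L = -\|\nabla L\|_F^2$ and applying Gr\"onwall yields $L(T) - L^* \le (L_0 - L^*) e^{-c_T T}$; the regime $L(t) \le L^*$ is trivialized by monotonicity of $L$ together with $(L_0 - L^*) e^{-c_T T} \ge L_0 - L^*$ when the difference is non-positive, so the only work is in the regime $L > L^*$.

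The common starting computation is $\partial L/\partial W_{(l)} = 2 R (XS^l)_{*\Ical}^\top (\bB^{(1:l)})^\top$ with $R = f(X, W, B)_{*\Ical} - Y$, combined with the PSD trace inequality $\mathrm{tr}(MN) \ge \lambda_{\min}(N)\mathrm{tr}(M)$ to peel off the non-convex factor, giving $\|\partial L/\partial W_{(l)}\|_F^2 \ge 4 \lambda_T^{(l)} \|R (XS^l)_{*\Ical}^\top\|_F^2$. For (i), I would exploit the block identity that $(G_H)_{*\Ical}^\top$ is the horizontal concatenation of the $(XS^l)_{*\Ical}^\top$, so summing the peel-off over $l$ yields $\sum_{l=0}^H \|\partial L/\partial W_{(l)}\|_F^2 \ge 4 \lambda_T^{(1:H)} \|R (G_H)_{*\Ical}^\top\|_F^2$; then view $L$ as the convex quadratic $\tilde{W} \mapsto \|\tilde{W} (G_H)_{*\Ical} - Y\|_F^2$ in the stacked effective weight $\tilde{W} = [W_{(0)} \bB^{(1:0)}, \dots, W_{(H)} \bB^{(1:H)}]$ and apply the textbook convex PL inequality $\|R (G_H)_{*\Ical}^\top\|_F^2 \ge \sigma_{\min}^2((G_H)_{*\Ical})(L - L^*_{\mathrm{conv}})$, noting $L^*_{\mathrm{conv}} \le L^*_{1:H}$ since the $(W, B)$-parametrization realizes only a subset of the $\tilde{W}$'s.

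For (iii), I would extract a per-layer inequality $\|\partial L/\partial W_{(k)}\|_F^2 \ge 4 \lambda_T^{(k)} \sigma_{\min}^2(X(S^k)_{*\Ical})(L - L^*_{l''})$ for each $k \in \{l, \dots, l'\}$ using the technique developed for (ii), then sum over $k$. The monotonicity hypothesis $L^*_l \ge \cdots \ge L^*_{l'}$ (or the reverse) is precisely what ensures that the single reference $L^*_{l''}$ dominates every $L^*_k$ in the interval, so all per-layer PL bounds hold against the same reference, and the combined exponent $\sum_{k=l}^{l'} 4 \lambda_T^{(k)} \sigma_{\min}^2(X(S^k)_{*\Ical})$ falls out of the sum.

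The main obstacle is (ii). Keeping only the $W_{(H')}$-gradient gives $\|\partial L/\partial W_{(H')}\|_F^2 \ge 4 \lambda_T^{(H')} \|R A^\top\|_F^2$ with $A := (XS^{H'})_{*\Ical}$, and convex PL on the single slice $\tilde{W}_{(H')} \mapsto \|\tilde{W}_{(H')} A - (Y - \Delta)\|_F^2$, with $\Delta := \sum_{l \ne H'} \tilde{W}_{(l)} (XS^l)_{*\Ical}$, yields $\|R A^\top\|_F^2 \ge \sigma_{\min}^2(A)(L - \hat{L}^*)$. The trouble is that the reference $\hat{L}^* = \|(Y - \Delta) P^\perp\|_F^2$, with $P^\perp$ the projector orthogonal to $\mathrm{row}(A)$, can strictly exceed $L^*_{H'} = \|Y P^\perp\|_F^2$, so the reference produced by this naive route is wrong; a short two-node example confirms that the inequality $\|R A^\top\|_F^2 \ge \sigma_{\min}^2(A)(L - L^*_{H'})$ can fail at states where only the other gradient components carry the descent. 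The resolution I would pursue is to absorb the excess using the remaining gradient components: on the decomposition $R P^\perp = \Delta P^\perp - Y P^\perp$, show that whenever $\hat{L}^* > L^*_{H'}$ the shortfall is controlled by $\|\Delta P^\perp\|_F^2$, which is itself bounded by the norms of the $W_{(l)}$- and $B_{(l)}$-gradients with $l \ne H'$ via a peel-off of the type used for (i), thereby closing the PL inequality with the clean reference $L^*_{H'}$ against the full $\|\nabla L\|_F^2$.
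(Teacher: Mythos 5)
Your treatment of part (i) is correct and takes a genuinely lighter route than the paper's. The paper computes the exact induced dynamics of the end-to-end matrices $Z_{(l)}=W_{(l)}\bB^{(1:l)}$ and shows that the cross-scale interaction terms $J_{(i,l)}\T J_{(i,k)}$ reorganize into the perfect squares $\sum_{i}\bigl\|\sum_{l\ge i}J_{(i,l)}\vect[\nabla_{(l)}L]\bigr\|_2^2$ --- the ``cancellation'' advertised in the proof sketch --- before discarding them. You instead discard the $B$-gradients at the outset via $\|\nabla L\|_F^2\ge\sum_l\|\nabla_{W_{(l)}}L\|_F^2$, peel off $\lambda_T^{(l)}$ with the PSD trace inequality, and invoke the convex least-squares PL inequality for the stacked matrix $(G_H)_{*\Ical}$. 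This yields the same exponent with much less machinery, and your observation that the convex reference only needs to lower-bound $L^*_{1:H}$ is cleaner than the paper's implicit identification of the constrained and unconstrained minima.

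For parts (ii) and (iii), the obstacle you flag is not a defect of your approach: it is a genuine gap, and the paper's own proof commits exactly the error you describe. The appendix bounds $\vect[\hat Y-Y]\T[\tilde G_{H'}\T\tilde G_{H'}\otimes I_{m_y}]\vect[\hat Y-Y]$ below by $\sigma^2_{\min}(\tilde G_{H'})(L-L^*_{H'})$ by rewriting $\|\Pb\vect[\hat Y]-\Pb\vect[Y]\|_2$ as $\|\vect[\hat Y]-\Pb\vect[Y]\pm\vect[Y]\|_2$, which silently assumes $\Pb\vect[\hat Y]=\vect[\hat Y]$; this holds for the non-multiscale network, whose output lies in the row space of $X(S^{H'})_{*\Ical}$, but not for the multiscale one, whose output carries components from every scale. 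Your proposed repair --- absorbing the excess $\hat Y(I-P)$ through the gradients of the other layers --- cannot close the gap either, because the stated bound itself fails: take $m_x=m_y=1$, $n=\bn=2$, $X=[1,0]$, $S=\left(\begin{smallmatrix}0&1\\1&0\end{smallmatrix}\right)$, $Y=[1,0]$, $H=H'=1$, and initialize $W_{(0)}=-1$, $W_{(1)}=0$, $B_{(1)}=2$. Then $W_{(1)}$ and $B_{(1)}$ are stationary under gradient flow, $\lambda_T^{(1)}=4$, $\sigma^2_{\min}(X(S)_{*\Ical})=1$, $L^*_1=1$, and $L_t=4e^{-4t}$, so at $T=0.1$ the left side of \eqref{eq:6} is about $1.68$ while the right side is $3e^{-1.6}\approx 0.61$; the same trajectory violates part (iii) with $l=0$, $l'=l''=1$. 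Consequently the per-layer PL inequality you plan to reuse in (iii) is unavailable, and (ii)--(iii) would require either a weaker conclusion or an additional hypothesis forcing the network output into the relevant row space. In short: your part (i) is sound and simpler than the paper's argument; for parts (ii) and (iii) you have correctly located the step that fails, but neither your sketched absorption argument nor the paper's proof resolves it.
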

\begin{proof} (Sketch)  A key observation in our proof is that the  interactions of different scales  cancel out to point towards a specific direction in the gradient dynamics induced in a space of the loss value. The complete proof is in Appendix~\ref{sec:proof_thm:1}.
\end{proof}

Similar to Theorem \ref{thm:6} for linear GNNs, the most general form (i) of Theorem \ref{thm:1} implies that convergence to the global minimum value of the \textit{entire} multiscale linear GNN $L_{1:H}^*$ at linear rate is guaranteed when $\sigma^2_{\min}((G_{H})_{*\Ical})>0$ and $\lambda_T^{(1:H)}>0$. The graph condition  $\sigma^2_{\min}((G_{H})_{*\Ical})>0$ is satisfied if $\rank((G_{H})_{*\Ical})=\min(m_{x}(H+1), \bn)$. The time-dependent condition $\lambda_T^{(1:H)}>0$ is guaranteed if the initialization $(W_0,B_0)$ has singular margin $\gamma>0$ with respect to \textit{every} layer (Proposition~\ref{prop:3} is proved in Appendix~\ref{sec:proof_prop:3}): 
\begin{proposition} \label{prop:3}
Let $f$ be a multiscale linear GNN and $\ell(q,Y)=\|q-Y\|^{2}_F$.  If the initialization $(W_0,B_0)$ has 
singular margin $\gamma>0$ with respect to every  layer $l \in [H]$ and $m_l\ge m_x$ for  $l\in[H]$, then  $\lambda_T^{(1:H)}\ge \gamma^{2}$ for all $T \in [0, \infty)$.
\end{proposition}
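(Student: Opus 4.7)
The plan is to mirror the short argument used for Proposition~\ref{prop:2}, but now apply the singular-margin bound simultaneously to every layer and then take the minimum. Two ingredients drive the proof: (a) gradient flow is a descent flow on $L$, and (b) the singular-margin hypothesis is, by its very definition, a quantitative lower bound that holds uniformly throughout the entire sublevel set $\{(W,B) : L(W,B) \le L(W_0,B_0)\}$. Crucially, neither ingredient exploits the intricate skip-connection structure that makes Theorem~\ref{thm:1} itself delicate, so Proposition~\ref{prop:3} reduces to bookkeeping once the two ingredients are in place.

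Concretely, I would first record that $\tfrac{d}{dt} L(W_t,B_t) = -\|\nabla L(W_t,B_t)\|_F^{2} \le 0$ along the gradient flow, so $L(W_t,B_t) \le L(W_0,B_0)$ for every $t \ge 0$. Next, fix any $l \in [H]$ and $t \in [0,T]$. The descent property places $(W_t,B_t)$ in the sublevel set appearing in Definition~\ref{def:4}, so the singular-margin assumption with respect to layer $l$ yields $\sigma_{\min}(\bar B_t^{(1:l)}) \ge \gamma$. Since $\bar B_t^{(1:l)} \in \RR^{m_l \times m_x}$ with $m_l \ge m_x$, the matrix $(\bar B_t^{(1:l)})^\top \bar B_t^{(1:l)} \in \RR^{m_x \times m_x}$ has eigenvalues equal to the squared singular values of $\bar B_t^{(1:l)}$, giving $\lambda_{\min}((\bar B_t^{(1:l)})^\top \bar B_t^{(1:l)}) = \sigma_{\min}(\bar B_t^{(1:l)})^{2} \ge \gamma^{2}$. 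Taking the infimum over $t\in[0,T]$ then delivers $\lambda_T^{(l)} \ge \gamma^{2}$ for every $l \in [H]$; for $l=0$ the convention $\bar B^{(1:0)} = I$ gives $\lambda_T^{(0)}=1$, which trivially exceeds $\gamma^{2}$ in the regime of interest (any nontrivial margin satisfies $\gamma\le 1$, since even at initialization $\sigma_{\min}$ of a product of factors cannot exceed $1$ under standard normalizations). Taking the minimum over $l \in \{0,1,\dots,H\}$ completes the bound $\lambda_T^{(1:H)} \ge \gamma^{2}$.

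The only step requiring care is the dimension bookkeeping converting $\sigma_{\min}$ into $\lambda_{\min}$: without $m_l \ge m_x$, the product $\bar B_t^{(1:l)}$ would be rank-deficient in its $m_x$-dimensional domain and $\lambda_{\min}((\bar B_t^{(1:l)})^\top \bar B_t^{(1:l)})$ would vanish regardless of the singular-value bound. This is precisely why the hypothesis $m_l \ge m_x$ for all $l\in[H]$ is imposed, and once it is in place the identification $\lambda_{\min}=\sigma_{\min}^{2}$ is immediate. I therefore expect no genuine obstacle: the heavy lifting has been absorbed into the definition of singular margin, and the proof proper is a one-paragraph application of descent plus a uniform algebraic bound.
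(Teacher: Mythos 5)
Your proof is correct and follows essentially the same route as the paper's: the descent property of gradient flow keeps $(W_t,B_t)$ in the sublevel set $\{L\le L(W_0,B_0)\}$ where the singular margin applies to every layer, and the hypothesis $m_l\ge m_x$ converts $\sigma_{\min}(\bar B^{(1:l)}_t)\ge\gamma$ into $\lambda_{\min}((\bar B^{(1:l)}_t)^{\top}\bar B^{(1:l)}_t)\ge\gamma^{2}$ before taking the infimum over $t$ and the minimum over $l$. Your aside on the $l=0$ term (where $\lambda_T^{(0)}=1$ enters the minimum defining $\lambda_T^{(1:H)}$, so one implicitly needs $\gamma\le 1$) is in fact slightly more careful than the paper, which passes over this boundary case in silence.
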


We demonstrate that the conditions of Theorem \ref{thm:1} (i) hold for real-world datasets, suggesting in practice multiscale linear GNNs  converge linearly to a global minimum. 

\textbf{Empirical validation of conditions.} On datasets Cora and Citeseer and for GNN models GCN and GIN, we confirm that $\sigma^2_{\min}((G_H)_{*\Ical})>0$ (Figure~\ref{fig:c21}). Moreover, we train multiscale linear GCN and GIN on Cora and Citeseer to plot an example of how the $\lambda_T^{(1:H)}$ changes with respect to time $T$ (Figure~\ref{fig:c22}), and we confirm that at convergence, $\lambda_T^{(1:H)} > 0$ across different settings (Figure~\ref{fig:c23}). Experimental details are in Appendix~\ref{sec:experiments}. 

\textbf{Boundary cases.} Because the global minimum value of  multiscale linear GNNs $L_{1:H}^*$ can be smaller than that of  linear GNNs $L_{H}^*$, 
the  conditions in Theorem \ref{thm:1}(i) may sometimes be stricter than those of Theorem~\ref{thm:6}. For example, in Theorem \ref{thm:1}(i), we require $ \lambda_T^{(1:H)}:= \min_{0\le l \le H}\lambda_T^{(l)} $ rather than $\lambda_T^{(H)}$ to be positive. If $\lambda_T^{(l)} = 0$ for some $l$, then Theorem \ref{thm:1}(i) will not guarantee convergence to $L_{1:H}^*$.

Although the boundary cases above did not occur on the tested real-world graphs (Figure~\ref{fig:condition2}), for theoretical interest, Theorem~\ref{thm:1}(ii) guarantees that in such cases, multiscale linear GNNs still converge to a value no worse than the global minimum value of  \textit{non-multiscale} linear GNNs. For any intermediate layer $H^{\prime}$, assuming $\sigma^2_{\min}(X(S^{H'})_{*\Ical}) > 0$ and $\lambda_T^{(H')} > 0$, Theorem~\ref{thm:1}(ii)  bounds the  loss  of the multiscale linear GNN $L(W_T, B_T)$ at convergence by the global minimum value  $L^*_{H'}$ of the corresponding linear GNN with $H^{\prime}$ layers.

\textbf{Faster rate under monotonic expressive power.} Theorem \ref{thm:1}(iii) considers a special case that is likely in real graphs: the global minimum value of the non-multiscale linear GNN  $L^*_{H'}$  is \textit{monotonic} as $H'$ increases. Then (iii) gives a \textit{faster rate} than (ii) and linear GNNs. For example, if the globally optimal value decreases as linear GNNs  get deeper. i.e., $L^*_{0} \ge L^*_{1} \ge \cdots \ge L^*_{H}$, or vice versa, $L^*_{0} \le L^*_{1} \le \cdots \le L^*_{H}$, then Theorem \ref{thm:1} (i) implies that 
\begin{align}
& L(W_T, B_T) -L^*_{l}
\\ \nonumber & \le(L(W_0,B_0) -L^*_{l})
 e^{- 4 \sum_{k=0}^{H} \lambda_T^{(k)} \sigma^2_{\min}(X(S^{k})_{*\Ical})T},
\end{align}
where $l=0$ if $L^*_{0} \ge L^*_{1} \ge \cdots \ge L^*_{H}$, and $l=H$ if $L^*_{0} \le L^*_{1} \le \cdots \le L^*_{H}$. 
Moreover, if the globally optimal value does not change with respect to the depth as $L^*_{1:H}=L^*_{1} = L^*_{2} = \cdots = L^*_{H}$, then we have
\begin{align}
& L(W_T, B_T) -L^*_{1:H}
\\ \nonumber &  \le(L(W_0,B_0) -L^*_{1:H})
 e^{- 4 \sum_{k=0}^{H} \lambda_T^{(k)} \sigma^2_{\min}(X(S^{k})_{*\Ical})T}.
\end{align}
We obtain a faster rate for multiscale linear GNNs than for linear GNNs, as
$ e^{-4 \sum_{k=0}^{H} \lambda_T^{(k)} \sigma^2_{\min}(X(S^{k})_{*\Ical})T} \le   e^{-4  \lambda_T^{(H)} \sigma^2_{\min}(X(S^{H})_{*\Ical})T}$. Interestingly, unlike linear GNNs, multiscale linear GNNs in this case do not require any condition on initialization to obtain a prior guarantee on global convergence since $
 e^{- 4 \sum_{k=0}^{H} \lambda_T^{(k)} \sigma^2_{\min}(X(S^{k})_{*\Ical})T} \le
 e^{- 4 \lambda_T^{(0)} \sigma^2_{\min}(X(S^{0})_{*\Ical})T}$ with $\lambda_T^{(0)} =1$ and $X(S^{0})_{*\Ical}=X_{*\Ical}$.

To summarize, we prove global convergence rates for multiscale linear GNNs (Thm.~\ref{thm:1}(i)) and experimentally validate the conditions. Part (ii) addresses boundary cases where the conditions of Part (i) do not hold.  Part (iii) gives faster rates assuming monotonic expressive power with respect to depth. So far, we have shown multiscale linear GNNs converge faster than linear GNNs in the case of (iii). Next, we compare the training speed for more general cases.

\section{Implicit Acceleration}
\label{sec:accelerate}

\begin{figure*}[!t]
    \centering
    \begin{subfigure}[b]{0.32\textwidth}
        \includegraphics[width=0.9\textwidth]{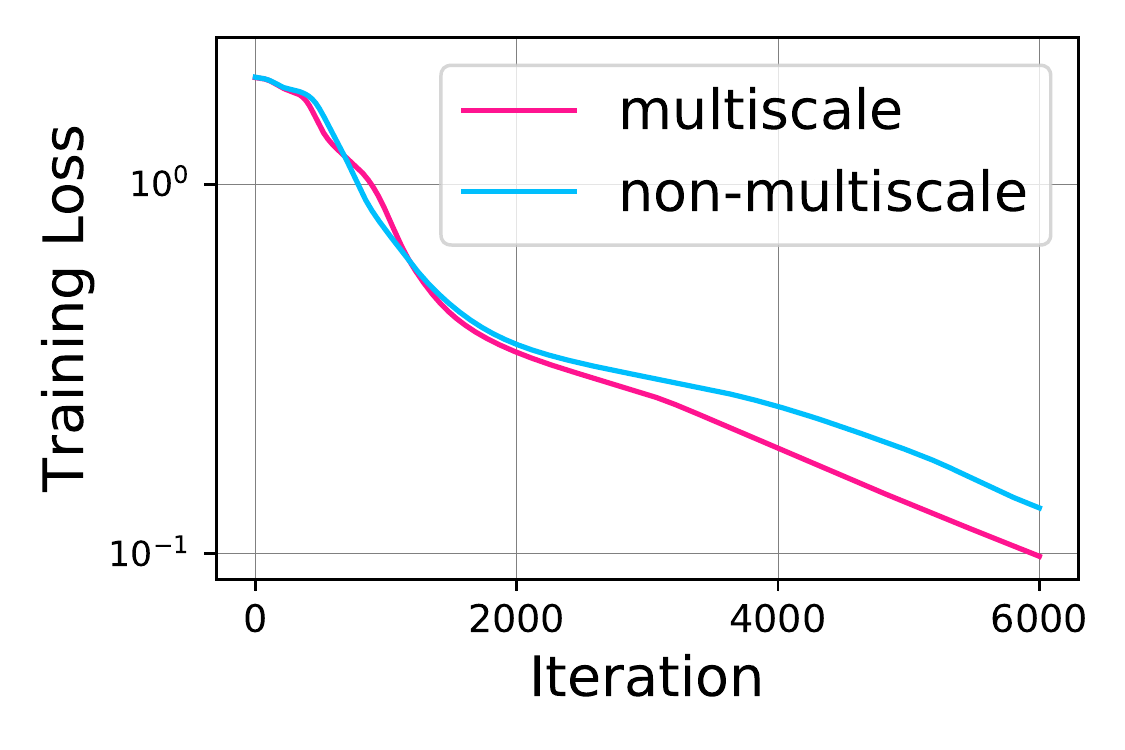} 
        \caption{Multiscale vs. non-multiscale.}
        \label{fig:c1}
    \end{subfigure}   
      \centering
    \begin{subfigure}[b]{0.32\textwidth}
        \includegraphics[width=0.9\textwidth]{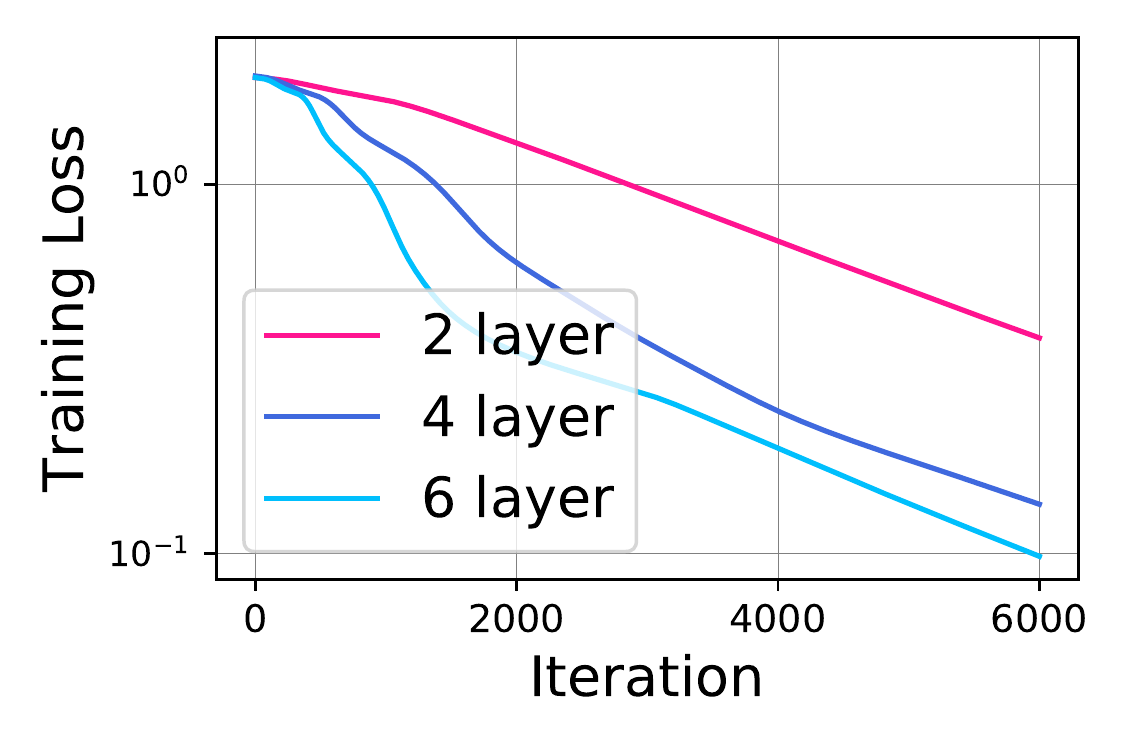} 
        \caption{Depth.}
        \label{fig:c2}
    \end{subfigure}  
      \centering
      \begin{subfigure}[b]{0.32\textwidth}
        \includegraphics[width=0.9\textwidth]{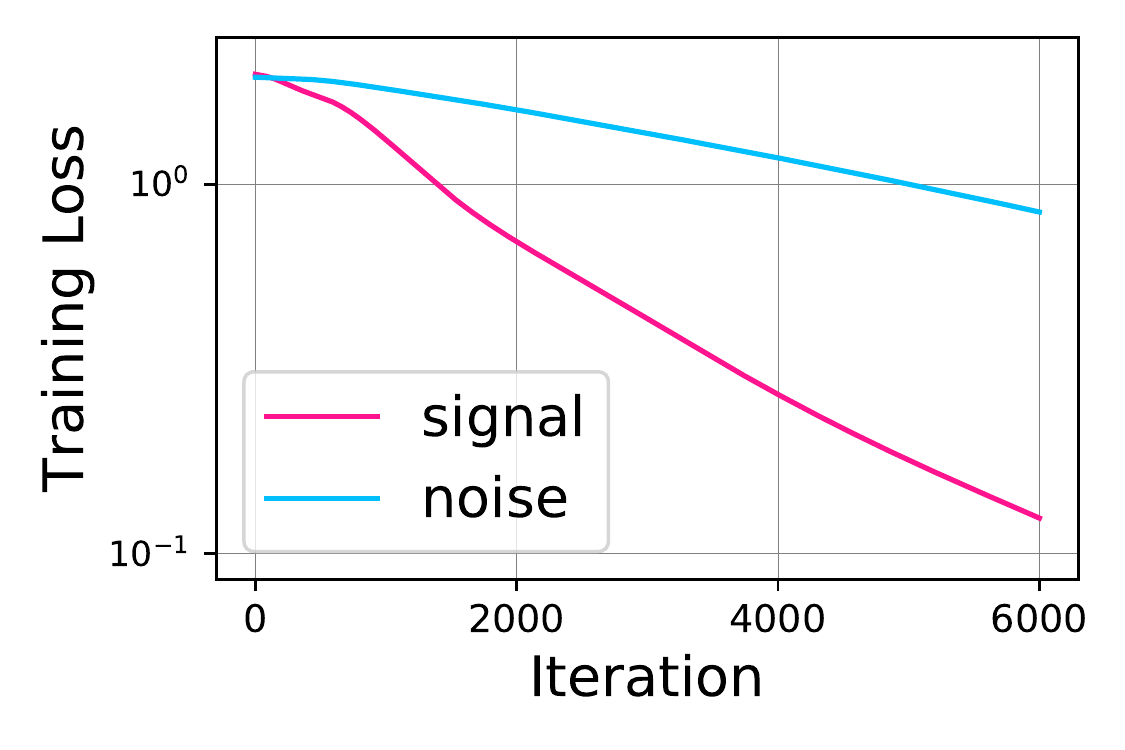}
        \caption{Signal vs. noise.}
        \label{fig:c3}
    \end{subfigure}   
   \caption{\textbf{Comparison of the training speed of GNNs.} Left: Multiscale GNNs train faster than non-multiscale GNNs. Middle: Deeper GNNs train faster. Right: GNNs train faster when the labels have signals instead of random noise. The patterns above hold for both ReLU and linear GNNs. Additional results are in Appendix~\ref{sec:results}. }
\label{fig:comparison}
\end{figure*}

In this section, we study how the  skip connections, depth of GNN, and label distribution may  affect the speed of training for GNNs. Similar to previous works~\citep{arora2018optimization}, we compare the training speed by comparing the per step loss reduction $\frac{d}{dt}  L(W_t, B_t)  $ for \textit{arbitrary} differentiable loss functions $\ell (\cdot, Y): \RR^{m_y} \rightarrow \RR$. Smaller $\frac{d}{dt}  L(W_t, B_t)   $ implies faster training. Loss reduction offers a complementary view to the  convergence rates in Section~\ref{sec:convergence}, since it is instant and not an upper bound.

We present an analytical form of the loss reduction  $\frac{d}{dt}  L(W_t, B_t)$ for linear GNNs and multiscale linear GNNs. The comparison of training speed  then follows from our formula for $\frac{d}{dt}  L(W_t, B_t)$. For better exposition, we first introduce several notations. We let $\bB^{(l':l)}=B_{(l)}B_{(l-1)} \cdots B_{(l')}$ for all $l'$ and $l$ where $\bB^{(l':l)}=I$ if $l'>l$. We also define
$$
J_{(i,l),t}:=[\bB^{(1:i-1)}_t\otimes (W_{(l),t} \bB^{(i+1:l)}_t)\T  ],
$$
$$
F_{(l),t}:=[(\bB^{(1:l)}_t)\T \bB^{(1:l)}_t \otimes I_{m_{y}}] \succeq 0,
$$ 
$$
V _{t}:=\frac{\partial L(W_t, B_t)}{\partial \hat Y_t}, 
$$
where $\hat Y_t :=f(X, W_{t}, B_{t})_{*\Ical}$.
For any vector $v \in \RR^m$ and positive semidefinite matrix $M\in\RR^{m\times m}$, we use $\|v\|_{M}^2 :=v\T Mv$.\footnote{We use this Mahalanobis norm notation for conciseness without assuming  it to be a norm, since $M$ may be low rank. }
Intuitively, $V_{t}$ represents the derivative of the loss $L(W_t, B_t)$ with respect to the model output $\hat Y=f(X, W_{t}, B_{t})_{*\Ical}$.  
$J_{(i,l),t}$ and $F_{(l),t}$ represent matrices  that describe how the errors are propagated through the weights of the networks. 

Theorem~\ref{thm:4}, proved in Appendix~\ref{sec:proof_thm:4}, gives an analytical formula of loss reduction for linear GNNs and multiscale linear GNNs.

\begin{theorem} \label{thm:4}
For any differentiable loss function  $q \mapsto \ell(q,Y)$, the following hold for any $H \ge 0$ and $t \ge 0$:
\begin{enumerate}[leftmargin=0.6cm]
\item[(i)] \emph{(Non-multiscale)} For $f$ as in Definition \ref{def:1}:
\begin{align} \label{eq:thm4:1} 
\hspace{-15pt} \frac{d}{dt}  L_1(W_t, B_t)  
 & =- \scalebox{0.95}{$\displaystyle \left\|\vect\left[V _{t}(X(S^{H})_{*\Ical})\T \right]\right\|_{F_{(H),t}}^{2}$} 
\\ \nonumber & \hspace{12pt} \scalebox{0.95}{$\displaystyle -  \sum_{i=1}^H  \left\|   J_{(i,H),t}\vect\left[V _{t}(X(S^{H})_{*\Ical})\T \right]\right\|_2^2.$}
\end{align}
\item[(ii)] \emph{(Multiscale)} For $f$ as in  Definition \ref{def:2}:
\begin{align} \label{eq:thm4:2} 
\hspace{-10pt} \frac{d}{dt} L_2(W_t, B_t) &= \scalebox{0.95}{$\displaystyle - \sum_{l=0}^H \left\|\vect\left[V _{t}(X(S^{l})_{*\Ical})\T \right]\right\|_{F_{(l),t}}^{2}$}
\\ \nonumber & \hspace{12pt} \scalebox{0.9}{$\displaystyle -  \sum_{i=1}^H  \left\| \sum_{l=i}^H   J_{(i,l),t}\vect\left[V_{t} (X(S^{l})_{*\Ical})\T \right]\right\|_2^{2}.$}
\end{align}
\end{enumerate}
\end{theorem}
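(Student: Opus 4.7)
The plan is to exploit the fact that under gradient flow
$\frac{d}{dt}L(W_t,B_t) = -\|\nabla_W L\|_F^2 - \sum_{i=1}^H \|\nabla_{B_{(i)}} L\|_F^2$,
so it suffices to compute each partial gradient and then repackage it, via the Kronecker identity $\vect(ABC) = (C^\T \otimes A)\vect(B)$, into the $F_{(l),t}$ and $J_{(i,l),t}$ form prescribed by the theorem.

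First I would unfold the recursion $X_{(l)} = B_{(l)}X_{(l-1)}S$ into the closed form $X_{(l)} = \bB^{(1:l)}XS^l$. In the multiscale case this gives $\hat Y_t = \sum_{l=0}^H W_{(l),t}\bB^{(1:l)}_t X(S^l)_{*\Ical}$, while in the non-multiscale case only the $l=H$ term survives with $W$ in place of $W_{(H)}$. Applying the chain rule with $V_t = \partial L/\partial \hat Y_t$ then yields
\[
\nabla_{W_{(l)}}L = V_t\,[X(S^l)_{*\Ical}]^\T\,[\bB^{(1:l)}_t]^\T
\]
and, because $B_{(i)}$ appears in every scale $l\ge i$,
\[
\nabla_{B_{(i)}}L = \sum_{l=i}^H [\bB^{(i+1:l)}_t]^\T W_{(l),t}^\T V_t\,[X(S^l)_{*\Ical}]^\T\,[\bB^{(1:i-1)}_t]^\T.
\]

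Second, I would vectorize both identities. Applying $\vect(ABC)=(C^\T\otimes A)\vect(B)$ to $\nabla_{W_{(l)}}L$ produces $\vect(\nabla_{W_{(l)}}L)=(\bB^{(1:l)}_t\otimes I_{m_y})\vect[V_t(X(S^l)_{*\Ical})^\T]$, whose squared $\ell_2$ norm is exactly $\|\vect[V_t(X(S^l)_{*\Ical})^\T]\|_{F_{(l),t}}^2$ by the definition of $F_{(l),t}$. Applying the same identity term-by-term inside the sum for $\nabla_{B_{(i)}}L$ gives $\vect(\nabla_{B_{(i)}}L) = \sum_{l=i}^H J_{(i,l),t}\vect[V_t(X(S^l)_{*\Ical})^\T]$, since $J_{(i,l),t}=\bB^{(1:i-1)}_t\otimes (W_{(l),t}\bB^{(i+1:l)}_t)^\T$ is precisely the Kronecker factor that appears. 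Summing $\|\nabla_{W_{(l)}}L\|_F^2$ over $l$ and $\|\nabla_{B_{(i)}}L\|_F^2$ over $i$ and negating yields \eqref{eq:thm4:2}; collapsing the $l$-sum to the single term $l=H$ (with $W$ replacing $W_{(H)}$) yields \eqref{eq:thm4:1}.

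The main obstacle is pure bookkeeping rather than analysis: keeping the factors $\bB^{(1:i-1)}$ and $\bB^{(i+1:l)}$ on the correct sides of $B_{(i)}$ during differentiation, verifying that the transpose lands on the intended Kronecker factor when vectorizing, and checking that restriction to $\Ical$ commutes with each step (it acts only by right-multiplication on $S^l$, so it is inert under differentiation in $W$ and $B$). Because both architectures are multilinear in the parameter at each layer, no dynamical or inequality argument is required; Theorem~\ref{thm:4} is an exact algebraic consequence of the gradient flow equation combined with the vec/Kronecker identity.
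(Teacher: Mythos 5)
Your proposal is correct, and it reaches \eqref{eq:thm4:1} and \eqref{eq:thm4:2} by a more direct route than the paper. You start from the elementary gradient-flow identity $\frac{d}{dt}L(W_t,B_t)=-\sum_{l}\|\nabla_{W_{(l)}}L\|_F^2-\sum_{i}\|\nabla_{B_{(i)}}L\|_F^2$, compute the parameter gradients from the closed form $\hat Y_t=\sum_{l}W_{(l),t}\bB^{(1:l)}_tX(S^l)_{*\Ical}$, and then check via $\vect(ABC)=(C\T\otimes A)\vect(B)$ that $\|\nabla_{W_{(l)}}L\|_F^2=\|\vect[V_t(X(S^l)_{*\Ical})\T]\|^2_{F_{(l),t}}$ and $\vect[\nabla_{B_{(i)}}L]=\sum_{l=i}^HJ_{(i,l),t}\vect[V_t(X(S^l)_{*\Ical})\T]$; both identities are right, and your gradient formulas match equations \eqref{eq:new:12}--\eqref{eq:new:13} of the paper. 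The paper instead obtains the same expression as a byproduct of the convergence proofs: it first derives the induced dynamics of the end-to-end matrices $Z_{(l)}=W_{(l)}\bB^{(1:l)}$ (expanding products of perturbed factors to first order in the step size), then applies the chain rule through $\vect[Z_{(l)}]$, which produces a double sum $\sum_{l=1}^H\sum_{i=1}^l\sum_{k=i}^H\vect[\nabla_{(l)}L]\T J_{(i,l)}\T J_{(i,k)}\vect[\nabla_{(k)}L]$ that must be regrouped into the squared norms $\|\sum_{l=i}^HJ_{(i,l)}\vect[\nabla_{(l)}L]\|_2^2$. Your approach avoids that intermediate construction and the regrouping entirely, at the cost of not producing the $Z_{(l)}$ dynamics that the paper reuses for Theorems~\ref{thm:6} and~\ref{thm:1}; as a standalone proof of Theorem~\ref{thm:4} it is cleaner and equally rigorous, including the observation that restriction to $\Ical$ is inert under differentiation in $(W,B)$.
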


In what follows, we apply Theorem~\ref{thm:4} to predict how different factors affect the training speed of GNNs.

\subsection{Acceleration with Skip Connections}

We first show that multiscale linear GNNs  tend to achieve faster loss reduction $\frac{d}{dt} L_2(W_t, B_t)$ compared to the corresponding linear GNN without skip connections, $\frac{d}{dt} L_1(W_t, B_t) $.  It follows from Theorem \ref{thm:4}  that
\begin{align}
\label{eq:multi}
&\frac{d}{dt} L_2(W_t, B_t) - \frac{d}{dt} L_1(W_t, B_t) 
\\ \nonumber & \le- \sum_{l=0}^{H-1} \left\|\vect\left[V_{t} (X(S^{l})_{*\Ical})\T \right]\right\|_{F_{(l),t}}^{2}, 
\end{align}
if $\sum_{i=1}^H (\|a_i \|_2^2 + 2 b_i\T a_i) \ge 0$,
where  $a_i=\sum_{l=i}^{H-1}   J_{(i,l),t}\vect[V_{t} (X(S^{l})_{*\Ical})\T ]$, and $b_i= J_{(i,H),t} \vect[ \allowbreak  V_{t}( X(S^{H})_{*\Ical})\T ]$. The assumption of $\sum_{i=1}^H (  \|a_i \|_2^2  + \allowbreak 2 b_i\T a_i) \allowbreak \ge \allowbreak  0$ is satisfied in various ways: for example, it is satisfied if the last layer's term $b_i$ and the other  layers' terms $a_i$ are aligned as $b_i\T a_i \ge 0$,  or if the last layer's term $b_i$ is dominated by the other  layers' terms $a_i$ as $2\| b_i\|_2 \le \|a_i\|_2$. Then equation \eqref{eq:multi} shows that the  multiscale linear GNN decreases the loss value with strictly many more negative terms, suggesting faster training. 

Empirically, we indeed observe that multiscale GNNs train faster (Figure~\ref{fig:c1}), both for (nonlinear)  ReLU and linear GNNs. We verify this by training multiscale and non-multiscale, ReLU and linear GCNs on the Cora and Citeseer datasets with   cross-entropy loss, learning rate 5e-5, and hidden dimension $32$. Results are in Appendix~\ref{sec:results}. 


\subsection{Acceleration with More Depth}

Our second finding is that deeper GNNs, with or without skip connections, train faster.  For any differentiable loss function  $q \mapsto \ell(q,Y)$, Theorem \ref{thm:4} states that the loss of the multiscale linear  GNN decreases as
\begin{align} \label{eq:thm3:1}
\frac{d}{dt} L(W_{t},B_{t}) &= \scalebox{0.95}{$\displaystyle - \underbrace{\sum_{l=0}^H \underbrace{\left\|\vect\left[V_{t} (X(S^{l})_{*\Ical})\T \right]\right\|_{F_{(l),t}}^{2}}_{\ge 0}}_{\substack{ \text{ further improvement as  depth $H$ increases} \\ }} $}
\\ \nonumber & \hspace{12pt} \scalebox{0.95}{$\displaystyle  -\underbrace{  \sum_{i=1}^H  \underbrace{\left\| \sum_{l=i}^H   J_{(i,l),t}\vect\left[V_{t}  (X(S^{l})_{*\Ical})\T \right]\right\|_2^{2}.}_{\ge 0}}_{\substack{ \text{ further improvement as  depth $H$ increases} \\ }} $}
\end{align}
In equation \eqref{eq:thm3:1}, we can see that the multiscale linear GNN achieves faster loss reduction  as depth $H$ increases. A similar argument applies to non-multiscale linear GNNs.

Empirically too, deeper GNNs train faster (Figure~\ref{fig:c2}). Again, the acceleration applies to both (nonlinear) ReLU GNNs and linear GNNs. We verify this by training multiscale and non-multiscale, ReLU and linear GCNs with 2, 4, and 6 layers on  the Cora and Citeseer datasets with learning rate 5e-5, hidden dimension $32$, and  cross-entropy loss. Results are in Appendix~\ref{sec:results}.

\subsection{Label Distribution: Signal vs. Noise}

Finally, we study how the labels affect the training speed. For the loss reduction \eqref{eq:thm4:1} and  \eqref{eq:thm4:2},  we argue that the norm of  $V _{t}(X(S^{l})_{*\Ical})\T$  tends to  be larger for labels $Y$ that are more correlated with the graph features $X(S^{l})_{*\Ical}$, e.g., labels are signals instead of ``noise''. 

Without loss of generality, we assume $Y$ is normalized, e.g., one-hot labels. Here, $V_t=\frac{\partial L(A_t,B_t)}{\partial \hat Y_t}$ is  the derivative of the loss with respect to the model output, e.g., $V_t=2(\hat Y_t - Y)$ for squared loss.  
If the rows of $Y$ are random noise vectors, then so are the rows of   $V_t$, and they are expected to  get more  orthogonal to the columns of $(X(S^{l})_{*\Ical})\T$ as $n$ increases. 
In contrast,  if the labels $Y$ are highly correlated with the graph features $(X(S^{l})_{*\Ical})\T$, i.e., the labels have signal, then the norm of $V _{t}(X(S^{l})_{*\Ical})\T$ will be larger, implying faster training. 

Our argument above focuses on the first term of the loss reduction, $\|V _{t}(X(S^{l})_{*\Ical})\T\|_{\mathrm{F}}^2$. 
We empirically demonstrate that the scale of the second term, $\left\| \sum_{l=i}^H   J_{(i,l),t}\vect\left[V_{t} (X(S^{l})_{*\Ical})\T \right]\right\|_2^2$, is dominated by that of the first term (Figure~\ref{fig:noise}). Thus, we can expect GNNs to train faster with signals than noise.

We train GNNs with the original labels of the dataset and random labels (i.e., selecting a class with uniform probability), respectively. The prediction of our theoretical analysis aligns with practice: training is much slower for random labels (Figure~\ref{fig:c3}). We verify this  for mutliscale and non-multiscale, ReLU and linear GCNs   on the Cora and Citseer datasets with learning rate 1e-4, hidden dimension $32$, and cross-entropy loss. Results are in Appendix~\ref{sec:results}.

\begin{figure}[!t]
    \centering
        \includegraphics[width=0.35\textwidth]{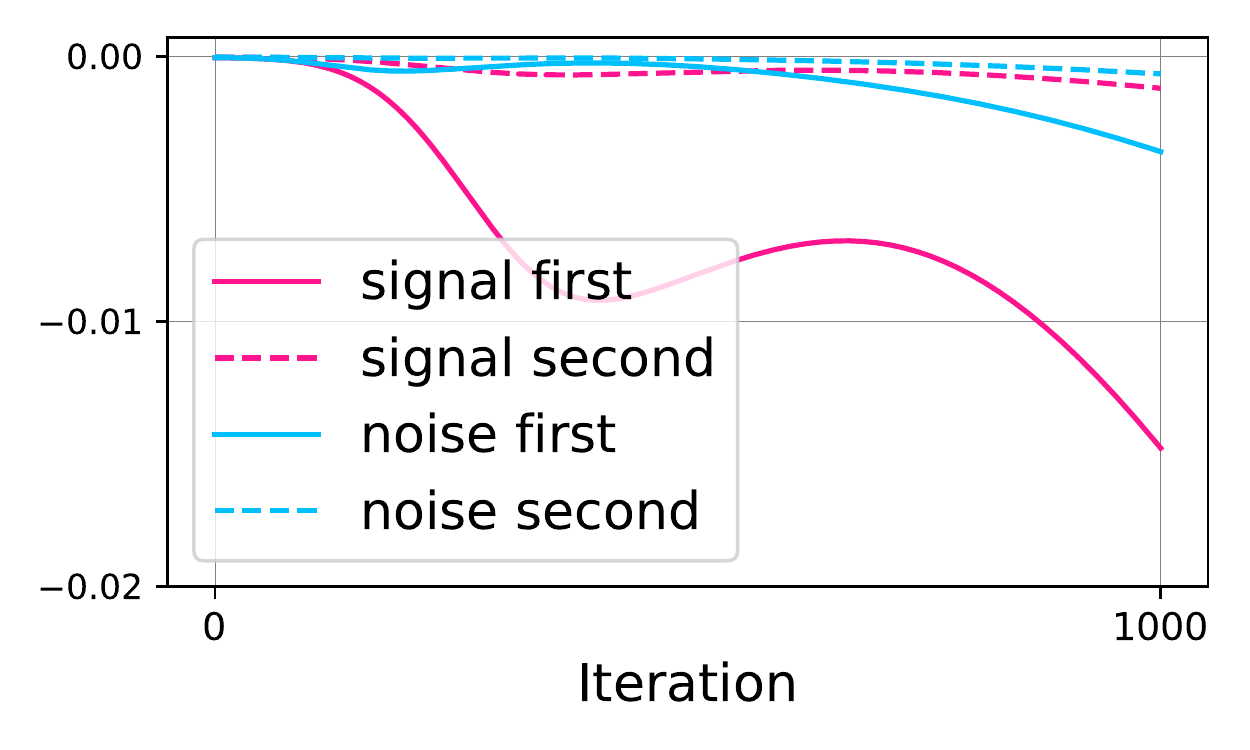}
   \caption{\textbf{The scale of the first term dominates the second term} of the loss reduction $\frac{d}{dt} L(W_{t},B_{t})$ for linear GNNs trained  with the original labels vs. random labels on Cora.  }
\label{fig:noise}
\end{figure}


\section{Related Work}
\label{sec:related}

\textbf{Theoretical analysis of linearized networks.} The theoretical study of neural networks with some linearized components has recently drawn much attention. Tremendous efforts have been made to understand linear \textit{feedforward} networks, in terms of their loss  landscape~\citep{kawaguchi2016deep, hardt2016identity, laurent2018deep} and optimization dynamics~\citep{saxe2013exact, arora2019convergence, bartlett2019gradient, du2019width, zou2020global}. Recent works prove global convergence rates for deep linear  networks under certain conditions~\citep{bartlett2019gradient, du2019width, arora2019convergence, zou2020global}. For example, \citet{arora2019convergence} assume the data to be whitened. \citet{zou2020global} fix the weights of certain layers during training. Our work is inspired by these works but differs in that our analysis applies to all learnable weights and does not require these specific assumptions, and we study the more complex GNN architecture with skip connections. GNNs consider the interaction of graph structures via the recursive message passing, but such structured, locally varying interaction is not present in feedforward networks. Furthermore, linear feedforward networks, even with skip connections, have the same expressive power as shallow linear models, a crucial condition in previous proofs \citep{bartlett2019gradient, du2019width, arora2019convergence, zou2020global}.  In contrast, the expressive power of multiscale linear GNNs can change significantly as depth increases. Accordingly, our proofs  significantly differ from previous studies. 

Another line of works studies the gradient dynamics of neural networks in the neural tangent kernel (NTK) regime~\citep{jacot2018neural, li2018learning, allen2019convergence,  arora2019fine, chizat2019lazy, du2019gradient, du2018gradient, kawaguchi2019gradient, nitanda2020optimal}. With over-parameterization, the NTK remains almost constant during training. Hence, the corresponding neural network is implicitly linearized with respect to random features of the NTK at initialization~\citep{lee2019wide, yehudai2019power, liu2020linearity}. On  the other hand, our work needs to address nonlinear dynamics and changing expressive power.

\textbf{Learning dynamics and optimization of GNNs.} Closely related to our work, \citet{du2019graph, xu2020neural} study the gradient dynamics of GNNs via the Graph NTK but focus on GNNs' generalization and extrapolation properties. We instead analyze optimization. Only~\citet{zhang2020fast} also prove global convergence for GNNs, but for the one-hidden-layer case, and they assume a specialized tensor initialization and training algorithms. In contrast, our results work for any finite depth with no assumptions on specialized training. Other works aim to accelerate and stabilize the training of GNNs through normalization techniques~\citep{cai2020graphnorm} and importance sampling~\citep{chen2018fastgcn, chen2018stochastic, huang2018adaptive, chiang2019cluster,  zou2019layer}. Our work complements these practical works with a better theoretical understanding of GNN training. 

\section{Conclusion}
\label{sec:conclusion}

This work studies the training properties of  GNNs through the lens of optimization dynamics. For  linearized GNNs with or without skip connections, despite the non-convex objective, we show that gradient descent training is guaranteed to converge to a global minimum at a linear rate. The conditions for global convergence are validated on real-world graphs. We further find out that skip connections, more depth, and/or  a good label distribution implicitly accelerate the training  of GNNs. Our results suggest deeper GNNs with skip connections may be promising in practice,  and  serve as a first foundational step for understanding the  optimization of general GNNs. 

\section*{Acknowledgements}

KX and SJ were supported by NSF CAREER award 1553284 and NSF III 1900933.  MZ was supported by ODNI, IARPA, via the BETTER Program
contract 2019-19051600005. The research of KK was partially supported by the Center of Mathematical Sciences and Applications at Harvard University. The views, opinions, and/or findings contained in this article are those of
the author and should not be interpreted as representing the official views or policies, either expressed
or implied, of the Defense Advanced Research Projects Agency, the Department of Defense, ODNI,
IARPA, or the U.S. Government. The U.S. Government is authorized to reproduce and distribute
reprints for governmental purposes notwithstanding any copyright annotation therein.

\bibliography{combined}
\bibliographystyle{icml2021}

\clearpage

\appendix


\appendix

\onecolumn


\allowdisplaybreaks

\section{Proofs}
\label{sec:proofs}

In this section, we complete the   proofs of our theoretical results. We show the proofs of Theorem~\ref{thm:6} in Appendix \ref{sec:proof_thm:6},  Proposition~\ref{prop:2} in Appendix \ref{sec:proof_prop:2}, Proposition~\ref{prop:1} in Appendix \ref{sec:complete},  Theorem~\ref{thm:1} in Appendix \ref{sec:proof_thm:1},  Proposition~\ref{prop:3} in Appendix \ref{sec:proof_prop:3}, and Theorem~\ref{thm:4} in Appendix \ref{sec:proof_thm:4}.

Before starting our proofs, we first introduce additional notation used in the proofs. We define the corner cases on the products of $B$ as: 
\begin{align}
&B_{(H)}   B_{(H-1)}\cdots B_{(l+1)}:=I_{m_{l}} \quad  \text{ if } H=l
\\ & B_{(H)} B_{(H-1)}\dots B_{(1)}:=I_{m_x}  \quad  \text{ if } H=0
\\ & B_{(l-1)}B_{(l-2)}\dots B_{(1)}:=I_{m_x}  \quad  \text{ if } l=1
\end{align}
Similarly,  for any matrices $M_{(l)}$,
we define 
$
M_{(l)}M_{(l-1)} \cdots M_{(k)} := I_{m} \text{ if } l<k,
$
and
$
M_{(l)}M_{(l-1)} \cdots M_{(k)} :=M_{(k)} =M_{(l)} \text{ if } l=k.
$
Given a scalar-valued  variable  $a \in \RR$ and a matrix $M \in \RR^{d\times d'}$,
we define
\begin{align}
\frac{\partial a}{\partial M}= \begin{bmatrix}\frac{\partial a}{\partial M_{11}} & \cdots & \frac{\partial a}{\partial M_{1d'}} \\
\vdots & \ddots & \vdots \\
\frac{\partial a}{\partial M_{d1}} & \cdots & \frac{\partial a}{\partial M_{dd'}} \\
\end{bmatrix} \in \RR^{d \times d'},
\end{align}
where $M_{ij}$ represents the $(i,j)$-th entry of the matrix $M$.
Given a vector-valued variable $a \in \RR^d$ and a column vector $b \in \RR^{d'}$,
we let
\begin{align}
\frac{\partial a}{\partial b}= \begin{bmatrix}\frac{\partial a_{1}}{\partial b_{1}} & \cdots & \frac{\partial a_{1}}{\partial  b_{d'}} \\
\vdots & \ddots & \vdots \\
\frac{\partial a_{d}}{\partial  b_{1}} & \cdots & \frac{\partial a_{d}}{\partial   b_{d'}} \\
\end{bmatrix} \in \RR^{d \times d'},
\end{align}
where $b_{i}$ represents   the $i$-th entry of the column vector  $b$. Similarly, 
given a vector-valued variable $a \in \RR^d$ and a row vector $b \in \RR^{1 \times d'}$,
we write
\begin{align}
\frac{\partial a}{\partial b}= \begin{bmatrix}\frac{\partial a_{1}}{\partial b_{11}} & \cdots & \frac{\partial a_{1}}{\partial  b_{1d'}} \\
\vdots & \ddots & \vdots \\
\frac{\partial a_{d}}{\partial  b_{11}} & \cdots & \frac{\partial a_{d}}{\partial   b_{1d'}} \\
\end{bmatrix} \in \RR^{d \times d'},
\end{align}
where $b_{1i}$ represents   the $i$-th entry of the row vector  $b$.
Finally, we  recall the  definition of the Kronecker product product of two matrices: for matrices $M \in \RR^{d_M\times d_M'}$ and $\bar M \in \RR^{d_{\bar M} \times d'_{\bar M}}$, 
\begin{align}
M \otimes \bar M= \begin{bmatrix}M_{11}\bar M  & \cdots & M_{1d_M'}\bar M \\
\vdots & \ddots\ & \vdots  \\
M_{d_M1}\bar M & \cdots & M_{d_Md_M'}\bar M \\
\end{bmatrix} \in \RR^{d_M d_{\bar M} \times d_M' d_{\bar M}'}. 
\end{align}

\subsection{Proof of Theorem~\ref{thm:6}} 
\label{sec:proof_thm:6}
We begin with  a proof overview of Theorem~\ref{thm:6}. We first relate the gradients  $\nabla_{ W_{(H)}}L$ and $\nabla_{B_{(l)}}L$ to  the gradient $\nabla_{(H)}L$, which is defined by   
$$
\nabla_{(H)}L(W,B):=\frac{\partial L(W,B)}{\partial \hat Y} (X (S^{H} )_{*\Ical})\T \in \RR^{m_y \times m_x}.
$$
Using the proven relation of  $(\nabla_{ W_{(H)}}L, \nabla_{B_{(l)}}L)$ and  $\nabla_{(H)}L$, we  first analyze the  dynamics induced in the space of $W_{(l)} B_{(l)}B_{(l-1)} \cdots B_{(1)}$ in Appendix \ref{sec:new:1}, and then    the  dynamics induced int the space of loss value $L(W,B)$ in Appendix \ref{sec:new:2}. Finally, we complete the proof by using the assumption of employing the square loss in Appendix   \ref{sec:new:3}.

Let $W_{(H)}=W$ (during the proof of Theorem~\ref{thm:6}). 
We first prove the relationship of  the gradients  $\nabla_{ W_{(H)}}L$, $\nabla_{B_{(l)}}L$ and  $\nabla_{(H)}L$  in the following lemma:

\begin{lemma} \label{lemma:1}
Let $f$ be an $H$-layer linear GNN and $\ell(q,Y)=\|q-Y\|^{2}_F$ where $q,Y \in \RR^{m_y \times \bn}$. Then, for any $(W, B)$, 
\begin{align}
\nabla_{ W_{(H)}}L(W,B) &=\nabla_{(H)}L(W,B)(B_{(H)} B_{(H-1)}\dots B_{(1)})\T  \in  \RR^{m_y \times m_l},
\end{align}
and
\begin{align}
\nabla_{B_{(l)}}L(W,B) &=(W_{(H)}B_{(H)}   B_{(H-1)}\cdots B_{(l+1)}  )\T\nabla_{(H)}L(W,B)(B_{(l-1)}B_{(l-2)}\dots B_{(1)})\T\in \RR^{m_l \times m_{l-1}}, 
\end{align}
\end{lemma}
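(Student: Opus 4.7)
The plan is to unroll the recursion into a single closed-form expression and then apply the standard matrix-calculus chain rule. Because the activation is the identity, the definition $X_{(l)} = B_{(l)} X_{(l-1)} S$ unrolls cleanly to $X_{(H)} = B_{(H)} B_{(H-1)} \cdots B_{(1)} X S^{H}$, so restricting to the labeled nodes gives
\begin{equation*}
\hat Y := f(X,W,B)_{*\mathcal I} = W\, B_{(H)} \cdots B_{(1)}\, X\, (S^{H})_{*\mathcal I}.
\end{equation*}
Since $L$ depends on $(W,B)$ only through $\hat Y$, the matrix $\nabla_{(H)} L = (\partial L / \partial \hat Y)\,(X(S^{H})_{*\mathcal I})^{\!\top}$ is a bona fide pullback of the upstream gradient through the fixed factor on the right of $\hat Y$. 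This reduces everything to computing two Jacobians with respect to parameters that appear linearly.

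For the first identity, I would factor $\hat Y = W M$ with $M = B_{(H)} \cdots B_{(1)}\, X\, (S^{H})_{*\mathcal I}$ and invoke the standard rule $\nabla_{W}\, \ell(WM, Y) = (\partial \ell/\partial \hat Y)\, M^{\!\top}$. Distributing the transpose as $M^{\!\top} = (X(S^{H})_{*\mathcal I})^{\!\top} (B_{(H)} \cdots B_{(1)})^{\!\top}$ and recognizing the first two factors as $\nabla_{(H)} L$ gives the stated formula for $\nabla_{W_{(H)}} L$.

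For the second identity, I would use the sandwich factorization $\hat Y = A\, B_{(l)}\, C$, where $A := W B_{(H)} \cdots B_{(l+1)}$ gathers everything to the left of $B_{(l)}$ and $C := B_{(l-1)} \cdots B_{(1)} X (S^{H})_{*\mathcal I}$ gathers everything to the right. The usual rule $\nabla_{M}\, \ell(AMC, Y) = A^{\!\top} (\partial \ell/\partial \hat Y)\, C^{\!\top}$ then yields $\nabla_{B_{(l)}} L = A^{\!\top}\, (\partial L / \partial \hat Y)\, C^{\!\top}$; again peeling off the rightmost factor $(X(S^H)_{*\mathcal I})^{\!\top}$ from $C^{\!\top}$ and grouping it with $\partial L/\partial \hat Y$ produces $\nabla_{(H)} L$ sitting in the middle, which is exactly the claim.

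There is no real obstacle: the whole argument is routine chain rule. The only thing requiring care is bookkeeping for the corner cases $l=1$, $l=H$, and $H=0$, where the empty products $B_{(l-1)} \cdots B_{(1)}$, $B_{(H)} \cdots B_{(l+1)}$, or $B_{(H)} \cdots B_{(1)}$ must be read as identity matrices of the correct dimensions using the conventions fixed at the start of Appendix~\ref{sec:proofs}. I would verify each of these cases explicitly so that the stated formulas remain well-defined and the dimensions $m_y \times m_l$ and $m_l \times m_{l-1}$ are correct.
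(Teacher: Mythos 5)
Your proposal is correct and follows essentially the same route as the paper's proof: both unroll the linear recursion to write $\hat Y = W B_{(H)}\cdots B_{(1)} X (S^H)_{*\mathcal I}$ and then differentiate through the sandwich factorization around $W$ and around each $B_{(l)}$, with the same handling of empty products at the boundaries. The only difference is presentational --- the paper derives your ``standard rules'' $\nabla_W \ell(WM,Y)$ and $\nabla_M \ell(AMC,Y)$ from scratch via vectorization and Kronecker-product identities (keeping $X_{(l-1)}$ and $(S^{H-l+1})_{*\mathcal I}$ intermediate and only substituting the unrolled form at the end), whereas you invoke them directly.
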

\begin{proof}[Proof of Lemma \ref{lemma:1}]
From Definition \ref{def:1}, we have  $\hat Y =f(X, W_{}, B_{})_{*\Ical}=W_{(H)} (X_{(H)})_{*\Ical}$ where $X_{(l)} = B_{(l)} X_{(l-1)} S$. Using this definition, we can derive the formula of $\frac{\partial\vect[\hat Y]}{\partial\vect[ W_{(H)}]} \in \RR^{m_y n \times m_y m_{\bar H}}$ as: 
\begin{align} \label{eq:new:5}
\frac{\partial \vect[\hat Y]}{\partial \vect[W_{(H)}]} &=\frac{\partial}{\partial  \nonumber \vect[ W_{(H)}]}  \vect[W_{(H)} (X_{(H)})_{*\Ical}  ]
\\  & =\frac{\partial}{\partial \vect[W_{(H)}]}[( (X_{(H)})_{*\Ical})\T \otimes I_{m_{y}} ] \vect[ W_{(H)}]=[( (X_{(H)})_{*\Ical})\T \otimes I_{m_{y}} ]  \in \RR^{m_y n \times m_y m_{\bar H}}     
\end{align}

We  will now derive the formula of $\frac{\partial\vect[\hat Y]}{\partial\vect[B_{(l)} ]} \in \RR^{m_y n \times m_l m_{l-1}}$: 
\begin{align} \label{eq:new:2}
\nonumber \frac{\partial \vect[ \hat Y]}{\partial \vect[B_{(l)}]} &=\frac{\partial}{\partial \vect[B_{(l)}]} \vect[W_{(H)} (X_{(H)})_{*\Ical} ]
\\ \nonumber & =\frac{\partial}{\partial \vect[B_{(l)}]}   [I_{n}\otimes W_{(H)}  ] \vect[(X_{(H)})_{*\Ical}]    
\\ \nonumber & =  [I_{n}\otimes W_{(H)}  ] \frac{\partial\vect[(X_{(H)})_{*\Ical}]}{\partial \vect[B_{(l)}]} 
\\ \nonumber & =  [I_{n}\otimes W_{(H)}  ] \frac{\partial\vect[(X_{(H)})_{*\Ical}]}{\partial \vect[X_{(l)}]} \frac{\partial\vect[X_{(l)}]}{\partial \vect[B_{(l)}]}
\\  \nonumber& =  [I_{n}\otimes W_{(H)}  ] \frac{\partial\vect[(X_{(H)})_{*\Ical}]}{\partial \vect[X_{(l)}]} \frac{\partial\vect[B_{(l)} X_{(l-1)} S]}{\partial \vect[B_{(l)}]}
\\ \nonumber & =   [I_{n}\otimes W_{(H)}  ] \frac{\partial\vect[(X_{(H)})_{*\Ical}]}{\partial \vect[X_{(l)}]} \frac{\partial[(X_{(l-1)} S)\T \otimes I_{m_{l}} ]\vect[B_{(l)} ]}{\partial \vect[B_{(l)}]}
\\  & =   [I_{n}\otimes W_{(H)}  ] \frac{\partial\vect[(X_{(H)})_{*\Ical}]}{\partial \vect[X_{(l)}]} [(X_{(l-1)} S)\T \otimes I_{m_{l}} ]
\end{align}
Here,
we have that 
\begin{align} 
\vect[(X_{(H)})_{*\Ical}] =\vect[B_{(H)} X_{(H-1)} S_{*\Ical} ]=\vect[(S_{} \T )_{\Ical*}\otimes B_{(H)}   ] \vect[X_{(H-1)}]. 
\end{align}
and 
\begin{align} \label{eq:new:1}
\vect[X_{(H)}] =\vect[B_{(H)} X_{(H-1)} S_{*\Ical} ]=\vect[S_{} \otimes B_{(H)}   ] \vect[X_{(H-1)}]. 
\end{align}
By recursively applying \eqref{eq:new:1}, we have that \begin{align*}
\vect[(X_{(H)})_{*\Ical}] &=\vect[(S_{} \T )_{\Ical*}\otimes B_{(H)}   ]\vect[S \T \otimes B_{(H-1)}] \cdots \vect[S \T \otimes B_{(l+1)}] \vect[X_{(l)}] 
\\ & =\vect[((S^{H-l}) \T )_{\Ical*}\otimes B_{(H)}   B_{(H-1)}\cdots B_{(l+1)}]\vect[X_{(l)}],
\end{align*}
where
$$
B_{(H)}   B_{(H-1)}\cdots B_{(l+1)}:=I_{m_{l}} \quad \text{ if $H=l$.}
$$  
Therefore, 
 
\begin{align} \label{eq:new:3}
\frac{\partial\vect[(X_{(H)})_{*\Ical}]}{\partial \vect[X_{(l)}]} &=\vect[((S^{H-l}) \T )_{\Ical*}\otimes B_{(H)}   B_{(H-1)}\cdots B_{(l+1)}].
\end{align}

Combining \eqref{eq:new:2} and \eqref{eq:new:3}  yields
\begin{align} \label{eq:new:4}
\nonumber \frac{\partial \vect[\hat Y]}{\partial \vect[B_{(l)}]} & = [I_{n}\otimes W_{(H)}  ] \frac{\partial\vect[(X_{(H)})_{*\Ical}]}{\partial \vect[X_{(l)}]} [(X_{(l-1)} S)\T \otimes I_{m_{l}} ] 
\\ \nonumber &  =  [I_{n}\otimes W_{(H)}  ] \vect[((S^{H-l}) \T )_{\Ical*}\otimes B_{(H)}   B_{(H-1)}\cdots B_{(l+1)}] [(X_{(l-1)} S)\T \otimes I_{m_{l}} ] \\ &  =  [(X_{(l-1)} (S^{H-l+1})_{*\Ical})\T \otimes  W_{(H)}B_{(H)}   B_{(H-1)}\cdots B_{(l+1)}  ]\in \RR^{m_y n \times m_l m_{l-1}}.
\end{align}

Using \eqref{eq:new:5}, we will now derive the formula of $\nabla_{ W_{(H)}}L(W,B) \in \RR^{ m_y \times m_H}$:  
$$
\frac{\partial L(W,B)}{\partial\vect[ W_{(H)}]} = \frac{\partial L(W,B)}{\partial\vect[ \hat Y]}\frac{\partial\vect[ \hat Y]}{\partial\vect[ W_{(H)}]} =\frac{\partial L(W,B)}{\partial\vect[ \hat Y]}[(X_{(H)})_{*\Ical}\T \otimes I_{m_{y}} ] 
$$
Thus, with $\frac{\partial L(W,B)}{\partial \hat Y}\in \RR^{m_y \times n}$,
\begin{align*}
\nabla_{ \vect[W_{(H)}]}L(W,B) &= \left(\frac{\partial L(W,B)}{\partial\vect[ W_{(H)}]}\right)\T 
\\ &=[(X_{(H)})_{*\Ical} \otimes I_{m_{y}} ] \left(\frac{\partial L(W,B)}{\partial\vect[ \hat Y]} \right)\T   
\\ & =[(X_{(H)})_{*\Ical} \otimes I_{m_{y}} ]\vect\left[\frac{\partial L(W,B)}{\partial \hat Y} \right] 
\\ & = \vect\left[ \frac{\partial L(W,B)}{\partial \hat Y} (X_{(H)})_{*\Ical}\T \right] \in \RR^{m_ym_H}.
\end{align*}
Therefore,
\begin{align} \label{eq:new:6}
\nabla_{ W_{(H)}}L(W,B) &=\frac{\partial L(W,B)}{\partial \hat Y} (X_{(H)})_{*\Ical}\T\in \RR^{m_y \times m_H}. 
\end{align}

Using \eqref{eq:new:4}, we will now derive the formula of $\nabla_{ B_{(l)}}L(W,B) \in \RR^{ m_l \times m_{l-1}}$: 
$$
\frac{\partial L(W,B)}{\partial\vect[B_{(l)}]} = \frac{\partial L(W,B)}{\partial\vect[ \hat Y]}\frac{\partial\vect[ \hat Y]}{\partial\vect[B_{(l)}]} =\frac{\partial L(W,B)}{\partial\vect[ \hat Y]}   [(X_{(l-1)} (S^{H-l+1})_{*\Ical})\T \otimes  W_{(H)}B_{(H)}   B_{(H-1)}\cdots B_{(l+1)}  ]. 
$$
Thus, with $\frac{\partial L(W,B)}{\partial \hat Y}\in \RR^{m_y \times n}$,
\begin{align*}
\nabla_{ \vect[B_{(l)}]}L(W,B) &= \left(\frac{\partial L(W,B)}{\partial\vect[B_{(l)}]}\right)\T 
\\ &=[X_{(l-1)} (S^{H-l+1} )_{*\Ical}\otimes  (W_{(H)}B_{(H)}   B_{(H-1)}\cdots B_{(l+1)}  )\T] \left(\frac{\partial L(W,B)}{\partial\vect[ \hat Y]} \right)\T   
\\ & =   [X_{(l-1)} (S^{H-l+1} )_{*\Ical}\otimes  (W_{(H)}B_{(H)}   B_{(H-1)}\cdots B_{(l+1)}  )\T]\vect\left[\frac{\partial L(W,B)}{\partial \hat Y} \right] 
\\ & =   \vect\left[ (W_{(H)}B_{(H)}   B_{(H-1)}\cdots B_{(l+1)}  )\T\frac{\partial L(W,B)}{\partial \hat Y} (X_{(l-1)} (S^{H-l+1} )_{*\Ical})\T \right] \in \RR^{m_lm_{l-1}}.
\end{align*}
Therefore,  
\begin{align} \label{eq:new:7}
\nabla_{B_{(l)}}L(W,B) = (W_{(H)}B_{(H)}   B_{(H-1)}\cdots B_{(l+1)}  )\T\frac{\partial L(W,B)}{\partial \hat Y} (X_{(l-1)} (S^{H-l+1} )_{*\Ical})\T \in \RR^{m_l \times m_{l-1}}.
\end{align}

With \eqref{eq:new:6} and \eqref{eq:new:7}, we are now ready to prove the statement of this lemma by introducing the following notation:
$$
\nabla_{(l)}L(W,B):=\frac{\partial L(W,B)}{\partial \hat Y} (X (S^{l} )_{*\Ical})\T \in \RR^{m_y \times m_x}. 
$$
Using this notation along with  \eqref{eq:new:6}\begin{align*}
\nabla_{ W_{(H)}}L(W,B) &=\frac{\partial L(W,B)}{\partial \hat Y} (X_{(H)})_{*\Ical}\T
\\ & =\frac{\partial L(W,B)}{\partial \hat Y} (B_{(H)} X_{(H-1)} (S)_{*\Ical})\T
\\ & =\frac{\partial L(W,B)}{\partial \hat Y} (B_{(H)} B_{(H-1)}\dots B_{(1)}X(S^{H})_{*\Ical})\T
\\ & =\nabla_{(H)}L(W,B)(B_{(H)} B_{(H-1)}\dots B_{(1)})\T, 
\end{align*}
Similarly,
using   \eqref{eq:new:7}, \begin{align*}
\nabla_{B_{(l)}}L(W,B) & = (W_{(H)}B_{(H)}   B_{(H-1)}\cdots B_{(l+1)}  )\T\frac{\partial L(W,B)}{\partial \hat Y} (X_{(l-1)} (S^{H-l+1} )_{*\Ical})\T 
\\ & = (W_{(H)}B_{(H)}   B_{(H-1)}\cdots B_{(l+1)}  )\T\frac{\partial L(W,B)}{\partial \hat Y} (B_{(l-1)}B_{(l-2)}\dots B_{(1)}X(S^{l-1} S^{H-l+1} )_{*\Ical})\T  
\\ & = (W_{(H)}B_{(H)}   B_{(H-1)}\cdots B_{(l+1)}  )\T\frac{\partial L(W,B)}{\partial \hat Y} (B_{(l-1)}B_{(l-2)}\dots B_{(1)}X(S^{H})_{*\Ical} )\T
\\ & = (W_{(H)}B_{(H)}   B_{(H-1)}\cdots B_{(l+1)}  )\T\nabla_{(H)}L(W,B)(B_{(l-1)}B_{(l-2)}\dots B_{(1)})\T    
\end{align*}
where $B_{(l-1)}B_{(l-2)}\dots B_{(1)}:=I_{m_x}$ if $l=1$. 

\end{proof}

By using Lemma \ref{lemma:1}, we complete the proof of Theorem~\ref{thm:6} in the following. 

\subsubsection{Dynamics induced in the space of $W_{(l)} B_{(l)}B_{(l-1)} \cdots B_{(1)}$} \label{sec:new:1}

We now consider the dynamics induced in the space of $W_{(l)} B_{(l)}B_{(l-1)} \cdots B_{(1)}$. We first consider the following discrete version of the dynamics:
$$
W_{(H)}' = W_{(H)} -\alpha \nabla_{ W_{(H)}}L(W,B)
$$
$$
B_{(l)}' =B_{(l)} -\alpha \nabla_{ B_{(l)}}L(W,B).
$$
This dynamics induces the following dynamics:
$$
W_{(H)}' B_{(H)}'B_{(H-1)}' \cdots B_{(1)}' =( W_{(H)} -\alpha \nabla_{ W_{(H)}}L(W,B))(B_{(H)} -\alpha \nabla_{ B_{(H)}}L(W,B))  \cdots (B_{(1)} -\alpha \nabla_{ B_{(1)}}L(W,B)). 
$$
Define 
$$
Z_{(H)}:=W_{(H)} B_{(H)}B_{(H-1)} \cdots B_{(1)},
$$
and
$$
Z_{(H)}':=W_{(H)}' B_{(H)}'B_{(H-1)}' \cdots B_{(1)}'. 
$$
Then, we can rewrite
$$
Z_{(H)}' =( W_{(H)} -\alpha \nabla_{ W_{(H)}}L(W,B))(B_{(H)} -\alpha \nabla_{ B_{(H)}}L(W,B))  \cdots (B_{(1)} -\alpha \nabla_{ B_{(1)}}L(W,B)). 
$$
By expanding the multiplications, this can be written as: 
\begin{align*}
 Z_{(H)}' 
= Z_{(H)}-\alpha  \nabla_{ W_{(H)}}L(W,B)B_{(H)} \cdots B_{(1)} -  \alpha\sum_{i=1}^H  W_{(H)} B_{(H)} \cdots B_{(i+1)}\nabla_{ B_{(i)}}L(W,B)B_{(i-1)} \cdots B_{(1)} + O(\alpha^2).
\end{align*}
By vectorizing both sides,
\begin{align*}
& \vect[Z_{(H)}']- \vect[Z_{(H)} ] 
\\ & =-\alpha \vect[ \nabla_{ W_{(H)}}L(W,B)B_{(H)} \cdots B_{(1)}]
  -  \alpha\sum_{i=1}^H   \vect[W_{(H)} B_{(H)} \cdots B_{(i+1)}\nabla_{ B_{(i)}}L(W,B)B_{(i-1)} \cdots B_{(1)} ]+ O(\alpha^2).
\end{align*}
Here, using the  formula of $\nabla_{ W_{(H)}}L(W,B)$ and $\nabla_{B_{(H)}}L(W,B)$, we have that
\begin{align*}
\vect[\nabla_{ W_{(H)}}L(W,B)B_{(H)} \cdots B_{(1)} ] &=\vect[\nabla_{(H)}L(W,B)(B_{(H)} \dots B_{(1)})\T B_{(H)} \cdots B_{(1)}]
\\ & =[(B_{(H)} \dots B_{(1)})\T B_{(H)} \cdots B_{(1)} \otimes I_{m_{y}} ]\vect[\nabla_{(H)}L(W,B)], 
\end{align*}
and
\begin{align*}
&\sum_{i=1}^H  \vect[W_{(H)} B_{(H)} \cdots B_{(i+1)}\nabla_{ B_{(i)}}L(W,B)B_{(i-1)} \cdots B_{(1)}] 
\\ & =\sum_{i=1}^H  \vect\left[W_{(H)} B_{(H)} \cdots B_{(i+1)} (W_{(H)}B_{(H)}   \cdots B_{(i+1)}  )\T\nabla_{(H)}L(W,B)(B_{(i-1)}\dots B_{(1)})\T B_{(i-1)} \cdots B_{(1)} \right]
\\ & =\sum_{i=1}^H [(B_{(i-1)}\dots B_{(1)})\T B_{(i-1)} \cdots B_{(1)} \otimes W_{(H)} B_{(H)} \cdots B_{(i+1)} (W_{(H)}B_{(H)}   \cdots B_{(i+1)}  )\T ]\vect\left[\nabla_{(H)}L(W,B) \right].
\end{align*}
Summarizing above,
 \begin{align*}
& \vect[Z_{(H)}']-\vect[Z_{(H)} ] 
\\ & = - \alpha [(B_{(H)} \dots B_{(1)})\T B_{(H)} \cdots B_{(1)} \otimes I_{m_{y}} ]\vect[\nabla_{(H)}L(W,B)]
\\ & \hspace{12pt}  -  \alpha\sum_{i=1}^H  [(B_{(i-1)}\dots B_{(1)})\T B_{(i-1)} \cdots B_{(1)} \otimes W_{(H)} B_{(H)} \cdots B_{(i+1)} (W_{(H)}B_{(H)}   \cdots B_{(i+1)}  )\T ]\vect\left[\nabla_{(H)}L(W,B) \right]
  \\  & \hspace{12pt}+ O(\alpha^2)
\end{align*}
Therefore, the induced continuous dynamics of $Z_{(H)}=W_{(H)} B_{(H)}B_{(H-1)} \cdots B_{(1)}$ is 
\begin{align*}
\frac{d}{dt}\vect[Z_{(H)}] = -F_{(H)}\vect[\nabla_{(H)}L(W,B)] -\left(  \sum_{i=1}^H  J_{(i,H)}\T J_{(i,H)} \right)\vect\left[\nabla_{(H)}L(W,B) \right],
\end{align*}

where 
$$ 
F_{(H)}=[(B_{(H)} \dots B_{(1)})\T B_{(H)} \cdots B_{(1)} \otimes I_{m_{y}} ],
$$
and
$$
J_{(i,H)}=[B_{(i-1)}\dots B_{(1)}\otimes (W_{(H)} B_{(H)} \cdots B_{(i+1)} )\T  ].
$$
This is because 
\begin{align*}
J_{(i,H)}\T J_{(i,H)} &=[(B_{(i-1)}\dots B_{(1)})\T  \otimes W_{(H)} B_{(H)} \cdots B_{(i+1)} ][B_{(i-1)}\dots B_{(1)}\otimes ( W_{(H)} B_{(H)} \cdots B_{(i+1)})\T ]
\\ & =[(B_{(i-1)}\dots B_{(1)})\T  B_{(i-1)}\dots B_{(1)}\otimes W_{(H)} B_{(H)} \cdots B_{(i+1)}  ( W_{(H)} B_{(H)} \cdots B_{(i+1)})\T ]. 
\end{align*}

\subsubsection{Dynamics induced int the space of loss value $L(W,B)$} \label{sec:new:2}

We now analyze the dynamics induced int the space of loss value $L(W,B)$. Using chain rule,
\begin{align*}
\frac{d}{dt} L(W,B) &=\frac{d}{dt} L_{0}(Z_{(H)}) 
\\ & = \frac{\partial L_{0}(Z_{(H)}) }{\partial \vect[Z_{(H)}]} \frac{d\vect[Z_{(H)}]}{d t },
\end{align*}
where  
$$
L_{0}(Z_{(H)})=\ell(f_{0}(X,Z_{(H)})_{*\Ical}, Y), \ \ f_0(X,Z_{(H)})= Z_{(H)} XS^H, \  \text{ and }  Z_{(H)}=W_{(H)} B_{(H)}B_{(H-1)} \cdots B_{(1)}.
$$
Since $f_{0}(X,Z_{(H)})=f(X,W,B)=\hat Y$ and $L_{0}(Z_{(H)})=L(W,B)$,
we have  that\begin{align*}
\left(\frac{\partial L_{0}(Z_{(H)})}{\partial \vect[Z_{(H)}]}\right)\T &=\left(\frac{\partial L(W,B)}{\partial \vect[\hat Y]} \frac{\partial \vect[\hat Y]}{\partial  \vect[Z_{(H)}]} \right)\T
\\ & = \left(\frac{\partial L(W,B)}{\partial \vect[\hat Y]} \left(\frac{\partial }{\partial  \vect[Z_{(H)}]}[(X(S^H)_{*\Ical})\T \otimes I_{m_y}]\vect[Z_{(H)}]  \right) \right)\T
\\ & =    [X(S^H)_{*\Ical}\otimes I_{m_y}]\vect\left[\frac{\partial L(W,B)}{\partial \hat Y} \right]   
\\ & =    \vect\left[\frac{\partial L(W,B)}{\partial \hat Y} (X(S^H)_{*\Ical})\T \right] \\ & = \vect[\nabla_{(H)}L(W,B)]     
\end{align*}
Combining these,
\begin{align*}
& \frac{d}{dt} L(W,B)
\\  &=  \vect[\nabla_{(H)}L(W,B)]\T \frac{d\vect[Z_{(H)}]}{d t }
\\ & = -\vect[\nabla_{(H)}L(W,B)]\T F_{(H)}\vect[\nabla_{(H)}L(W,B)]-  \sum_{i=1}^H \vect[\nabla_{(H)}L(W,B)]\T  J_{(i,H)}\T J_{(i,H)}  \vect\left[\nabla_{(H)}L(W,B) \right] 
\\ & = -\vect[\nabla_{(H)}L(W,B)]\T F_{(H)}\vect[\nabla_{(H)}L(W,B)]-  \sum_{i=1}^H \| J_{(i,H)}  \vect\left[\nabla_{(H)}L(W,B) \right] \|_2^2
\end{align*}
Therefore,
\begin{align} \label{eq:4} 
\frac{d}{dt} L(W,B) &=- \vect[\nabla_{(H)}L(W,B)]\T F_{(H)}\vect[\nabla_{(H)}L(W,B)]-  \sum_{i=1}^H  \left\|   J_{(i,H)}\vect[\nabla_{(H)}L(W,B)] \right\|_2^{2}
\end{align}
Since $F_{(H)}$ is real symmetric and positive semidefinite,
\begin{align*}
\frac{d}{dt} L(W,B) &\le- \lambda_{\min}(F_{(H)}) \|\vect[\nabla_{(H)}L(W,B)]\|_2^2-  \sum_{i=1}^H  \left\|   J_{(i,H)}\vect[\nabla_{(H)}L(W,B)] \right\|_2^{2}.
\end{align*} 
With $\lambda_{W,B}=\lambda_{\min}(F_{(H)})$,
\begin{align} \label{eq:5}
\frac{d}{dt} L(W,B) &\le-\lambda_{W,B}  \|\vect[\nabla_{(H)}L(W,B)]\|_2^2-  \sum_{i=1}^H  \left\|    J_{(i,H)}\vect[\nabla_{(H)}L(W,B)] \right\|_2^{2}
\end{align}

\subsubsection{Completing the proof by using the assumption of the square loss} \label{sec:new:3}

Using the assumption that $L(W,B)=\ell(f(X,W,B)_{*\Ical}, Y)= \|f(X,W,B)_{*\Ical}-Y\|^2_F$ with $\hat Y = f(X,W,B)_{*\Ical}$,
we have$$
\frac{\partial L(W,B)}{\partial \hat Y}=\frac{\partial }{\partial \hat Y}\| \hat Y-Y\|^2_F = 2(\hat Y-Y) \in \RR^{m_y\times n},
$$
and
\begin{align*}
\vect[\nabla_{(H)}L(W,B)]=\vect\left[\frac{\partial L(W,B)}{\partial \hat Y} (X (S^{H})_{*\Ical})\T \right] &=2\vect\left[(\hat Y-Y) (X (S^{H})_{*\Ical})\T \right] 
\\ & =2 [X (S^{H})_{*\Ical}\otimes I_{m_{y}} ] \vect[\hat Y-Y]. 
\end{align*}
Therefore,
\begin{align} \label{eq:new:14}
\|\vect[\nabla_{(H)}L(W,B)]\|_2^2=4 \vect[\hat Y-Y]\T  [(X (S^{H})_{*\Ical})\T X (S^{H})_{*\Ical}\otimes I_{m_{y}} ] \vect[\hat Y-Y]   
\end{align}
Using  \eqref{eq:5} and \eqref{eq:new:14},
\begin{align*}
\frac{d}{dt} L(W,B) &\le-\lambda_{W,B}  \|\vect[\nabla_{(H)}L(W,B)]\|_2^2-  \sum_{i=1}^H  \left\|   J_{(i,H)}\vect[\nabla_{(H)}L(W,B)] \right\|_2^{2}
 \\ &\le-4\lambda_{W,B}  \vect[\hat Y-Y]\T  [(X (S^{H})_{*\Ical})\T X  (S^{H})_{*\Ical}\otimes I_{m_{y}} ] \vect[\hat Y-Y]-  \sum_{i=1}^H  \left\|  J_{(i,H)}\vect[\nabla_{(H)}L(W,B)] \right\|_2^{2}
\\ & =-4\lambda_{W,B}  \vect[\hat Y-Y]\T  \left[ \tilde G_H\T \tilde G_H\otimes I_{m_{y}} \right] \vect[\hat Y-Y]-  \sum_{i=1}^H  \left\| J_{(i,H)}\vect[\nabla_{(H)}L(W,B)] \right\|_2^2 
\end{align*}
where the last line follows from the following definition:
$$
\tilde G_H : =X (S^{H})_{*\Ical}.
$$
Decompose $\vect[\hat Y-Y]$ as $\vect[\hat Y-Y] =v+v^\perp$, where $v=\Pb_{\tilde G_H\T \otimes I_{m_{y}}  }\vect[\hat Y-Y]$, $v^\perp=(I_{m_{y}n}-\Pb_{\tilde G_H\T \otimes I_{m_{y}} })\vect[\hat Y-Y]$, and $\Pb_{\tilde G_H\T \otimes I_{m_{y}}} \in \RR^{m_yn\times m_y n}$ represents the
orthogonal projection onto the column space of $\tilde G_H\T \otimes I_{m_{y}} \in  \RR^{ m_{y}n \times m_{y}m_{x}}$. Then, 
\begin{align*}
 \vect[\hat Y-Y]\T    \left[ \tilde G_H\T \tilde G_H\otimes I_{m_{y}} \right]  \vect[\hat Y-Y] 
& =(v+v^\perp)\T\left[ \tilde G_H\T \otimes I_{m_{y}} \right]     \left[ \tilde G_H\otimes I_{m_{y}} \right]    (v + v^\perp)
 \\ & =v\T  \left[ \tilde G_H\T \otimes I_{m_{y}} \right]     \left[ \tilde G_H\otimes I_{m_{y}} \right] v
\\ & \ge \sigma^2_{\min}(\tilde G_H) \|\Pb_{\tilde G_H\T \otimes I_{m_{y}}  }\vect[\hat Y-Y]\|^2_2 
\\ & = \sigma^2_{\min}(\tilde G_H) \|\Pb_{\tilde G_H\T \otimes I_{m_{y}}  }\vect[\hat Y]-\Pb_{\tilde G_H\T \otimes I_{m_{y}}  }\vect[Y]\|^2_2
\\ & = \sigma^2_{\min}(\tilde G_H) \|\vect[\hat Y]-\Pb_{\tilde G_H\T \otimes I_{m_{y}}  }\vect[Y]\pm\vect[ Y]\|^2_2
 \\ & = \sigma^2_{\min}(\tilde G_H) \|\vect[\hat Y]-\vect[ Y]+(I_{m_yn}-\Pb_{\tilde G_H\T \otimes I_{m_{y}}  })\vect[ Y]\|^2_2
\\ & \ge \sigma^2_{\min}(\tilde G_H)(\|\vect[\hat Y-Y]\|_2 - \|(I_{m_yn}-\Pb_{\tilde G_H\T \otimes I_{m_{y}}  })\vect[ Y]\|_2 )^2 
\\ & \ge \sigma^2_{\min}(\tilde G_H)(\|\vect[\hat Y-Y]\|_2^2 - \|(I_{m_yn}-\Pb_{\tilde G_H\T \otimes I_{m_{y}}  })\vect[ Y]\|_2 ^2, 
\end{align*} 
where we used the fact that the singular values of $\left[ \tilde G_H\T \otimes I_{m_{y}} \right]$ are products of singular values of $\tilde G_H$ and $I_{m_y}$. 

By noticing that $L(W,B)=\|\vect[\hat Y-Y]\|_2^{2}$ and $L^*_{H} = \|(I_{m_yn}-\Pb_{\tilde G_H\T \otimes I_{m_{y}}  })\vect[ Y]\|_2^2$ ,
$$
\vect[\hat Y-Y]\T    \left[ \tilde G_H\T \tilde G_H\otimes I_{m_{y}} \right]  \vect[\hat Y-Y] \ge \sigma^2_{\min}(\tilde G_H)(L(W,B)-L^*_{H} ).
$$
Therefore,
\begin{align*}
\frac{d}{dt} L(W,B) &\le -4\lambda_{W,B}  \vect[\hat Y-Y]\T  \left[ \tilde G_H\T \tilde G_H\otimes I_{m_{y}} \right] \vect[\hat Y-Y]-  \sum_{i=1}^H  \left\|   J_{(i,H)}\vect[\nabla_{(H)}L(W,B)] \right\|_2^2
\\ & \le -4\lambda_{W,B}   \sigma^2_{\min}(\tilde G_H)( L(W,B)-L^*_{H})-  \sum_{i=1}^H  \left\|    J_{(i,H)}\vect[\nabla_{(H)}L(W,B)] \right\|_2^2
\end{align*}
Since $\frac{d}{dt} L^*_{H}=0$, 
$$
\frac{d}{dt} (L(W,B) -L^*_{H}) \le -4\lambda_{W,B}   \sigma^2_{\min}(\tilde G_H)(L(W,B) -L^*_{H})-  \sum_{i=1}^H  \left\|    J_{(i,H)}\vect[\nabla_{(H)}L(W,B)] \right\|_2^2 
$$
By defining  $\Lb=L(W,B) -L^*_{H}$, 
\begin{align} \label{eq:proof:2_2}
\frac{d\Lb }{dt} \le -4\lambda_{W,B}   \sigma^2_{\min}(\tilde G_H)\Lb-  \sum_{i=1}^H  \left\|   J_{(i,H)}\vect[\nabla_{(H)}L(W,B)] \right\|_2^2 
\end{align} 
Since $\frac{d}{dt} \Lb \le0$ and $\Lb\ge 0$, if $\Lb=0$ at some time $\bar t$, then $\Lb=0$ for any time $t \ge \bar t$. Therefore, if $\Lb=0$ at some time $\bar t$, then we have the desired statement of this theorem for any time $t\ge \bar t$. Thus, we can focus on the time interval $[0, \bar t]$ such that  $\Lb>0$ for  any time $t\in[0, \bar t]$ (here, it is allowed to have $\bar t = \infty$). Thus, focusing on the time interval with $\Lb>0$ , equation \eqref{eq:proof:2_2} implies that 
\begin{align*}
\frac{1}{\Lb}\frac{d\Lb}{dt}  \le -4\lambda_{W,B}   \sigma^2_{\min}(\tilde G_H)-\frac{1}{\Lb}  \sum_{i=1}^H  \left\|   J_{(i,H)}\vect[\nabla_{(H)}L(W,B)] \right\|_2^2
\end{align*}
By taking integral over time
\begin{align*}
\int_{0}^T \frac{1}{\Lb}\frac{d\Lb}{dt}  dt\le -\int_{0}^T 4 \lambda_{W,B}   \sigma^2_{\min}(\tilde G_H)dt-\int_{0}^T \frac{1}{\Lb}  \sum_{i=1}^H  \left\|  J_{(i,H)}\vect[\nabla_{(H)}L(W,B)] \right\|_2^{2}dt
\end{align*}
By using the substitution rule for integrals, $\int_{0}^T \frac{1}{\Lb}\frac{d\Lb}{dt}  dt=\int_{\Lb_0}^{\Lb_T} \frac{1}{\Lb}d\Lb=\log(\Lb_T)-\log(\Lb_0)$, where $\Lb_0=L(W_{0},B_{0}) -L^*$ and $\Lb_T=L(W_{T},B_{T}) -L^*_{H}$. Thus,
\begin{align*}
\log(\Lb_T)-\log(\Lb_0)\le - 4 \sigma^2_{\min}(\tilde G_H) \int_{0}^T  \lambda_{W,B} dt-\int_{0}^T \frac{1}{\Lb}  \sum_{i=1}^H  \left\|  J_{(i,H)}\vect[\nabla_{(H)}L(W,B)] \right\|_2^{2}dt
\end{align*} 
which implies that
\begin{align*}
\Lb_T &\le e^{\log(\Lb_0)- 4 \sigma^2_{\min}(\tilde G_H) \int_{0}^T  \lambda_{W,B} dt-\int_{0}^T \frac{1}{\Lb}  \sum_{i=1}^H  \left\|  J_{(i,H)}\vect[\nabla_{(H)}L(W,B)] \right\|_2^{2}dt}
\\ & =\Lb_0 e^{- 4 \sigma^2_{\min}(\tilde G_H) \int_{0}^T  \lambda_{W,B} dt-\int_{0}^T \frac{1}{\Lb}  \sum_{i=1}^H  \left\|   J_{(i,H)}\vect[\nabla_{(H)}L(W,B)] \right\|_2^{2}dt}
\end{align*}
By recalling the definition of $\Lb=L(W,B) -L^*_{H}$ and that  $\frac{d}{dt} \Lb \le0$, we have that if $L(W_T,B_T) -L^*_{H}> 0$, then $L(W_t,B_t) -L^*_{H}>0$ for all $t \in [0, T]$, and
\begin{align} \label{eq:new:19}
L(W_T,B_T) -L^*_{H}\le (L(W_0,B_0) -L^*_{H})
 e^{- 4 \sigma^2_{\min}(\tilde G_H) \int_{0}^T  \lambda_{W_{t},B_{t}} dt-\int_{0}^T \frac{1}{L(W_t,B_t) -L^*_{H}}  \sum_{i=1}^H  \left\|   J_{(i,H)}\vect[\nabla_{(H)}L(W_{t},B_{t})] \right\|_2^{2}dt}. 
\end{align}
Using the property of Kronecker product, 
$$
\lambda_{\min}([(B_{(H),t} \dots B_{(1),t})\T B_{(H),t} \cdots B_{(1),t} \otimes I_{m_{y}}])=\lambda_{\min}((B_{(H),t} \dots B_{(1),t})\T B_{(H),t} \cdots B_{(1),t}), 
$$ which implies that  $\lambda^{(H)}_T= \inf_{t \in[0,T]}  \lambda_{W_t,B_t}$. Thus, by noticing that $\int_{0}^T \frac{1}{L(W_t,B_t) -L^*_{H}}  \sum_{i=1}^H  \left\|    J_{(i,H)}\vect[\nabla_{(H)}L(W_{t},B_{t})] \right\|_2^{2}dt \ge 0$, equation \eqref{eq:new:19} implies that
\begin{align*}
L(W_T,B_T) -L^*_{H} &\le (L(W_0,B_0) -L^*_{H})
 e^{- 4\lambda^{(H)}_T \sigma^2_{\min}(\tilde G_H)T-\int_{0}^T \frac{1}{L(W_t,B_t) -L^*_{H}}  \sum_{i=1}^H  \left\|   J_{(i,H)}\vect[\nabla_{(H)}L(W_{t},B_{t})] \right\|_2^{2}dt} 
\\ & \le(L(W_0,B_0) -L^*_{H})
 e^{- 4\lambda^{(H)}_T \sigma^2_{\min}(\tilde G_H)T} 
 \\ & =(L(W_0,B_0) -L^*_{H})
 e^{- 4\lambda^{(H)}_T \sigma^2_{\min}(X (S^{H})_{*\Ical})T} 
\end{align*}
\qed

\subsection{Proof of Proposition~\ref{prop:2}}
\label{sec:proof_prop:2}

From Definition \ref{def:4}, we have that  $\sigma_{\min}( \bB^{(1:H)})=\sigma_{\min}(B_{(H)}B_{(H-1)} \cdots B_{(1)}) \ge \gamma$ for all $(W,B)$ such that $L(W,B)\le  L(W_0, B_0)$.
From equation \eqref{eq:5} in the proof of Theorem~\ref{thm:6}, it holds that
$
\frac{d}{dt} L(W_{t},B_{t}) \le 0
$     
for all $t$. Thus, we have that $L(W_{t},B_{t})\le  L(W_0, B_0)$ and hence  $\sigma_{\min}( \bB^{(1:H)}_t) \ge \gamma$ for all $t$.  Under this problem setting ($m_H\ge m_x$),  this implies that $\lambda_{\min}(( \bB^{(1:H)}_t)\T  \bB^{(1:H)}_t) \ge \gamma^{2}$ for all $t$ and thus $\lambda_T^{(H)}\ge\gamma^{2}$. 


\subsection{Proof of Proposition~\ref{prop:1}}
\label{sec:complete}
We first give the complete version of Proposition~\ref{prop:1}. Proposition~\ref{prop:complete} is the formal version of Proposition~\ref{prop:1} and shows that our singular margin generalizes deficiency margin proposed in \citet{arora2019convergence}. Using the deficiency margin assumption, \citet{arora2019convergence} analyzed the following optimization problem: \begin{align}\label{eq:new:20}
\mini_{\tilde W} \tilde L(\tilde W_{(1)},\dots, \tilde W_{(H+1)}): & =\frac{1}{2}\|\tilde W_{(H+1)}\tilde W_{(H)}\cdots \tilde W_{(1)}- \tilde \Phi\|^{2}_F
\\ &=\frac{1}{2}\|\tilde W_{(1)}\T \tilde W_{(2)}\T \cdots\tilde W_{(H+1)}\T\ -\tilde\Phi\T\|^{2}_F,
\end{align}
where $\tilde\Phi\in \RR^{\tilde m_y\times \tilde m_x}$ is a target matrix and the last equality follows from $\|M\|_F=\|M\T\|_F$ for any matrix $M$ by the definition of  the Frobenius  norm. Therefore, this optimization problem \eqref{eq:new:20} from the previous work is equivalent to the following optimization problem in our notation:
\begin{align}
\mini_{W,B} L(W,B):=\frac{1}{2}\|W B_{(H)}B_{(H-1)}\cdots B_{(1)}-\Phi\|^{2}_F,
\end{align}
where $W B_{(H)}B_{(H-1)}\cdots B_{(1)}=\tilde W_{(H+1)}\tilde W_{(H)}\cdots \tilde W_{(1)}$ (i.e., $W=\tilde W_{(H+1)}$ with $B_{(l)}=\tilde W_{(l)}$) and $\Phi= \tilde \Phi$ if $\tilde m_y \ge  \tilde m_x$, and  $W B_{(H)}B_{(H-1)}\cdots B_{(1)}=\tilde  W_{(1)}\T \tilde W_{(2)}\T  \cdots\tilde W_{(H+1)}\T$ (i.e., $W=\tilde  W_{(1)}\T$ with $B_{(l)}=\tilde W_{(H+2-l)}\T$) and $\Phi= \tilde \Phi\T$ if $\tilde m_y < \tilde m_x$. That is, we have $\Phi \in \RR^{m_y \times m_x}$ where $m_y = \tilde m_y$ with $m_{x} = \tilde m_x$ if  $\tilde  m_y \ge \tilde m_x$, and $m_y = \tilde m_x$ with $m_{x} = \tilde m_y$ if  $\tilde  m_y < \tilde m_{x}$. Therefore,  our general problem framework with graph structures can be reduced and applicable to  the previous optimization problem without graph structures by setting  $\frac{1}{n}XX\T =I$, $S=I$, $\Ical = [n]$,  $f(X, W, B)=WB_{(H)}B_{(H-1)} \cdots B_{(1)}$, and $\ell(q,\Phi)=\frac{1}{2}\|q-\Phi\|^{2}_F$ where $\Phi \in \RR^{m_y \times m_x}$ is a target matrix with $m_y \ge m_x$ without loss of generality. An initialization $(W_0,B_0)$ is said to have deficiency margin $c>0$ if the end-to-end matrix $W_0 \bB^{(1:H)}_0$ of the initialization $(W_0,B_0)$ has deficiency margin $c>0$ with respect to the target $\Phi$ \citep[Definition 2]{{arora2019convergence}}: i.e., \citet{arora2019convergence} assumed that the initialization $(W_0,B_0)$ has deficiency margin $c>0$ (as it is also invariant to the transpose of $\tilde W_{(H+1)}\tilde W_{(H)}\cdots \tilde W_{(1)}- \tilde \Phi$).

\begin{proposition} \label{prop:complete}
Consider  the optimization problem in \citep{{arora2019convergence}} by setting $\frac{1}{n}XX\T =I$, $S=I$, $\Ical = [n]$,  $f(X, W, B)=WB_{(H)}B_{(H-1)} \cdots B_{(1)}$, and $\ell(q,\Phi)=\frac{1}{2}\|q-\Phi\|^{2}_F$ where $\Phi \in \RR^{m_y \times m_x}$ is a target matrix with $m_y \ge m_x$ without loss of generality (since the transpose of these two dimensions leads to the equivalent optimization problem under this setting: see above). Then, if an initialization $(W_0,B_0)$ has 
deficiency margin $c>0$,  it has  singular margin $\gamma>0$. \end{proposition}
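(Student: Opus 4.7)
The plan is to convert the deficiency margin hypothesis, which controls the end-to-end product $W\bB^{(1:H)}$, into a uniform lower bound on $\sigma_{\min}(\bB^{(1:H)})$ across the sublevel set $\{(W,B): L(W,B) \leq L(W_0, B_0)\}$ required by Definition~\ref{def:4}. The argument proceeds in three steps.

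First, I will unpack the deficiency margin: the initialization $(W_0, B_0)$ having deficiency margin $c > 0$ means $\|W_0 \bB^{(1:H)}_0 - \Phi\|_F \leq \sigma_{\min}(\Phi) - c$ in the form used by \citet{arora2019convergence}. Since $L(W,B) = \tfrac{1}{2}\|W\bB^{(1:H)} - \Phi\|_F^{2}$ under the problem setup described in the statement, the condition $L(W,B) \leq L(W_0, B_0)$ yields $\|W\bB^{(1:H)} - \Phi\|_F \leq \|W_0\bB^{(1:H)}_0 - \Phi\|_F \leq \sigma_{\min}(\Phi) - c$. Invoking Weyl's inequality for singular values, $|\sigma_{\min}(M) - \sigma_{\min}(\Phi)| \leq \|M - \Phi\|_2 \leq \|M - \Phi\|_F$, with $M = W\bB^{(1:H)}$, I obtain the uniform lower bound $\sigma_{\min}(W\bB^{(1:H)}) \geq c$ on the whole sublevel set.

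Second, I will pass from the end-to-end product back to the intermediate factor $\bB^{(1:H)}$. Using the multiplicative inequality $\sigma_{\min}(W\bB^{(1:H)}) \leq \sigma_{\max}(W)\cdot\sigma_{\min}(\bB^{(1:H)})$, obtained by applying $W$ to a right singular vector of $\bB^{(1:H)}$ achieving its smallest singular value, I rearrange to $\sigma_{\min}(\bB^{(1:H)}) \geq c/\sigma_{\max}(W)$. To turn this into a uniform $\gamma > 0$ I need an upper bound on $\sigma_{\max}(W)$ valid across the sublevel set. This is the main obstacle, since \emph{a priori} one can trade off a large $\|W\|$ against a small $\|\bB^{(1:H)}\|$ while keeping $W\bB^{(1:H)}$ close to $\Phi$. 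To handle this I will invoke the balancedness-type invariant that is part of the \citet{arora2019convergence} framework, which ties together singular values of adjacent factors and, combined with the bound $\|W\bB^{(1:H)}\|_F \leq \|\Phi\|_F + (\sigma_{\min}(\Phi) - c)$ from step one, controls $\sigma_{\max}(W)$ by a finite quantity $B_W$ depending only on $\sigma_{\max}(\Phi)$, $c$, and the balancedness parameter.

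Finally, I will assemble the pieces by setting $\gamma := c/B_{W}$, which is strictly positive, and verifying Definition~\ref{def:4}: for every $(W,B)$ in the sublevel set, step one yields $\sigma_{\min}(W\bB^{(1:H)}) \geq c$, and step two then yields $\sigma_{\min}(\bB^{(1:H)}) \geq c/\sigma_{\max}(W) \geq c/B_W = \gamma$. The crux is the uniform upper bound on $\sigma_{\max}(W)$ in the second step; extracting it from the structural conventions built into the \citet{arora2019convergence} optimization problem is where the real work lies, since a naive sublevel-set argument alone does not suffice.
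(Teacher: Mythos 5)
Your first two steps are sound: unpacking the deficiency margin via Weyl's inequality to obtain $\sigma_{\min}(W\bB^{(1:H)})\ge c$ on the whole sublevel set reproduces Claim 1 of \citet{arora2019convergence}, which the paper simply cites, and the inequality $\sigma_{\min}(W\bB^{(1:H)})\le \sigma_{\max}(W)\,\sigma_{\min}(\bB^{(1:H)})$ is correct (once one knows $m_H\ge m_x$). The gap is exactly where you flagged it, and it is not repairable by the route you propose: there is no uniform upper bound $B_W$ on $\sigma_{\max}(W)$ over the sublevel set. The set $\{(W,B): L(W,B)\le L(W_0,B_0)\}$ is invariant under the rescaling $W\mapsto tW$, $B_{(H)}\mapsto t^{-1}B_{(H)}$ for any $t>0$, since the end-to-end product and hence the loss are unchanged; along this family $\sigma_{\max}(W)\to\infty$ while $\sigma_{\min}(\bB^{(1:H)})\to 0$. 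Balancedness cannot rescue this: it is a property (approximately) preserved along gradient-flow trajectories started from (approximately) balanced initializations, whereas Definition~\ref{def:4} quantifies over \emph{every} point of the sublevel set, including the rescaled points above, which are arbitrarily unbalanced. So no $\gamma=c/B_W$ can be extracted this way.

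The paper takes a different, non-quantitative route that sidesteps $\|W\|$ entirely: from $\sigma_{\min}(W\bB^{(1:H)})\ge c>0$ it concludes $\rank(W\bB^{(1:H)})=\min(m_y,m_x)=m_x$, and since $\rank(MM')\le\min(\rank(M),\rank(M'))$ this forces $m_H\ge m_x$ and $\rank(\bB^{(1:H)})=m_x$, hence $\sigma_{\min}(\bB^{(1:H)})>0$ at every point of the sublevel set, with no control on $\sigma_{\max}(W)$ needed. The same rescaling observation shows that any resulting margin $\gamma$ cannot be a function of $c$ alone, so a quantitative bound of the kind you are after is not available; the rank/positivity argument is what lets the paper avoid the $\sigma_{\max}(W)$ bound that your plan hinges on. To salvage your approach, replace your third step with this rank argument rather than seeking a uniform bound on $\|W\|$.
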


\begin{proof}[Proof of Proposition~\ref{prop:complete}]
By the definition of the deficiency margin \citep[Definition 2]{{arora2019convergence}} and its consequence \citep[Claim 1]{{arora2019convergence}}, if an initialization $(W_0,B_0)$ has 
deficiency margin $c>0$,  then any pair $(W,B)$ for which $L(W,B)\le L(W_0,B_0) $ satisfies $\sigma_{\min}(WB_{(H)}B_{(H-1)} \cdots B_{(1)}) \ge c>0$. Since the number of nonzero singular values is equal to the matrix rank, this implies that $\rank(WB_{(H)}B_{(H-1)} \cdots B_{(1)})\ge \min(m_y,m_x)$ for any pair $(W,B)$ for which $L(W,B)\le L(W_0,B_0)$.
Since $\rank(MM') \le \min (\rank(M), \rank(M'))$, this implies that 
\begin{align}\label{eq:new:21}
m_H \ge \min(m_y,m_x)=m_{x},
\end{align}  
(as well as $m_l \ge \min(m_y,m_x)$ for all $l$), and that for any pair $(W,B)$ for which $L(W,B)\le L(W_0,B_0)$, 
\begin{align}
m_{x}=\min(m_y,m_x)\le \rank(WB_{(H)}B_{(H-1)} \cdots B_{(1)}) & \le\min (\rank(W), \rank(B_{(H)}B_{(H-1)} \cdots B_{(1)}))
\\ & \le\rank(B_{(H)}B_{(H-1)} \cdots B_{(1)}) \le m_x.  
\end{align}  
This shows that $\rank(B_{(H)}B_{(H-1)} \cdots B_{(1)})=m_x$ for any pair $(W,B)$ for which $L(W,B)\le L(W_0,B_0)$.
Since $m_H \ge m_{x}$ from \eqref{eq:new:21} and the number of nonzero singular values is equal to the matrix rank, this implies that $\sigma_{\min}(B_{(H)}B_{(H-1)} \cdots B_{(1)})\ge \gamma $ for some $\gamma >0$ for any pair $(W,B)$ for which $L(W,B)\le L(W_0,B_0)$.
Thus, if an initialization $(W_0,B_0)$ has 
deficiency margin $c>0$, then it has  singular margin $\gamma>0$.

\end{proof}


\subsection{Proof of Theorem~\ref{thm:1}}
\label{sec:proof_thm:1}

This section completes the proof of Theorem~\ref{thm:1}. We compute the derivatives of the  output of multiscale linear GNN  with respect to the parameters $ W_{(l)}$ and $B_{(l)}$ in Appendix \ref{sec:new:4}. Then using these derivatives, we compute the gradient of the loss with respect to $ W_{(l)}$   in Appendix \ref{sec:new:5} and  $B_{(l)}$ in Appendix \ref{sec:new:6}. We then rearrange the formula of the gradients such that they are related to the formula of $\nabla_{(l)}L(W,B)$ in  Appendices \ref{sec:new:7}. Using the proven relation, we  first analyze the  dynamics induced in the space of $W_{(l)} B_{(l)}B_{(l-1)} \cdots B_{(1)}$ in Appendix \ref{sec:new:8}, and then    the  dynamics induced int the space of loss value $L(W,B)$ in Appendix \ref{sec:new:9}. Finally, we complete the proof by using the assumption of using the square loss in Appendices \ref{sec:new:10}--\ref{sec:new:11}. In the following, we first prove the statement for the case of $\Ical=[n]$ for the  simplicity of notation and then prove the statement for the general case afterwards.

\subsubsection{Derivation of  formula  for  $\frac{\partial\vect[ \hat Y]}{\partial\vect[ W_{(l)}]} \in \RR^{m_y n \times m_y m_l}$ and $\frac{\partial\vect[ \hat Y]}{\partial\vect[B_{(l)}]} \in \RR^{m_y n \times m_l m_{l-1}}$} \label{sec:new:4}
We can easily compute $\frac{\partial\vect[ \hat Y]}{\partial\vect[ W_{(l)}]}$ by using the property of the Kronecker product as follows: 
\begin{align} \label{eq:new:8}
\frac{\partial \vect[\hat Y]}{\partial \vect[W_{(l)}]} =\frac{\partial}{\partial \nonumber \vect[W_{(l)}]} \sum_{k=0}^H \vect[W_{(k)} X_{(k)} ]&=\frac{\partial}{\partial \vect[W_{(l)}]} \sum_{k=0}^H  [X_{(k)}\T \otimes I_{m_{y}} ] \vect[W_{(k)}]
\\ & =[X_{(l)}\T \otimes I_{m_{y}} ]\in \RR^{m_y n \times m_y m_l}    
\end{align}
We now compute $\frac{\partial\vect[ \hat Y]}{\partial\vect[B_{(l)}]}$ by using the chain rule and  the property of the Kronecker product as follows:    

\begin{align*}
\frac{\partial \vect[\hat Y]}{\partial \vect[B_{(l)}]} &=\frac{\partial}{\partial \vect[B_{(l)}]} \sum_{k=0}^H \vect[W_{(k)} X_{(k)} ]
\\ & =\frac{\partial}{\partial \vect[B_{(l)}]} \sum_{k=0}^H  [I_{n}\otimes W_{(k)}  ] \vect[X_{(k)}]    
\\ & = \sum_{k=0}^H  [I_{n}\otimes W_{(k)}  ] \frac{\partial\vect[X_{(k)}]}{\partial \vect[B_{(l)}]} 
\\ & = \sum_{k=l}^H  [I_{n}\otimes W_{(k)}  ] \frac{\partial\vect[X_{(k)}]}{\partial \vect[X_{(l)}]} \frac{\partial\vect[X_{(l)}]}{\partial \vect[B_{(l)}]}
\\ & = \sum_{k=l}^H  [I_{n}\otimes W_{(k)}  ] \frac{\partial\vect[X_{(k)}]}{\partial \vect[X_{(l)}]} \frac{\partial\vect[B_{(l)} X_{(l-1)} S]}{\partial \vect[B_{(l)}]}
\\ & = \sum_{k=l}^H  [I_{n}\otimes W_{(k)}  ] \frac{\partial\vect[X_{(k)}]}{\partial \vect[X_{(l)}]} \frac{\partial[(X_{(l-1)} S)\T \otimes I_{m_{l}} ]\vect[B_{(l)} ]}{\partial \vect[B_{(l)}]}
\\ & = \sum_{k=l}^H  [I_{n}\otimes W_{(k)}  ] \frac{\partial\vect[X_{(k)}]}{\partial \vect[X_{(l)}]} [(X_{(l-1)} S)\T \otimes I_{m_{l}} ]
\end{align*}
Here, for any $k \ge 1$, 
$$
\vect[X_{(k)}] =\vect[B_{(k)} X_{(k-1)} S ]=\vect[S \T \otimes B_{(k)}   ] \vect[X_{(k-1)}]. 
$$
By recursively applying this, we have that  for any $k \ge l$,
\begin{align*}
\vect[X_{(k)}] &=\vect[S \T \otimes B_{(k)}   ]\vect[S \T \otimes B_{(k-1)}] \cdots \vect[S \T \otimes B_{(l+1)}] \vect[X_{(l)}] 
\\ & =\vect[(S^{k-l}) \T \otimes B_{(k)}   B_{(k-1)}\cdots B_{(l+1)}]\vect[X_{(l)}],
\end{align*}
where $S^{0}:=I_n$ and 
$$
B_{(k)}   B_{(k-1)}\cdots B_{(l+1)}:=I_{m_{l}} \quad \text{ if $k=l$.}
$$  
Therefore, 
 
\begin{align*}
\frac{\partial\vect[X_{(k)}]}{\partial \vect[X_{(l)}]} &=\vect[(S^{k-l}) \T \otimes B_{(k)}   B_{(k-1)}\cdots B_{(l+1)}].
\\ &  
\end{align*}

Combining the above equations yields
\begin{align} \label{eq:new:9}
\nonumber \frac{\partial \vect[\hat Y]}{\partial \vect[B_{(l)}]} & = \sum_{k=l}^H  [I_{n}\otimes W_{(k)}  ] \frac{\partial\vect[X_{(k)}]}{\partial \vect[X_{(l)}]} [(X_{(l-1)} S)\T \otimes I_{m_{l}} ] 
\\ \nonumber &  = \sum_{k=l}^H  [I_{n}\otimes W_{(k)}  ] \vect[(S^{k-l}) \T \otimes B_{(k)}   B_{(k-1)}\cdots B_{(l+1)}] [(X_{(l-1)} S)\T \otimes I_{m_{l}} ] \\ &  = \sum_{k=l}^H    [(X_{(l-1)} S^{k-l+1})\T \otimes  W_{(k)}B_{(k)}   B_{(k-1)}\cdots B_{(l+1)}  ]\in \RR^{m_y n \times m_l m_{l-1}}.
\end{align}

\subsubsection{Derivation of a formula  of $\nabla_{ W_{(l)}}L(W,B) \in \RR^{ m_y \times m_l}$} \label{sec:new:5}
Using the chain rule and \eqref{eq:new:8}, we have that
$$
\frac{\partial L(W,B)}{\partial\vect[ W_{(l)}]} = \frac{\partial L(W,B)}{\partial\vect[ \hat Y]}\frac{\partial\vect[ \hat Y]}{\partial\vect[ W_{(l)}]} =\frac{\partial L(W,B)}{\partial\vect[ \hat Y]}[X_{(l)}\T \otimes I_{m_{y}} ]. 
$$
Thus, with $\frac{\partial L(W,B)}{\partial \hat Y}\in \RR^{m_y \times n}$,
by using \begin{align*}
\nabla_{ \vect[W_{(l)}]}L(W,B) &= \left(\frac{\partial L(W,B)}{\partial\vect[ W_{(l)}]}\right)\T 
\\ &=[X_{(l)} \otimes I_{m_{y}} ] \left(\frac{\partial L(W,B)}{\partial\vect[ \hat Y]} \right)\T   
\\ & =[X_{(l)} \otimes I_{m_{y}} ]\vect\left[\frac{\partial L(W,B)}{\partial \hat Y} \right] 
\\ & = \vect\left[ \frac{\partial L(W,B)}{\partial \hat Y} X_{(l)}\T \right] \in \RR^{m_ym_l}.
\end{align*}
Therefore,
\begin{align} \label{eq:new:10}
\nabla_{ W_{(l)}}L(W,B) &=\frac{\partial L(W,B)}{\partial \hat Y} X_{(l)}\T\in \RR^{m_y \times m_l}. 
\end{align}

\subsubsection{Derivation of a formula  of $\nabla_{ B_{(l)}}L(W,B) \in \RR^{ m_l \times m_{l-1}}$} \label{sec:new:6}
Using the chain rule and \eqref{eq:new:9}, we have that

$$
\frac{\partial L(W,B)}{\partial\vect[B_{(l)}]} = \frac{\partial L(W,B)}{\partial\vect[ \hat Y]}\frac{\partial\vect[ \hat Y]}{\partial\vect[B_{(l)}]} =\frac{\partial L(W,B)}{\partial\vect[ \hat Y]}\sum_{k=l}^H    [(X_{(l-1)} S^{k-l+1})\T \otimes  W_{(k)}B_{(k)}   B_{(k-1)}\cdots B_{(l+1)}  ]. 
$$
Thus, with $\frac{\partial L(W,B)}{\partial \hat Y}\in \RR^{m_y \times n}$,

\begin{align*}
\nabla_{ \vect[B_{(l)}]}L(W,B) &= \left(\frac{\partial L(W,B)}{\partial\vect[B_{(l)}]}\right)\T 
\\ &=\sum_{k=l}^H    [X_{(l-1)} S^{k-l+1} \otimes  (W_{(k)}B_{(k)}   B_{(k-1)}\cdots B_{(l+1)}  )\T] \left(\frac{\partial L(W,B)}{\partial\vect[ \hat Y]} \right)\T   
\\ & =\sum_{k=l}^H    [X_{(l-1)} S^{k-l+1} \otimes  (W_{(k)}B_{(k)}   B_{(k-1)}\cdots B_{(l+1)}  )\T]\vect\left[\frac{\partial L(W,B)}{\partial \hat Y} \right] 
\\ & =\sum_{k=l}^H     \vect\left[ (W_{(k)}B_{(k)}   B_{(k-1)}\cdots B_{(l+1)}  )\T\frac{\partial L(W,B)}{\partial \hat Y} (X_{(l-1)} S^{k-l+1} )\T \right] \in \RR^{m_lm_{l-1}}.
\end{align*}
Therefore,  
\begin{align} \label{eq:new:11}
\nabla_{B_{(l)}}L(W,B) =\sum_{k=l}^H (W_{(k)}B_{(k)}   B_{(k-1)}\cdots B_{(l+1)}  )\T\frac{\partial L(W,B)}{\partial \hat Y} (X_{(l-1)} S^{k-l+1} )\T \in \RR^{m_l \times m_{l-1}}.
\end{align}

\subsubsection{Relating gradients to $\nabla_{(l)}L$} \label{sec:new:7}
We now relate the gradients of the loss to $\nabla_{(l)}L$, which is defined by  
$$
\nabla_{(l)}L(W,B):=\frac{\partial L(W,B)}{\partial \hat Y} (X S^{l} )\T \in \RR^{m_y \times m_x}. 
$$
By using this  definition and \eqref{eq:new:10}, we have that
\begin{align*}
\nabla_{ W_{(l)}}L(W,B) &=\frac{\partial L(W,B)}{\partial \hat Y} X_{(l)}\T
\\ & =\frac{\partial L(W,B)}{\partial \hat Y} (B_{(l)} X_{(l-1)} S)\T
\\ & =\frac{\partial L(W,B)}{\partial \hat Y} (B_{(l)} B_{(l-1)}\dots B_{(1)}XS^{l})\T
\\ & =\nabla_{(l)}L(W,B)(B_{(l)} B_{(l-1)}\dots B_{(1)})\T, 
\end{align*}
where $B_{(l)} B_{(l-1)}\dots B_{(1)}:=I_{m_x}$ if $l=0$.
Similarly,  by using the  definition and \eqref{eq:new:11},
\begin{align*}
\nabla_{B_{(l)}}L(W,B) & =\sum_{k=l}^H (W_{(k)}B_{(k)}   B_{(k-1)}\cdots B_{(l+1)}  )\T\frac{\partial L(W,B)}{\partial \hat Y} (X_{(l-1)} S^{k-l+1} )\T 
\\ & =\sum_{k=l}^H (W_{(k)}B_{(k)}   B_{(k-1)}\cdots B_{(l+1)}  )\T\frac{\partial L(W,B)}{\partial \hat Y} (B_{(l-1)}B_{(l-2)}\dots B_{(1)}XS^{l-1} S^{k-l+1} )\T  
\\ & =\sum_{k=l}^H (W_{(k)}B_{(k)}   B_{(k-1)}\cdots B_{(l+1)}  )\T\frac{\partial L(W,B)}{\partial \hat Y} (B_{(l-1)}B_{(l-2)}\dots B_{(1)}XS^{k} )\T
\\ & =\sum_{k=l}^H (W_{(k)}B_{(k)}   B_{(k-1)}\cdots B_{(l+1)}  )\T\nabla_{(k)}L(W,B)(B_{(l-1)}B_{(l-2)}\dots B_{(1)})\T    
\end{align*}
where $B_{(l-1)}B_{(l-2)}\dots B_{(1)}:=I_{m_x}$ if $l=1$. In summary thus far,
we have that
\begin{align}\label{eq:new:12}
\nabla_{ W_{(l)}}L(W,B) &=\nabla_{(l)}L(W,B)(B_{(l)} B_{(l-1)}\dots B_{(1)})\T  \in  \RR^{m_y \times m_l}, 
\end{align}
and
\begin{align} \label{eq:new:13}
\nabla_{B_{(l)}}L(W,B) &=\sum_{k=l}^H (W_{(k)}B_{(k)}   B_{(k-1)}\cdots B_{(l+1)}  )\T\nabla_{(k)}L(W,B)(B_{(l-1)}B_{(l-2)}\dots B_{(1)})\T\in \RR^{m_l \times m_{l-1}}, 
\end{align}
where  $\nabla_{(l)}L(W,B):=\frac{\partial L(W,B)}{\partial \hat Y} (X S^{l} )\T \in \RR^{m_y \times m_x}$, $B_{(k)}   B_{(k-1)}\cdots B_{(l+1)}:=I_{m_{l}}$ if $k=l$,  $B_{(l)} B_{(l-1)}\dots B_{(1)}:=I_{m_x}$ if $l=0$, and  $B_{(l-1)}B_{(l-2)}\dots B_{(1)}:=I_{m_x}$ if $l=1$. 

\subsubsection{Dynamics induced in the space of $W_{(l)} B_{(l)}B_{(l-1)} \cdots B_{(1)}$} \label{sec:new:8}
We now consider the Dynamics induced in the space of $W_{(l)} B_{(l)}B_{(l-1)} \cdots B_{(1)}$. We first consider the following discrete version of the dynamics:
$$
W_{(l)}' = W_{(l)} -\alpha \nabla_{ W_{(l)}}L(W,B)
$$
$$
B_{(l)}' =B_{(l)} -\alpha \nabla_{ B_{(l)}}L(W,B).
$$
This dynamics induces the following dynamics:
$$
W_{(l)}' B_{(l)}'B_{(l-1)}' \cdots B_{(1)}' =( W_{(l)} -\alpha \nabla_{ W_{(l)}}L(W,B))(B_{(l)} -\alpha \nabla_{ B_{(l)}}L(W,B))  \cdots (B_{(1)} -\alpha \nabla_{ B_{(1)}}L(W,B)). 
$$
Define 
$$
Z_{(l)}:=W_{(l)} B_{(l)}B_{(l-1)} \cdots B_{(1)}
$$
and
$$
Z_{(l)}':=W_{(l)}' B_{(l)}'B_{(l-1)}' \cdots B_{(1)}'.
$$
Then, we can rewrite
$$
Z_{(l)}' =( W_{(l)} -\alpha \nabla_{ W_{(l)}}L(W,B))(B_{(l)} -\alpha \nabla_{ B_{(l)}}L(W,B))  \cdots (B_{(1)} -\alpha \nabla_{ B_{(1)}}L(W,B)). 
$$
By expanding the multiplications, this can be written as: 
\begin{align*}
 Z_{(l)}' 
= Z_{(l)}-\alpha  \nabla_{ W_{(l)}}L(W,B)B_{(l)} \cdots B_{(1)} -  \alpha\sum_{i=1}^l  W_{(l)} B_{(l)} \cdots B_{(i+1)}\nabla_{ B_{(i)}}L(W,B)B_{(i-1)} \cdots B_{(1)} + O(\alpha^2)
\end{align*}
By vectorizing both sides,
\begin{align*}
& \vect[Z_{(l)}']-\vect[Z_{(l)} ] 
\\ & = -\alpha \vect[ \nabla_{ W_{(l)}}L(W,B)B_{(l)} \cdots B_{(1)}]
  -  \alpha\sum_{i=1}^l   \vect[W_{(l)} B_{(l)} \cdots B_{(i+1)}\nabla_{ B_{(i)}}L(W,B)B_{(i-1)} \cdots B_{(1)} ]+ O(\alpha^2)
\end{align*}
Here, using the  formula of $\nabla_{ W_{(l)}}L(W,B)$ and $\nabla_{B_{(l)}}L(W,B)$, we have that
\begin{align*}
\vect[\nabla_{ W_{(l)}}L(W,B)B_{(l)} \cdots B_{(1)} ] &=\vect[\nabla_{(l)}L(W,B)(B_{(l)} \dots B_{(1)})\T B_{(l)} \cdots B_{(1)}]
\\ & =[(B_{(l)} \dots B_{(1)})\T B_{(l)} \cdots B_{(1)} \otimes I_{m_{y}} ]\vect[\nabla_{(l)}L(W,B)], 
\end{align*}
and
\begin{align*}
&\sum_{i=1}^l  \vect[W_{(l)} B_{(l)} \cdots B_{(i+1)}\nabla_{ B_{(i)}}L(W,B)B_{(i-1)} \cdots B_{(1)}] 
\\ & =\sum_{i=1}^l  \vect\left[W_{(l)} B_{(l)} \cdots B_{(i+1)}\sum_{k=i}^H (W_{(k)}B_{(k)}   \cdots B_{(i+1)}  )\T\nabla_{(k)}L(W,B)(B_{(i-1)}\dots B_{(1)})\T B_{(i-1)} \cdots B_{(1)} \right]
\\ & =\sum_{i=1}^l  \sum_{k=i}^H\vect\left[W_{(l)} B_{(l)} \cdots B_{(i+1)} (W_{(k)}B_{(k)}   \cdots B_{(i+1)}  )\T\nabla_{(k)}L(W,B)(B_{(i-1)}\dots B_{(1)})\T B_{(i-1)} \cdots B_{(1)} \right]
\\ & =\sum_{i=1}^l  \sum_{k=i}^H[(B_{(i-1)}\dots B_{(1)})\T B_{(i-1)} \cdots B_{(1)} \otimes W_{(l)} B_{(l)} \cdots B_{(i+1)} (W_{(k)}B_{(k)}   \cdots B_{(i+1)}  )\T ]\vect\left[\nabla_{(k)}L(W,B) \right].
\end{align*}
Summarizing above,
 \begin{align*}
& \vect[Z_{(l)}']-\vect[Z_{(l)} ] 
\\ & = - \alpha [(B_{(l)} \dots B_{(1)})\T B_{(l)} \cdots B_{(1)} \otimes I_{m_{y}} ]\vect[\nabla_{(l)}L(W,B)]
\\ & \hspace{12pt}  -  \alpha\sum_{i=1}^l  \sum_{k=i}^H[(B_{(i-1)}\dots B_{(1)})\T B_{(i-1)} \cdots B_{(1)} \otimes W_{(l)} B_{(l)} \cdots B_{(i+1)} (W_{(k)}B_{(k)}   \cdots B_{(i+1)}  )\T ]\vect\left[\nabla_{(k)}L(W,B) \right]
  \\  & \hspace{12pt}+ O(\alpha^2)
\end{align*}
Therefore, the induced continuous dynamics of $Z_{(l)}=W_{(l)} B_{(l)}B_{(l-1)} \cdots B_{(1)}$ is 
\begin{align*}
\frac{d}{dt}\vect[Z_{(l)}] = -F_{(l)}\vect[\nabla_{(l)}L(W,B)] -  \sum_{i=1}^l  \sum_{k=i}^H J_{(i,l)}\T J_{(i,k)}\vect\left[\nabla_{(k)}L(W,B) \right]
\end{align*}

where 
$$ 
F_{(l)}=[(B_{(l)} \dots B_{(1)})\T B_{(l)} \cdots B_{(1)} \otimes I_{m_{y}} ],
$$
and
$$
J_{(i,l)}=[B_{(i-1)}\dots B_{(1)}\otimes (W_{(l)} B_{(l)} \cdots B_{(i+1)} )\T  ].
$$
This is because 
\begin{align*}
J_{(i,k)}\T J_{(i,k)} &=[(B_{(i-1)}\dots B_{(1)})\T  \otimes W_{(l)} B_{(l)} \cdots B_{(i+1)} ][B_{(i-1)}\dots B_{(1)}\otimes ( W_{(k)} B_{(k)} \cdots B_{(i+1)})\T ]
\\ & =[(B_{(i-1)}\dots B_{(1)})\T  B_{(i-1)}\dots B_{(1)}\otimes W_{(l)} B_{(l)} \cdots B_{(i+1)}  ( W_{(k)} B_{(k)} \cdots B_{(i+1)})\T ]. 
\end{align*}

\subsubsection{Dynamics induced int the space of loss value $L(W,B)$} \label{sec:new:9}

We now analyze the dynamics induced int the space of loss value $L(W,B)$. Define
$$
L(W,B) := \ell(f(X,W,B), Y),
$$
where $\ell$ is chosen later. 
Using chain rule,
\begin{align*}
\frac{d}{dt} L(W,B) &=\frac{d}{dt} L_{0}(Z_{(H)},\dots,Z_{(0)}) 
\\ & = \sum_{l=0}^H \frac{\partial L_{0}(Z_{(l)},\dots,Z_{(0)}) }{\partial \vect[Z_{(l)}]} \frac{d\vect[Z_{(l)}]}{d t },
\end{align*}
where  
$$
L_{0}(Z_{(H)},\dots,Z_{(0)})=\ell(f_{0}(X,Z), Y), \ \ f_0(X,Z)=\sum_{l=0}^H Z_{(l)} XS^l, \  \text{ and }  Z_{(l)}=W_{(l)} B_{(l)}B_{(l-1)} \cdots B_{(1)}.
$$
Since $f_{0}(X,Z)=f(X,W,B)=\hat Y$ and $L_{0}(Z_{(H)},\dots,Z_{(0)})=L(W,B)$,
\begin{align*}
\left(\frac{\partial L_{0}(Z_{(l)},\dots,Z_{(0)})}{\partial \vect[Z_{(l)}]}\right)\T &=\left(\frac{\partial L(W,B)}{\partial \vect[\hat Y]} \frac{\partial \vect[\hat Y]}{\partial  \vect[Z_{(l)}]} \right)\T
\\ & = \left(\frac{\partial L(W,B)}{\partial \vect[\hat Y]} \left(\frac{\partial }{\partial  \vect[Z_{(l)}]} \sum_{k=0}^H [(XS^k)\T \otimes I_{m_y}]\vect[Z_{(k)}]  \right) \right)\T
\\ & =    [XS^l\otimes I_{m_y}]\vect\left[\frac{\partial L(W,B)}{\partial \hat Y} \right]   
\\ & =    \vect\left[\frac{\partial L(W,B)}{\partial \hat Y} (XS^l)\T \right] \\ & = \vect[\nabla_{(l)}L(W,B)]     
\end{align*}
Therefore,
\begin{align*}
& \frac{d}{dt} L(W,B)
\\  &= \sum_{l=0}^H \vect[\nabla_{(l)}L(W,B)]\T \frac{d\vect[Z_{(l)}]}{d t }
\\ & = - \sum_{l=0}^H\vect[\nabla_{(l)}L(W,B)]\T F_{(l)}\vect[\nabla_{(l)}L(W,B)] -\sum_{l=1}^H  \sum_{i=1}^l  \sum_{k=i}^H \vect[\nabla_{(l)}L(W,B)]\T J_{(i,l)}\T J_{(i,k)}\vect\left[\nabla_{(k)}L(W,B) \right] 
\end{align*}
To simplify the second term, define $M_{(l,i)}=\sum_{k=i}^H\vect[\nabla_{(l)}L(W,B)]\T J_{(i,l)}\T J_{(i,k)}\vect\left[\nabla_{(k)}L(W,B) \right]$ and note
that we can expand the double sums and regroup terms as follows: \begin{align*}
\sum_{l=1}^H  \sum_{i=1}^l M_{(l,i)} =\sum_{l=1}^H   M_{(l,1)} +\sum_{l=2}^H   M_{(l,2)} +  \cdots +\sum_{l=H}^H   M_{(l,H)} = \sum_{i=1}^H \sum_{l=i}^H M_{(l,i)}. 
\end{align*}
Moreover, for each $i \in \{1,\dots,H\}$,
\begin{align*}
 \sum_{l=i}^H M_{(l,i)} & =\sum_{l=i}^H \sum_{k=i}^H\vect[\nabla_{(l)}L(W,B)]\T J_{(i,l)}\T J_{(i,k)}\vect\left[\nabla_{(k)}L(W,B) \right]
\\ & = \left(\sum_{l=i}^H   J_{(i,l)}\vect[\nabla_{(l)}L(W,B)]\right)\T \left(\sum_{k=i}^H J_{(i,k)}\vect\left[\nabla_{(k)}L(W,B) \right]\right)
\\ & = \left\| \sum_{l=i}^H   J_{(i,l)}\vect[\nabla_{(l)}L(W,B)] \right\|_2^2
\end{align*}
Using these facts, the second term can be simplified as
\begin{align*}
&\sum_{l=1}^H  \sum_{i=1}^l  \sum_{k=i}^H \vect[\nabla_{(l)}L(W,B)]\T J_{(i,l)}\T J_{(i,k)}\vect\left[\nabla_{(k)}L(W,B) \right]
\\ & =\sum_{l=1}^H  \sum_{i=1}^l M_{(l,i)}
\\ & =\sum_{i=1}^H \sum_{l=i}^H M_{(l,i)} 
\\ & =  \sum_{i=1}^H  \left\| \sum_{l=i}^H   J_{(i,l)}\vect[\nabla_{(l)}L(W,B)] \right\|_2^2
\end{align*}
Combining these,
\begin{align} \label{eq:1}
\frac{d}{dt} L(W,B) &=- \sum_{l=0}^H\vect[\nabla_{(l)}L(W,B)]\T F_{(l)}\vect[\nabla_{(l)}L(W,B)]-  \sum_{i=1}^H  \left\| \sum_{l=i}^H   J_{(i,l)}\vect[\nabla_{(l)}L(W,B)] \right\|_2^{2}
\end{align}
Since $F_{(l)}$ is real symmetric and positive semidefinite,
\begin{align}  \label{eq:proof:1}
\frac{d}{dt} L(W,B) &\le- \sum_{l=0}^H \lambda_{\min}(F_{(l)}) \|\vect[\nabla_{(l)}L(W,B)]\|_2^2-  \sum_{i=1}^H  \left\| \sum_{l=i}^H   J_{(i,l)}\vect[\nabla_{(l)}L(W,B)] \right\|_2^{2}
\end{align}

\subsubsection{Completing the proof by using the assumption of the square loss} \label{sec:new:10}

Using the assumption that $L(W,B)=\ell(f(X,W,B), Y)= \|f(X,W,B)-Y\|^2_F$ with $\hat Y = f(X,W,B)$,
we have$$
\frac{\partial L(W,B)}{\partial \hat Y}=\frac{\partial }{\partial \hat Y}\| \hat Y-Y\|^2_F = 2(\hat Y-Y) \in \RR^{m_y\times n},
$$
and
hence
\begin{align*}
\vect[\nabla_{(l)}L(W,B)]=\vect\left[\frac{\partial L(W,B)}{\partial \hat Y} (X S^{l} )\T \right] =2\vect\left[(\hat Y-Y) (X S^{l} )\T \right] =2 [X S^{l} \otimes I_{m_{y}} ] \vect[\hat Y-Y]. 
\end{align*}
Therefore,
\begin{align} \label{eq:new:15}
\|\vect[\nabla_{(l)}L(W,B)]\|_2^2=4\vect[\hat Y-Y]\T  [(X S^{l})\T X S^{l} \otimes I_{m_{y}} ] \vect[\hat Y-Y].   
\end{align}
We are now ready to complete the proof of  Theorem~\ref{thm:1} for each cases  (i), (ii) and (iii).
 
\subsubsection{Case (I): Completing The Proof of Theorem~\ref{thm:1} (i)}
Using equation \eqref{eq:proof:1} and \eqref{eq:new:15} with $\lambda_{W,B}=\min_{0\le l \le H}\lambda_{\min}(F_{(l)})$,
we have that\begin{align*}
\frac{d}{dt} L(W,B) &\le-\lambda_{W,B} \sum_{l=0}^H  \|\vect[\nabla_{(l)}L(W,B)]\|_2^2-  \sum_{i=1}^H  \left\| \sum_{l=i}^H   J_{(i,l)}\vect[\nabla_{(l)}L(W,B)] \right\|_2^{2}
 \\ &\le-4\lambda_{W,B} \sum_{l=0}^H  \vect[\hat Y-Y]\T  [(X S^{l})\T X S^{l} \otimes I_{m_{y}} ] \vect[\hat Y-Y]-  \sum_{i=1}^H  \left\| \sum_{l=i}^H   J_{(i,l)}\vect[\nabla_{(l)}L(W,B)] \right\|_2^{2}
 \\ &\le-4\lambda_{W,B}  \vect[\hat Y-Y]\T  \left[\left(\sum_{l=0}^H(X S^{l})\T X S^{l}\right) \otimes I_{m_{y}} \right] \vect[\hat Y-Y]-  \sum_{i=1}^H  \left\| \sum_{l=i}^H   J_{(i,l)}\vect[\nabla_{(l)}L(W,B)] \right\|_2^{2}
\\ & =-4\lambda_{W,B}  \vect[\hat Y-Y]\T  \left[ G_H\T G_H\otimes I_{m_{y}} \right] \vect[\hat Y-Y]-  \sum_{i=1}^H  \left\| \sum_{l=i}^H   J_{(i,l)}\vect[\nabla_{(l)}L(W,B)] \right\|_2^2 
\end{align*}
where the last line follows from the following fact:
$$
G_H\T G_H = 
\begin{bmatrix}
X \\
XS \\
\vdots \\
XS^H \\
\end{bmatrix}\T  
\begin{bmatrix}
X \\
XS \\
\vdots \\
XS^H \\
\end{bmatrix}  =\sum_{l=0}^H (X S^{l})\T X S^{l}.  
$$
Decompose $\vect[\hat Y-Y]$ as $\vect[\hat Y-Y] =v+v^\perp$, where $v=\Pb_{G_H\T \otimes I_{m_{y}}  }\vect[\hat Y-Y]$, $v^\perp=(I_{m_{y}n}-\Pb_{G_H\T \otimes I_{m_{y}} })\vect[\hat Y-Y]$, and $\Pb_{G_H\T \otimes I_{m_{y}}} \in \RR^{m_yn\times m_y n}$ represents the
orthogonal projection onto the column space of $G_H\T \otimes I_{m_{y}} \in  \RR^{ m_{y}n \times (H+1)m_{y}m_{x}}$. Then, 
\begin{align*}
 \vect[\hat Y-Y]\T    \left[ G_H\T G_H\otimes I_{m_{y}} \right]  \vect[\hat Y-Y] 
& =(v+v^\perp)\T\left[ G_H\T \otimes I_{m_{y}} \right]     \left[ G_H\otimes I_{m_{y}} \right]    (v + v^\perp)
 \\ & =v\T  \left[ G_H\T \otimes I_{m_{y}} \right]     \left[ G_H\otimes I_{m_{y}} \right] v
\\ & \ge \sigma^2_{\min}(G_H) \|\Pb_{G_H\T \otimes I_{m_{y}}  }\vect[\hat Y-Y]\|^2_2 
\\ & = \sigma^2_{\min}(G_H) \|\Pb_{G_H\T \otimes I_{m_{y}}  }\vect[\hat Y]-\Pb_{G_H\T \otimes I_{m_{y}}  }\vect[Y]\|^2_2
\\ & = \sigma^2_{\min}(G_H) \|\vect[\hat Y]-\Pb_{G_H\T \otimes I_{m_{y}}  }\vect[Y]\pm\vect[ Y]\|^2_2
 \\ & = \sigma^2_{\min}(G_H) \|\vect[\hat Y]-\vect[ Y]+(I_{m_yn}-\Pb_{G_H\T \otimes I_{m_{y}}  })\vect[ Y]\|^2_2
\\ & \ge \sigma^2_{\min}(G_H)(\|\vect[\hat Y-Y]\|_2 - \|(I_{m_yn}-\Pb_{G_H\T \otimes I_{m_{y}}  })\vect[ Y]\|_2 )^2 
\\ & \ge \sigma^2_{\min}(G_H)(\|\vect[\hat Y-Y]\|_2^2 - \|(I_{m_yn}-\Pb_{G_H\T \otimes I_{m_{y}}  })\vect[ Y]\|_2 ^2, 
\end{align*} 
where we used the fact that the singular values of $\left[ G_H\T \otimes I_{m_{y}} \right]$ are products of singular values of $G_H$ and $I_{m_y}$. 

By noticing that $L(W,B)=\|\vect[\hat Y-Y]\|_2^{2}$ and $L^*_{1:H}= \|(I_{m_yn}-\Pb_{G_H\T \otimes I_{m_{y}}  })\vect[ Y]\|_2^2$ ,
$$
\vect[\hat Y-Y]\T    \left[ G_H\T G_H\otimes I_{m_{y}} \right]  \vect[\hat Y-Y] \ge \sigma^2_{\min}(G_H)(L(W,B)-L^*_{1:H}).
$$
Therefore,
\begin{align*}
\frac{d}{dt} L(W,B) &\le -4\lambda_{W,B}  \vect[\hat Y-Y]\T  \left[ G_H\T G_H\otimes I_{m_{y}} \right] \vect[\hat Y-Y]-  \sum_{i=1}^H  \left\| \sum_{l=i}^H   J_{(i,l)}\vect[\nabla_{(l)}L(W,B)] \right\|_2^2
\\ & \le -4\lambda_{W,B}   \sigma^2_{\min}(G_H)( L(W,B)-L^*_{1:H})-  \sum_{i=1}^H  \left\| \sum_{l=i}^H   J_{(i,l)}\vect[\nabla_{(l)}L(W,B)] \right\|_2^2
\end{align*}
Since $\frac{d}{dt} L^*_{1:H} =0$, 
$$
\frac{d}{dt} (L(W,B) -L^*_{1:H}) \le -4\lambda_{W,B}   \sigma^2_{\min}(G_H)(L(W,B) -L^*_{1:H})-  \sum_{i=1}^H  \left\| \sum_{l=i}^H   J_{(i,l)}\vect[\nabla_{(l)}L(W,B)] \right\|_2^2 
$$
By defining  $\Lb=L(W,B) -L^*_{1:H}$, 
\begin{align} \label{eq:proof:2}
\frac{d\Lb }{dt} \le -4\lambda_{W,B}   \sigma^2_{\min}(G_H)\Lb-  \sum_{i=1}^H  \left\| \sum_{l=i}^H   J_{(i,l)}\vect[\nabla_{(l)}L(W,B)] \right\|_2^2 
\end{align} 
Since $\frac{d}{dt} \Lb \le0$ and $\Lb\ge 0$, if $\Lb=0$ at some time $\bar t$, then $\Lb=0$ for any time $t \ge \bar t$. Therefore, if $\Lb=0$ at some time $\bar t$, then we have the desired statement of this theorem for any time $t\ge \bar t$. Thus, we can focus on the time interval $[0, \bar t]$ such that  $\Lb>0$ for  any time $t\in[0, \bar t]$ (here, it is allowed to have $\bar t = \infty$). Thus, focusing on the time interval with $\Lb>0$ , equation \eqref{eq:proof:2} implies that 
\begin{align*}
\frac{1}{\Lb}\frac{d\Lb}{dt}  \le -4\lambda_{W,B}   \sigma^2_{\min}(G_H)-\frac{1}{\Lb}  \sum_{i=1}^H  \left\| \sum_{l=i}^H   J_{(i,l)}\vect[\nabla_{(l)}L(W,B)] \right\|_2^2
\end{align*}
By taking integral over time
\begin{align*}
\int_{0}^T \frac{1}{\Lb}\frac{d\Lb}{dt}  dt\le -\int_{0}^T 4 \lambda_{W,B}   \sigma^2_{\min}(G_H)dt-\int_{0}^T \frac{1}{\Lb}  \sum_{i=1}^H  \left\| \sum_{l=i}^H   J_{(i,l)}\vect[\nabla_{(l)}L(W,B)] \right\|_2^{2}dt
\end{align*}
By using the substitution rule for integrals, $\int_{0}^T \frac{1}{\Lb}\frac{d\Lb}{dt}  dt=\int_{\Lb_0}^{\Lb_T} \frac{1}{\Lb}d\Lb=\log(\Lb_T)-\log(\Lb_0)$, where $\Lb_0=L(W_{0},B_{0}) -L^*_{1:H}$ and $\Lb_T=L(W_{T},B_{T}) -L^*_{1:H}$. Thus,
\begin{align*}
\log(\Lb_T)-\log(\Lb_0)\le - 4 \sigma^2_{\min}(G_H) \int_{0}^T  \lambda_{W,B} dt-\int_{0}^T \frac{1}{\Lb}  \sum_{i=1}^H  \left\| \sum_{l=i}^H   J_{(i,l)}\vect[\nabla_{(l)}L(W,B)] \right\|_2^{2}dt
\end{align*} 
which implies that
\begin{align*}
\Lb_T &\le e^{\log(\Lb_0)- 4 \sigma^2_{\min}(G_H) \int_{0}^T  \lambda_{W,B} dt-\int_{0}^T \frac{1}{\Lb}  \sum_{i=1}^H  \left\| \sum_{l=i}^H   J_{(i,l)}\vect[\nabla_{(l)}L(W,B)] \right\|_2^{2}dt}
\\ & =\Lb_0 e^{- 4 \sigma^2_{\min}(G_H) \int_{0}^T  \lambda_{W,B} dt-\int_{0}^T \frac{1}{\Lb}  \sum_{i=1}^H  \left\| \sum_{l=i}^H   J_{(i,l)}\vect[\nabla_{(l)}L(W,B)] \right\|_2^{2}dt}
\end{align*}
By recalling the definition of $\Lb=L(W,B) -L^*_{1:H}$ and that  $\frac{d}{dt} \Lb \le0$, we have that if $L(W_T,B_T) -L^*_{1:H}> 0$, then $L(W_t,B_t) -L^*_{1:H}>0$ for all $t \in [0, T]$, and
\begin{align*}
L(W_T,B_T) -L^*_{1:H}\le (L(W_0,B_0) -L^*_{1:H})
 e^{- 4 \sigma^2_{\min}(G_H) \int_{0}^T  \lambda_{W_{t},B_{t}} dt-\int_{0}^T \frac{1}{L(W_t,B_t) -L^*}  \sum_{i=1}^H  \left\| \sum_{l=i}^H   J_{(i,l)}\vect[\nabla_{(l)}L(W_{t},B_{t})] \right\|_2^{2}dt}. 
\end{align*}
By noticing that $\lambda_T^{(1:H)}= \inf_{t \in[0,T]}  \lambda_{W_t,B_t}$ and  that $\int_{0}^T \frac{1}{L(W_t,B_t) -L^*}  \sum_{i=1}^H  \left\| \sum_{l=i}^H   J_{(i,l)}\vect[\nabla_{(l)}L(W_{t},B_{t})] \right\|_2^{2}dt \ge 0$, this implies that
\begin{align*}
L(W_T,B_T) -L^*_{1:H} &\le (L(W_0,B_0) -L^*_{1:H})
 e^{- 4\lambda_T^{(1:H)}\sigma^2_{\min}(G_H)T-\int_{0}^T \frac{1}{L(W_t,B_t) -L^*}  \sum_{i=1}^H  \left\| \sum_{l=i}^H   J_{(i,l)}\vect[\nabla_{(l)}L(W_{t},B_{t})] \right\|_2^{2}dt} 
\\ & \le(L(W_0,B_0) -L^*_{1:H})
 e^{- 4\lambda_T^{(1:H)}\sigma^2_{\min}(G_H)T}. 
\end{align*}
This completes the proof of Theorem \ref{thm:1} (i) for the case of $\Ical=[n]$. Since every  step in this proof   is valid  when we replace $ f(X,W,B)$ by $f(X,W,B)_{*\Ical}$ and  $XS^{l}$ by $X(S^l)_{*\Ical}$ without using any assumption on $S$ or the relation between $S^{l-1}$ and $S$, our proof also yields for the general case of $\Ical$ that 
\begin{align*}
L(W_T,B_T) -L^*_{1:H} & \le(L(W_0,B_0) -L^*_{1:H})
 e^{- 4\lambda_T^{(1:H)}\sigma^2_{\min}((G_H)_{*\Ical})T}. 
\end{align*}
\qed

\subsubsection{Case (ii): Completing The Proof of Theorem~\ref{thm:1} (ii)}
Using equation \eqref{eq:proof:1} and \eqref{eq:new:15} , we have that
for any $H' \in \{0,1,\dots,H\}$,
\begin{align*}
\frac{d}{dt} L(W,B) &\le-  \lambda_{\min}(F_{(H')}) \|\vect[\nabla_{(H')}L(W,B)]\|_2^2
 \\ &\le-4  \lambda_{\min}(F_{(H')})   \vect[\hat Y-Y]\T  [(X S^{H'})\T X S^{H'} \otimes I_{m_{y}} ] \vect[\hat Y-Y]
\\ & =-4\lambda_{W,B}  \vect[\hat Y-Y]\T  \left[ \tilde G_{H'}\T \tilde G_{H'}\otimes I_{m_{y}} \right] \vect[\hat Y-Y], 
\end{align*}
where
$$
\lambda_{W,B}: =  \lambda_{\min}(F_{(H')}),   
$$
and
$$
\tilde G_{H'} : =X S^{H'}.
$$

Decompose $\vect[\hat Y-Y]$ as $\vect[\hat Y-Y] =v+v^\perp$, where $v=\Pb_{\tilde G_{H'}\T \otimes I_{m_{y}}  }\vect[\hat Y-Y]$, $v^\perp=(I_{m_{y}n}-\Pb_{\tilde G_{H'}\T \otimes I_{m_{y}} })\vect[\hat Y-Y]$, and $\Pb_{\tilde G_{H'}\T \otimes I_{m_{y}}} \in \RR^{m_yn\times m_y n}$ represents the
orthogonal projection onto the column space of $\tilde G_{H'}\T \otimes I_{m_{y}} \in  \RR^{ m_{y}n \times m_{y}m_{x}}$. Then, 
\begin{align*}
 \vect[\hat Y-Y]\T    \left[ \tilde G_{H'}\T \tilde G_{H'}\otimes I_{m_{y}} \right]  \vect[\hat Y-Y] 
& =(v+v^\perp)\T\left[ \tilde G_{H'}\T \otimes I_{m_{y}} \right]     \left[ \tilde G_{H'}\otimes I_{m_{y}} \right]    (v + v^\perp)
 \\ & =v\T  \left[ \tilde G_{H'}\T \otimes I_{m_{y}} \right]     \left[ \tilde G_{H'}\otimes I_{m_{y}} \right] v
\\ & \ge \sigma^2_{\min}(\tilde G_{H'}) \|\Pb_{\tilde G_{H'}\T \otimes I_{m_{y}}  }\vect[\hat Y-Y]\|^2_2 
\\ & = \sigma^2_{\min}(\tilde G_{H'}) \|\Pb_{\tilde G_{H'}\T \otimes I_{m_{y}}  }\vect[\hat Y]-\Pb_{\tilde G_{H'}\T \otimes I_{m_{y}}  }\vect[Y]\|^2_2
\\ & = \sigma^2_{\min}(\tilde G_{H'}) \|\vect[\hat Y]-\Pb_{\tilde G_{H'}\T \otimes I_{m_{y}}  }\vect[Y]\pm\vect[ Y]\|^2_2
 \\ & = \sigma^2_{\min}(\tilde G_{H'}) \|\vect[\hat Y]-\vect[ Y]+(I_{m_yn}-\Pb_{\tilde G_{H'}\T \otimes I_{m_{y}}  })\vect[ Y]\|^2_2
\\ & \ge \sigma^2_{\min}(\tilde G_{H'})(\|\vect[\hat Y-Y]\|_2 - \|(I_{m_yn}-\Pb_{\tilde G_{H'}\T \otimes I_{m_{y}}  })\vect[ Y]\|_2 )^2 
\\ & \ge \sigma^2_{\min}(\tilde G_{H'})(\|\vect[\hat Y-Y]\|_2^2 - \|(I_{m_yn}-\Pb_{\tilde G_{H'}\T \otimes I_{m_{y}}  })\vect[ Y]\|_2 ^2, 
\end{align*} 
where we used the fact that the singular values of $\left[ \tilde G_{H'}\T \otimes I_{m_{y}} \right]$ are products of singular values of $\tilde G_{H'}$ and $I_{m_y}$. 

By noticing that $L(W,B)=\|\vect[\hat Y-Y]\|_2^{2}$ and $L^*_{{H'}} = \|(I_{m_yn}-\Pb_{\tilde G_{H'}\T \otimes I_{m_{y}}  })\vect[ Y]\|_2^2$ ,
we have that for any $H' \in \{0,1,\dots,H \}$,
\begin{align} \label{eq:new:16}
\vect[\hat Y-Y]\T    \left[ \tilde G_{H'}\T \tilde G_{H'}\otimes I_{m_{y}} \right]  \vect[\hat Y-Y] \ge \sigma^2_{\min}(\tilde G_{H'})(L(W,B)-L^*_{{H'}} ).
\end{align}
Therefore,
\begin{align*}
\frac{d}{dt} L(W,B) &\le -4\lambda_{W,B}  \vect[\hat Y-Y]\T  \left[ \tilde G_{H'}\T \tilde G_{H'}\otimes I_{m_{y}} \right] \vect[\hat Y-Y]
\\ & \le -4\lambda_{W,B}   \sigma^2_{\min}(\tilde G_{H'})( L(W,B)-L^*_{{H'}})
\end{align*}
Since $\frac{d}{dt} L^*_{{H'}}=0$, 
$$
\frac{d}{dt} (L(W,B) -L^*_{{H'}}) \le -4\lambda_{W,B}   \sigma^2_{\min}(\tilde G_{H'})(L(W,B) -L^*_{{H'}}) 
$$
By defining  $\Lb=L(W,B) -L^*_{{H'}}$, 
\begin{align} \label{eq:proof:2_3}
\frac{d\Lb }{dt} \le -4\lambda_{W,B}   \sigma^2_{\min}(\tilde G_{H'})\Lb 
\end{align} 
Since $\frac{d}{dt} \Lb \le0$ and $\Lb\ge 0$, if $\Lb=0$ at some time $\bar t$, then $\Lb=0$ for any time $t \ge \bar t$. Therefore, if $\Lb=0$ at some time $\bar t$, then we have the desired statement of this theorem for any time $t\ge \bar t$. Thus, we can focus on the time interval $[0, \bar t]$ such that  $\Lb>0$ for  any time $t\in[0, \bar t]$ (here, it is allowed to have $\bar t = \infty$). Thus, focusing on the time interval with $\Lb>0$ , equation \eqref{eq:proof:2_3} implies that 
\begin{align*}
\frac{1}{\Lb}\frac{d\Lb}{dt}  \le -4\lambda_{W,B}   \sigma^2_{\min}(\tilde G_{H'})
\end{align*}
By taking integral over time
\begin{align*}
\int_{0}^T \frac{1}{\Lb}\frac{d\Lb}{dt}  dt\le -\int_{0}^T 4 \lambda_{W,B}   \sigma^2_{\min}(\tilde G_{H'})dt
\end{align*}
By using the substitution rule for integrals, $\int_{0}^T \frac{1}{\Lb}\frac{d\Lb}{dt}  dt=\int_{\Lb_0}^{\Lb_T} \frac{1}{\Lb}d\Lb=\log(\Lb_T)-\log(\Lb_0)$, where $\Lb_0=L(W_{0},B_{0}) -L^*$ and $\Lb_T=L(W_{T},B_{T}) -L^*_{{H'}}$. Thus,
\begin{align*}
\log(\Lb_T)-\log(\Lb_0)\le - 4 \sigma^2_{\min}(\tilde G_{H'}) \int_{0}^T  \lambda_{W,B} dt
\end{align*} 
which implies that
\begin{align*}
\Lb_T &\le e^{\log(\Lb_0)- 4 \sigma^2_{\min}(\tilde G_{H'}) \int_{0}^T  \lambda_{W,B} dt}
\\ & =\Lb_0 e^{- 4 \sigma^2_{\min}(\tilde G_{H'}) \int_{0}^T  \lambda_{W,B} dt}
\end{align*}
By recalling the definition of $\Lb=L(W,B) -L^*_{{H'}}$ and that  $\frac{d}{dt} \Lb \le0$, we have that if $L(W_T,B_T) -L^*_{{H'}}> 0$, then $L(W_t,B_t) -L^*_{{H'}}>0$ for all $t \in [0, T]$, and
\begin{align*}
L(W_T,B_T) -L^*_{{H'}}\le (L(W_0,B_0) -L^*_{{H'}})
 e^{- 4 \sigma^2_{\min}(\tilde G_{H'}) \int_{0}^T  \lambda_{W_{t},B_{t}} dt}. 
\end{align*}
By noticing that $\lambda^{({H'})}_T= \inf_{t \in[0,T]}  \lambda_{W_t,B_t}$,  this implies that
for any $H' \in \{0,1,\dots,H\}$,
\begin{align*}
L(W_T,B_T) -L^*_{{H'}} &\le (L(W_0,B_0) -L^*_{{H'}})
 e^{- 4\lambda^{({H'})}_T \sigma^2_{\min}(\tilde G_{H'})T}  
 \\ & =(L(W_0,B_0) -L^*_{H'})
 e^{- 4\lambda^{(H)}_T \sigma^2_{\min}(X S^{H'})T} 
\end{align*}
This completes the proof of Theorem \ref{thm:1} (ii) for the case of $\Ical=[n]$. Since every  step in this proof   is valid  when we replace $ f(X,W,B)$ by $f(X,W,B)_{*\Ical}$ and  $XS^{l}$ by $X(S^l)_{*\Ical}$ without using any assumption on $S$ or the relation between $S^{l-1}$ and $S$, our proof also yields for the general case of $\Ical$ that
 \begin{align*}
L(W_T,B_T) -L^*_{{H'}} &\le (L(W_0,B_0) -L^*_{H'})
 e^{- 4\lambda^{(H)}_T \sigma^2_{\min}(X (S^{H'})_{*\Ical})T} 
\end{align*}
\qed

\subsubsection{Case (iii): Completing The Proof of Theorem~\ref{thm:1} (iii)} \label{sec:new:11}
In this case, we have the following assumption: there exist   $l,l' \in \{0,\dots, H\}$ with  $l< l'$ such that $L^*_{l} \ge L^*_{l+1} \ge \cdots \ge L^*_{l'}$  or    $L^*_{l} \le L^*_{l+1} \le \cdots \le L^*_{l'}$. Using equation \eqref{eq:proof:1} and \eqref{eq:new:15} with $\tilde G_{l} =X S^{l}$,
we have that\begin{align*}
\frac{d}{dt} L(W,B) &\le- \sum_{l=0}^H \lambda_{\min}(F_{(l)}) \|\vect[\nabla_{(l)}L(W,B)]\|_2^{2}
 \\ &\le-4 \sum_{l=0}^H  \lambda_{\min}(F_{(l)})\vect[\hat Y-Y]\T  [(X S^{l})\T X S^{l} \otimes I_{m_{y}} ] \vect[\hat Y-Y]
\\ & =-4 \sum_{l=0}^H  \lambda_{\min}(F_{(l)})\vect[\hat Y-Y]\T  [\tilde G_{l}\T \tilde G_{l}  \otimes I_{m_{y}} ] \vect[\hat Y-Y] 
\end{align*}  

Using \eqref{eq:new:16}, since $\vect[\hat Y-Y]\T    \left[ \tilde G_{l}\T \tilde G_{l}\otimes I_{m_{y}} \right]  \vect[\hat Y-Y] \ge \sigma^2_{\min}(\tilde G_l)(L(W,B)-L^*_{{l}} )$ for any $l\in \{0,1,\dots,H \}$, 
\begin{align} \label{eq:new:17}
\frac{d}{dt} L(W,B) &\le-4 \sum_{l=0}^H  \lambda_{\min}(F_{(l)}) \sigma^2_{\min}(\tilde G_l)(L(W,B)-L^*_{{l}} ). 
\end{align}
Let  $l''=l$ if $L^*_{l} \ge L^*_{l+1} \ge \cdots \ge L^*_{l'}$, and $l''=l'$ if $L^*_{l} \le L^*_{l+1} \le \cdots \le L^*_{l'}$. Then, using \eqref{eq:new:17} and the assumption of $L^*_{l} \ge L^*_{l+1} \ge \cdots \ge L^*_{l'}$  or    $L^*_{l} \le L^*_{l+1} \le \cdots \le L^*_{l'}$ for some    $l,l' \in \{0,\dots, H\}$, we have that  
\begin{align} 
\frac{d}{dt} L(W,B) &\le-4(L(W,B)-L^*_{{l''}} ) \sum_{k=l}^{l'}  \lambda_{\min}(F_{(k)}) \sigma^2_{\min}(\tilde G_k). 
\end{align}

Since $\frac{d}{dt} L^*_{{l''}} =0$, 
$$
\frac{d}{dt} (L(W,B) -L^*_{{l''}}) \le-4(L(W,B)-L^*_{{l''}} ) \sum_{k=l}^{l'}  \lambda_{\min}(F_{(k)}) \sigma^2_{\min}(\tilde G_k). 
$$
By taking integral over time in the same way as that in the proof for the case of (i) and (ii), we have that 
\begin{align} \label{eq:new:18}
L(W_T,B_T) -L^*_{{l''}}  \le (L(W_0,B_0) -L^*_{{l''}}  )  e^{- 4\sum_{k=l}^{l'}  \sigma^2_{\min}(\tilde G_k) \int_{0}^T  \lambda_{\min}(F_{(k),t})dt}   
\end{align}

Using the property of Kronecker product, 
$$
\lambda_{\min}(F_{(l),t})=\lambda_{\min}([(B_{(l),t} \dots B_{(1),t})\T B_{(l),t} \cdots B_{(1),t} \otimes I_{m_{y}}])=\lambda_{\min}((B_{(l),t} \dots B_{(1),t})\T B_{(l),t} \cdots B_{(1),t}), 
$$ which implies that  $\lambda_T^{(k)}= \inf_{t \in[0,T]}  \lambda_{\min}(F_{(k),t})$. Therefore, equation \eqref{eq:new:18} with $\lambda_T^{(k)}= \inf_{t \in[0,T]}  \lambda_{\min}(F_{(k),t})$ yields that  
\begin{align}
\nonumber L(W_T,B_T) -L^*_{{l''}}  &\le (L(W_0,B_0) -L^*_{{l''}}  )  e^{- 4\sum_{k=l}^{l'}  \lambda_T^{(k)}\sigma^2_{\min}(\tilde G_k)T}   
\\& =   (L(W_0,B_0) -L^*_{{l''}}  )  e^{- 4\sum_{k=l}^{l'}  \lambda_T^{(k)}\sigma^2_{\min}(X S^{k})T}   
\end{align}
This completes the proof of Theorem \ref{thm:1} (iii) for the case of $\Ical=[n]$. Since every  step in this proof   is valid  when we replace $ f(X,W,B)$ by $f(X,W,B)_{*\Ical}$ and  $XS^{l}$ by $X(S^l)_{*\Ical}$ without using any assumption on $S$ or the relation between $S^{l-1}$ and $S$, our proof also yields for the general case of $\Ical$ that
\begin{align}
\nonumber L(W_T,B_T) -L^*_{{l''}}  &\le (L(W_0,B_0) -L^*_{{l''}}  )  e^{- 4\sum_{k=l}^{l'}  \lambda_T^{(k)}\sigma^2_{\min}(X( S^{k})_{*\Ical})T}.   
\end{align}
\qed

\subsection{Proof of Proposition~\ref{prop:3}}
\label{sec:proof_prop:3}
From Definition \ref{def:4}, for any $l\in\{1,2,\dots,H\}$, we have that    $\sigma_{\min}( \bB^{(1:l)})=\sigma_{\min}(B_{(l)}B_{(l-1)} \cdots B_{(1)}) \ge \gamma$ for all $(W,B)$ such that $L(W,B)\le  L(W_0, B_0)$.
From equation \eqref{eq:proof:1} in the proof of Theorem~\ref{thm:1}, it holds that
$
\frac{d}{dt} L(W_{t},B_{t}) \le 0
$     
for all $t$. Thus, we have that $L(W_{t},B_{t})\le  L(W_0, B_0)$ and hence  $\sigma_{\min}( \bB^{(1:l)}_t) \ge \gamma$ for all $t$.  Under this problem setting ($m_l\ge m_x$),  this implies that $\lambda_{\min}(( \bB^{(1:l)}_t)\T  \bB^{(1:l)}_t) \ge \gamma^{2}$ for all $t$ and thus $\lambda_T^{(1:H)}\ge\gamma^{2}$.

\subsection{Proof of Theorem~\ref{thm:4}}
\label{sec:proof_thm:4}
The proof of  Theorem~\ref{thm:4} follows from the intermediate results of the proofs of Theorem \ref{thm:6} and Theorem \ref{thm:1} as we show in the following. For  the non-multiscale case,  from equation \eqref{eq:4} in the proof of Theorem \ref{thm:6}, we have that
\begin{align*} 
\frac{d}{dt} L_{1}(W,B) &=- \|\vect[\nabla_{(H)}L(W,B)]\|_{F_{(H)}}^{2}-  \sum_{i=1}^H  \left\|   J_{(i,H)}\vect[\nabla_{(H)}L(W,B)] \right\|_2^{2}
\end{align*}
where 
$$
\|\vect[\nabla_{(H)}L(W,B)]\|_{F_{(H)}}^2 :=\vect[\nabla_{(H)}L(W,B)]\T F_{(H)}\vect[\nabla_{(H)}L(W,B)].
$$
Since equation \eqref{eq:4} in the proof of Theorem \ref{thm:1} is derived without the assumption on the square loss, this holds for any differentiable loss $\ell$. By noticing that $\nabla_{(H)}L(W,B)=V (X (S^{H})_{*\Ical})\T$, we have that
\begin{align*} 
\frac{d}{dt} L_{1}(W,B) &=- \|\vect[V (X (S^{H})_{*\Ical})\T]\|_{F_{(H)}}^{2}-  \sum_{i=1}^H  \left\|   J_{(i,H)}\vect[V (X (S^{H})_{*\Ical})\T] \right\|_2^{2}.
\end{align*}
This proves the statement of Theorem~\ref{thm:4} (i).

For  the multiscale case,  from equation \eqref{eq:1} in the proof of Theorem \ref{thm:1}, we have that
\begin{align} \label{eq:2}
\frac{d}{dt} L_{2}(W,B) &=- \sum_{l=0}^H \|\vect[\nabla_{(l)}L(W,B)]\|_{F_{(l)}}^{2}-  \sum_{i=1}^H  \left\| \sum_{l=i}^H   J_{(i,l)}\vect[\nabla_{(l)}L(W,B)] \right\|_2^{2}
\end{align}
where 
$$
\|\vect[\nabla_{(l)}L(W,B)]\|_{F_{(l)}}^2 :=\vect[\nabla_{(l)}L(W,B)]\T F_{(l)}\vect[\nabla_{(l)}L(W,B)].
$$
Since equation \eqref{eq:1} in the proof of Theorem \ref{thm:1} is derived without the assumption on the square loss, this holds for any differentiable loss $\ell$. Since every  step to derive  equation \eqref{eq:1}   is valid  when we replace $ f(X,W,B)$ by $f(X,W,B)_{*\Ical}$ and  $XS^{l}$ by $X(S^l)_{*\Ical}$ without using any assumption on $S$ or the relation between $S^{l-1}$ and $S$, the  steps to derive  equation \eqref{eq:1} also yields this for the general case of $\Ical$: i.e., $\nabla_{(l)}L(W,B)=V (X (S^{l})_{*\Ical})\T$. Thus, we have that
\begin{align*} 
\frac{d}{dt} L_{1}(W,B) &=- \sum_{l=0}^H \|\vect[V (X (S^{l})_{*\Ical})\T]\|_{F_{(l)}}^{2}-  \sum_{i=1}^H  \left\| \sum_{l=i}^H   J_{(i,l)}\vect[V (X (S^{l})_{*\Ical})\T] \right\|_2^2
\end{align*}
This completes the proof of Theorem~\ref{thm:4} (ii).

\section{Additional Experimental  Results}
\label{sec:results}
In this section, we present additional experimental results.

\begin{figure*}[p]
\centering
    \begin{subfigure}[b]{0.46\textwidth}
     \centering
        \includegraphics[width=0.9\textwidth]{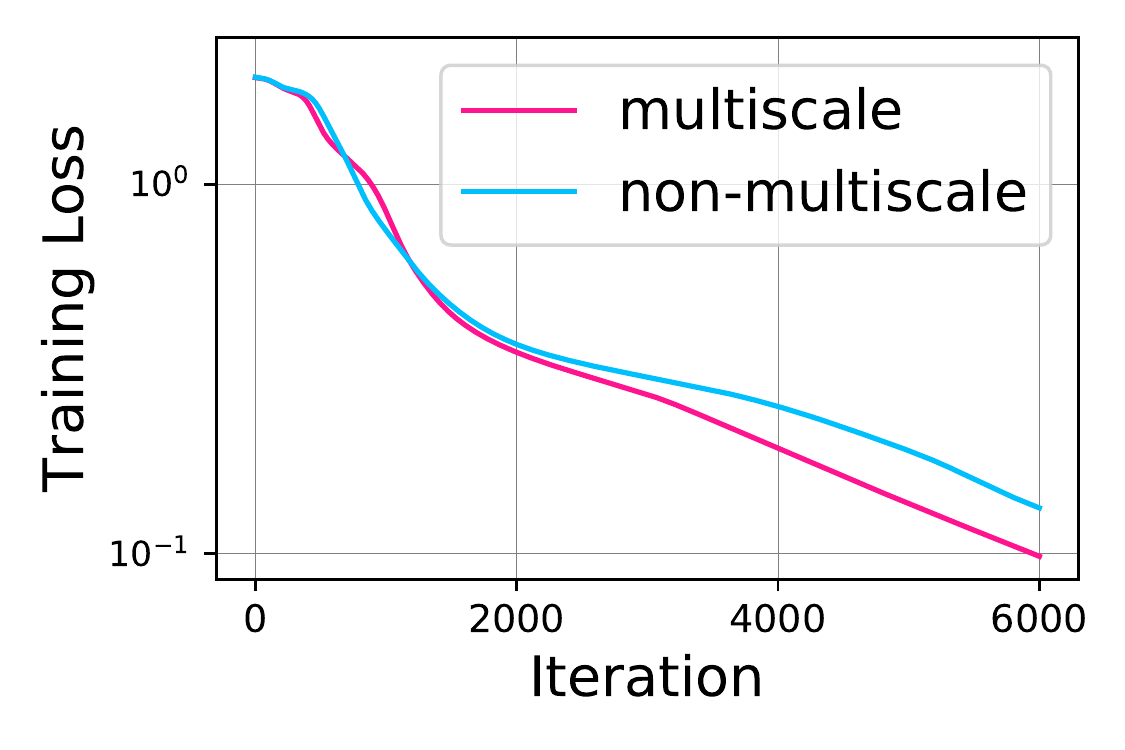} 
        \centering
        \caption{Linear and Cora.}
    \end{subfigure}  
    \begin{subfigure}[b]{0.46\textwidth}
     \centering
        \includegraphics[width=0.9\textwidth]{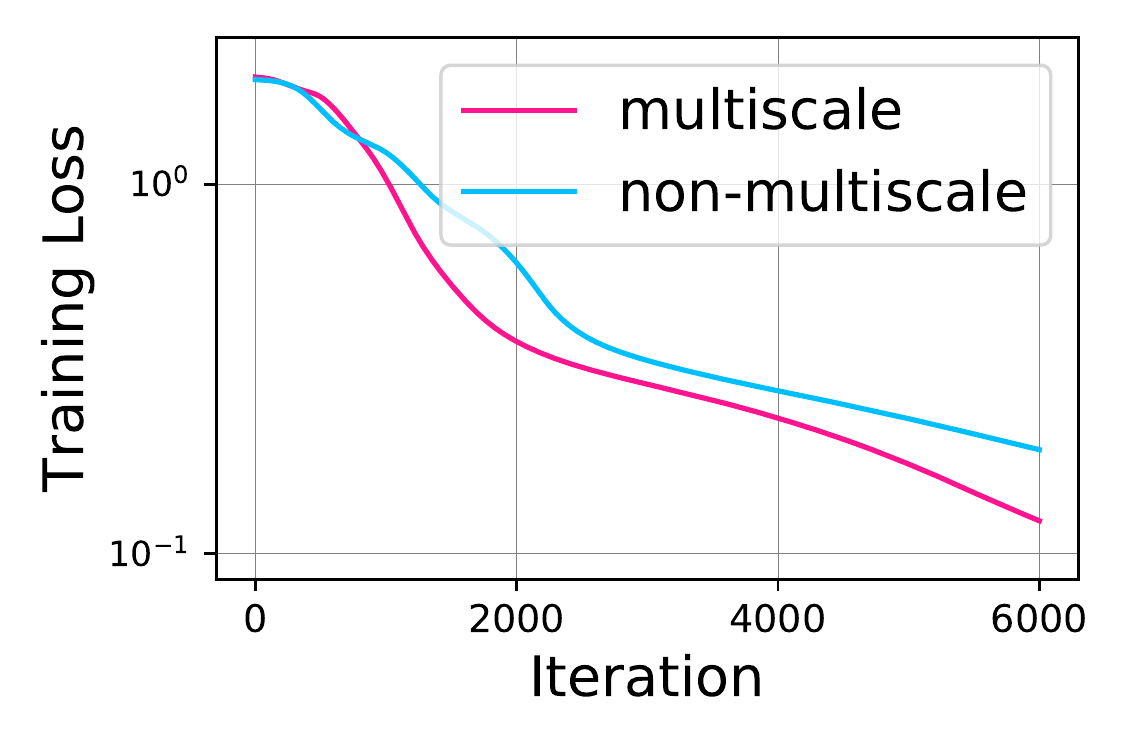} 
        \centering
        \caption{ReLU and Cora.}
    \end{subfigure}   
     \begin{subfigure}[b]{0.46\textwidth}
     \centering
        \includegraphics[width=0.9\textwidth]{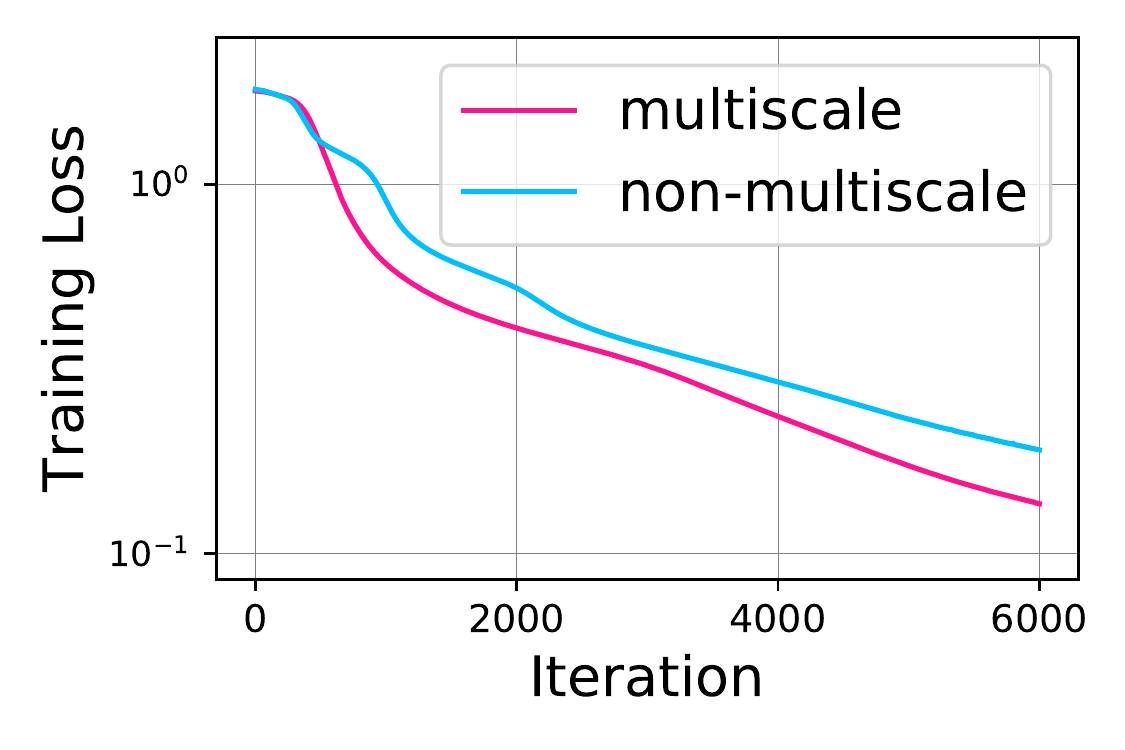} 
        \centering
        \caption{Linear and Citeseer.}
    \end{subfigure}  
    \begin{subfigure}[b]{0.46\textwidth}
     \centering
        \includegraphics[width=0.9\textwidth]{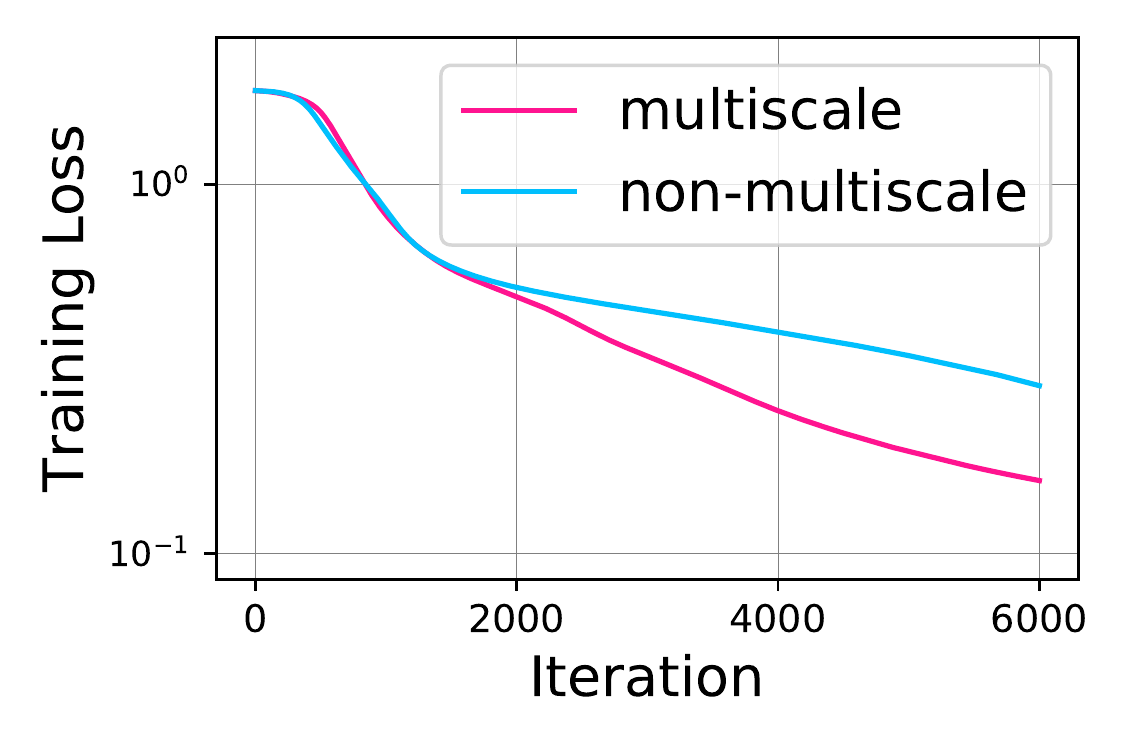} 
        \centering
        \caption{ReLU and Citeseer.}
    \end{subfigure}   
        \vspace{0.2in}
    \caption{\textbf{Multiscale skip connection accelerates GNN training}. We plot the training curves of GNNs with ReLU and linear activation on the \textit{Cora} and \textit{Citeseer} dataset. We use the GCN model with learning rate $5e-5$, six layers, and hidden dimension  $32$. }
\end{figure*}
\clearpage

\begin{figure*}[p]
\centering
    \begin{subfigure}[b]{0.46\textwidth}
     \centering
        \includegraphics[width=0.9\textwidth]{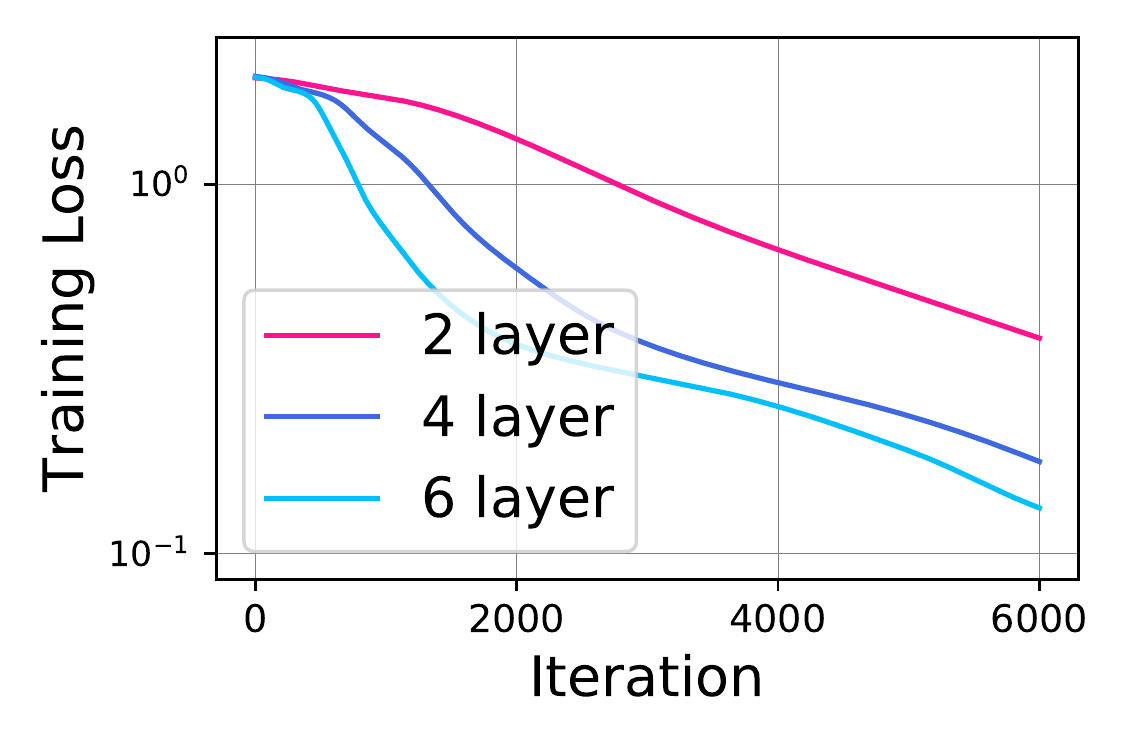} 
        \centering
        \caption{Linear and non-multiscale.}
    \end{subfigure}  
    \begin{subfigure}[b]{0.46\textwidth}
     \centering
        \includegraphics[width=0.9\textwidth]{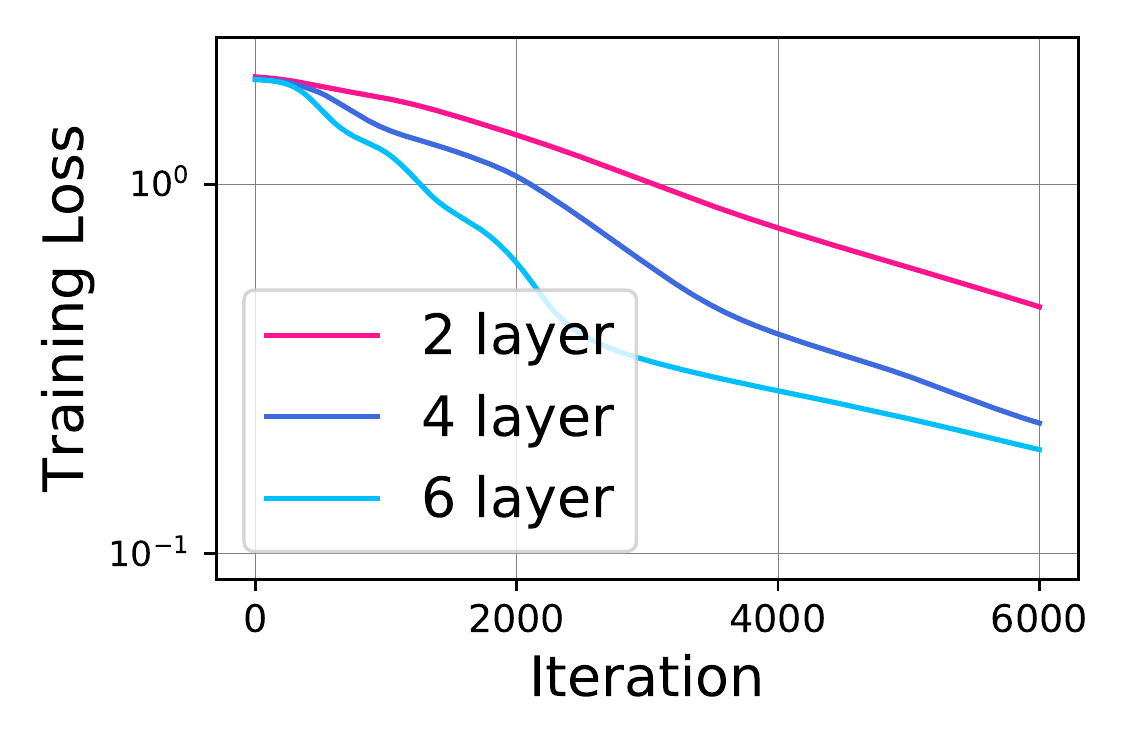} 
        \centering
        \caption{ReLU and non-multiscale.}
    \end{subfigure}   
     \begin{subfigure}[b]{0.46\textwidth}
     \centering
        \includegraphics[width=0.9\textwidth]{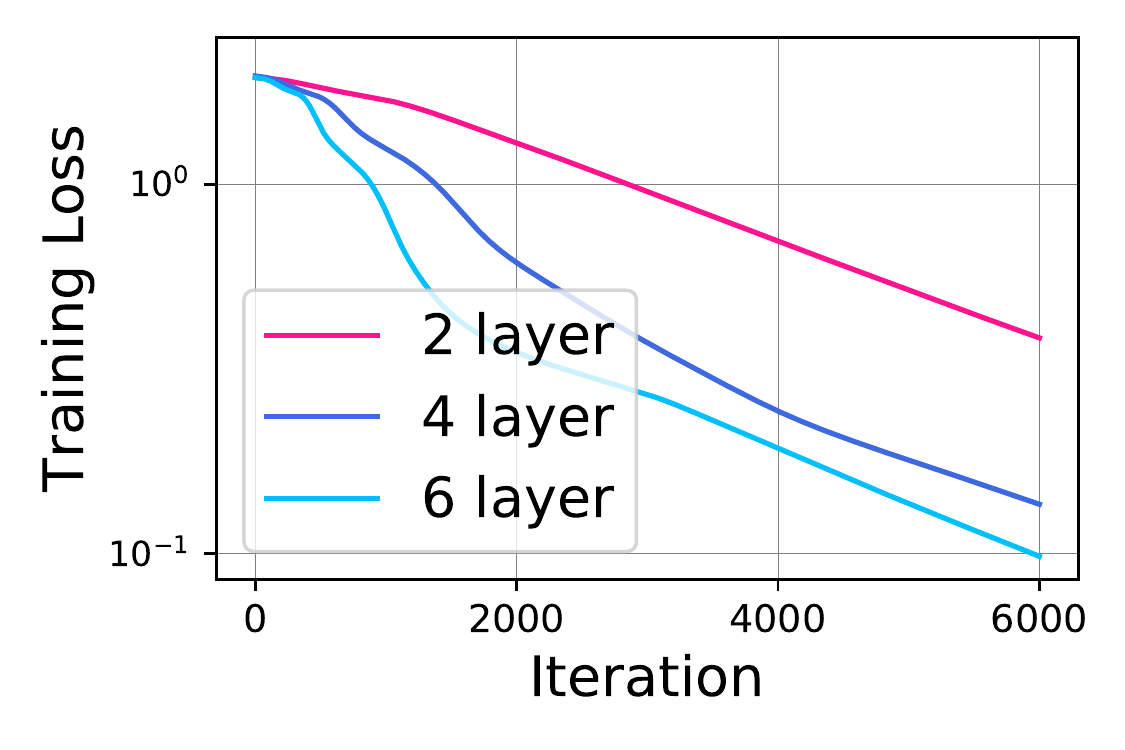} 
        \centering
        \caption{Linear and multiscale.}
    \end{subfigure}  
    \begin{subfigure}[b]{0.46\textwidth}
     \centering
        \includegraphics[width=0.9\textwidth]{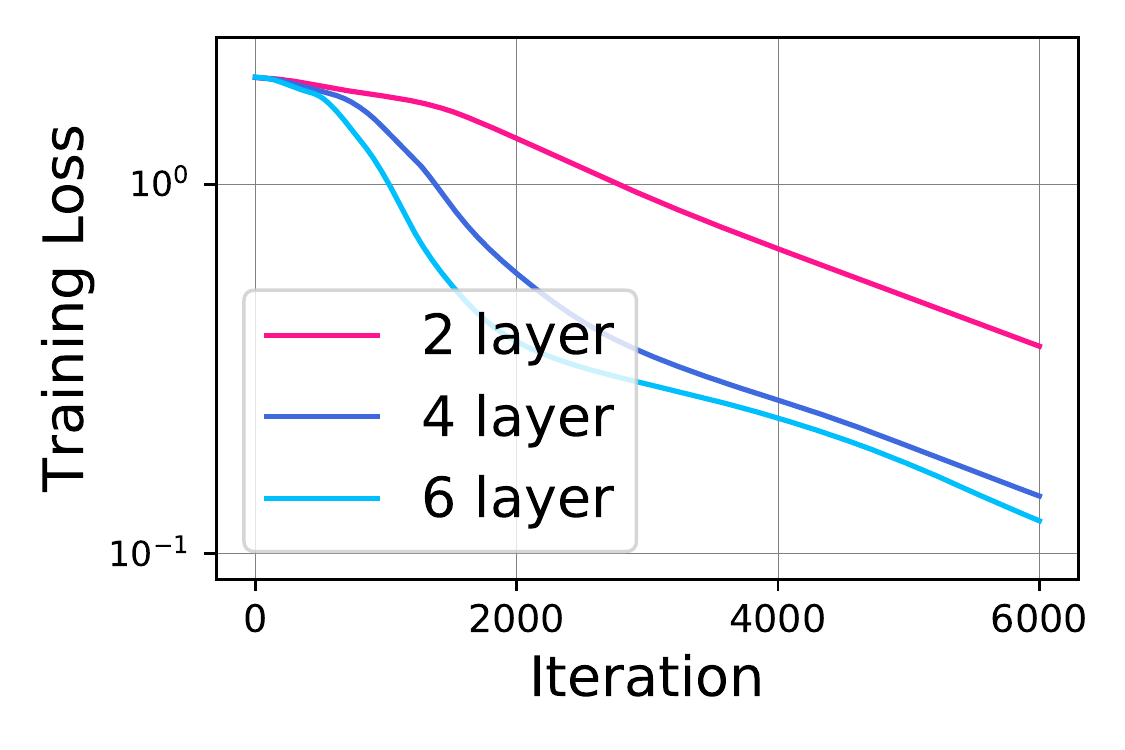} 
        \centering
        \caption{ReLU and multiscale.}
    \end{subfigure}   
        \vspace{0.2in}
    \caption{\textbf{Depth accelerates GNN training}. We plot the training curves of GNNs with ReLU and linear activation, multiscale and non-multiscale on the \textbf{Cora}  dataset. We use the GCN model with learning rate $5e-5$ and hidden dimension  $32$. }
    \label{fig:2a}
\end{figure*}
\clearpage

\begin{figure*}[p]
\centering
    \begin{subfigure}[b]{0.46\textwidth}
     \centering
        \includegraphics[width=0.9\textwidth]{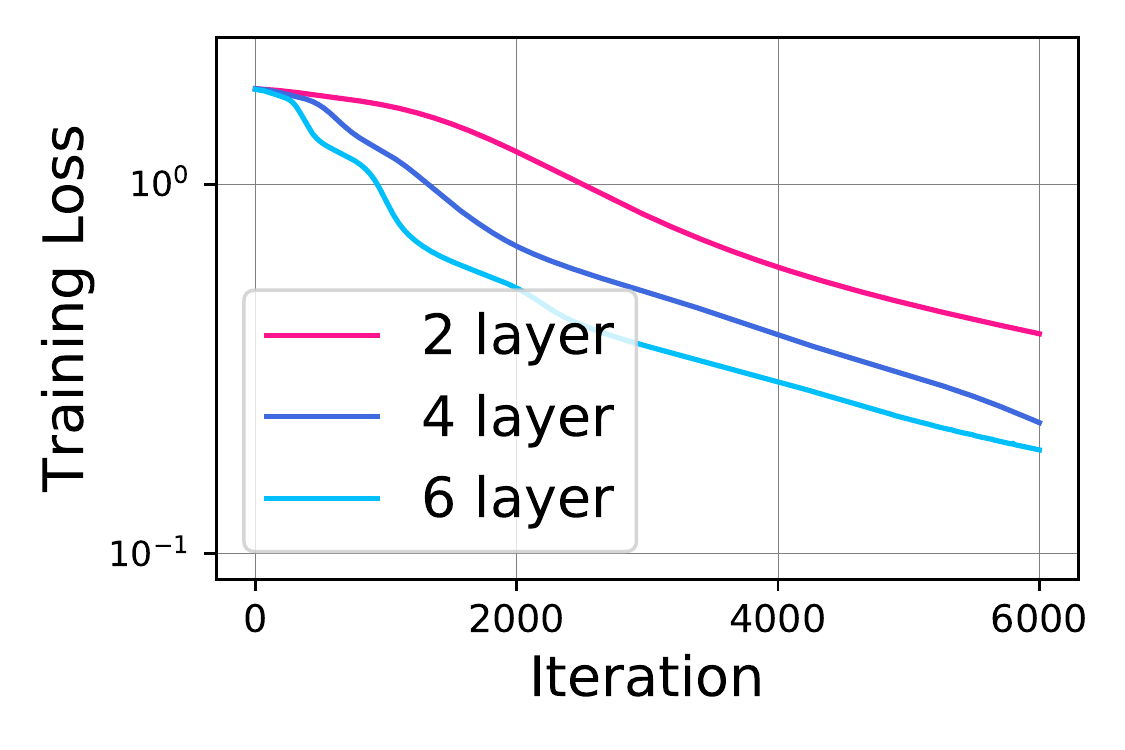} 
        \centering
        \caption{Linear and non-multiscale.}
    \end{subfigure}  
    \begin{subfigure}[b]{0.46\textwidth}
     \centering
        \includegraphics[width=0.9\textwidth]{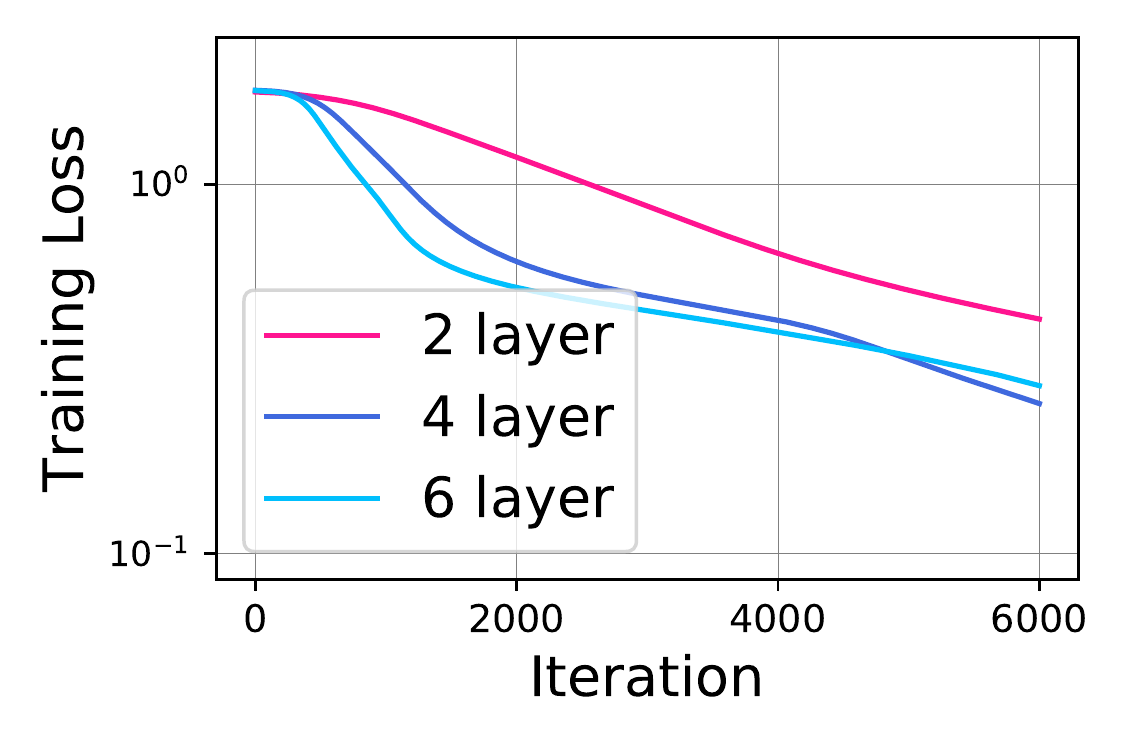} 
        \centering
        \caption{ReLU and non-multiscale.}
    \end{subfigure}   
     \begin{subfigure}[b]{0.46\textwidth}
     \centering
        \includegraphics[width=0.9\textwidth]{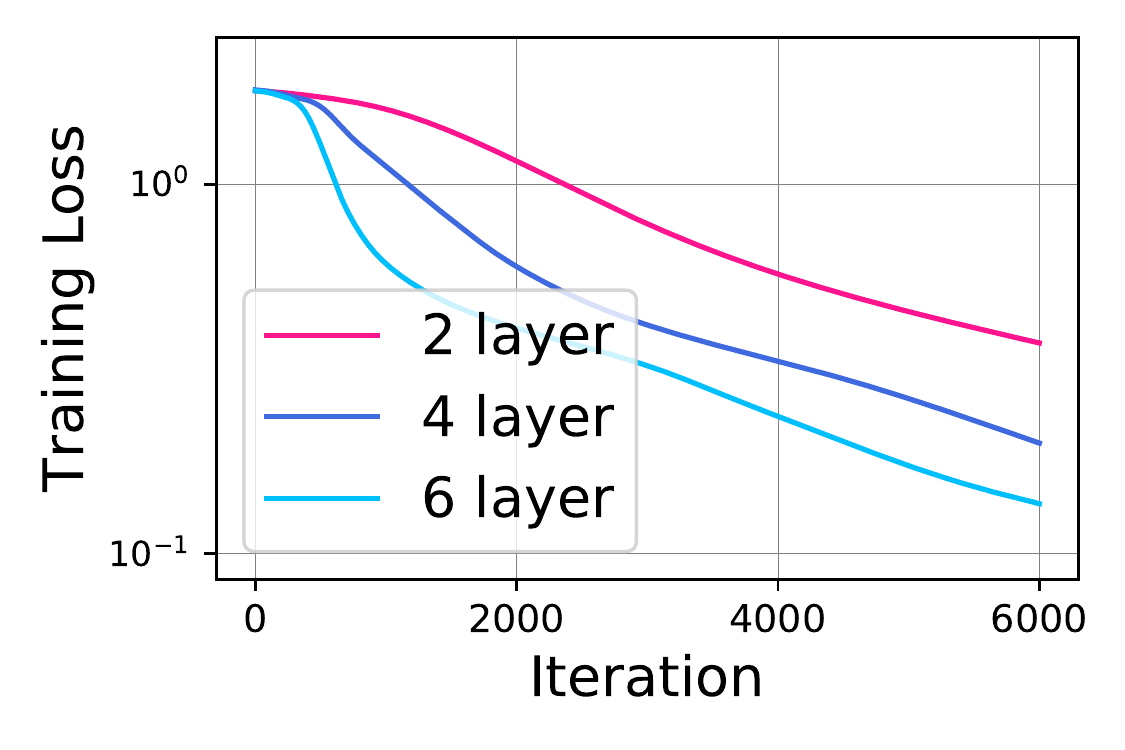} 
        \centering
        \caption{Linear and multiscale.}
    \end{subfigure}  
    \begin{subfigure}[b]{0.46\textwidth}
     \centering
        \includegraphics[width=0.9\textwidth]{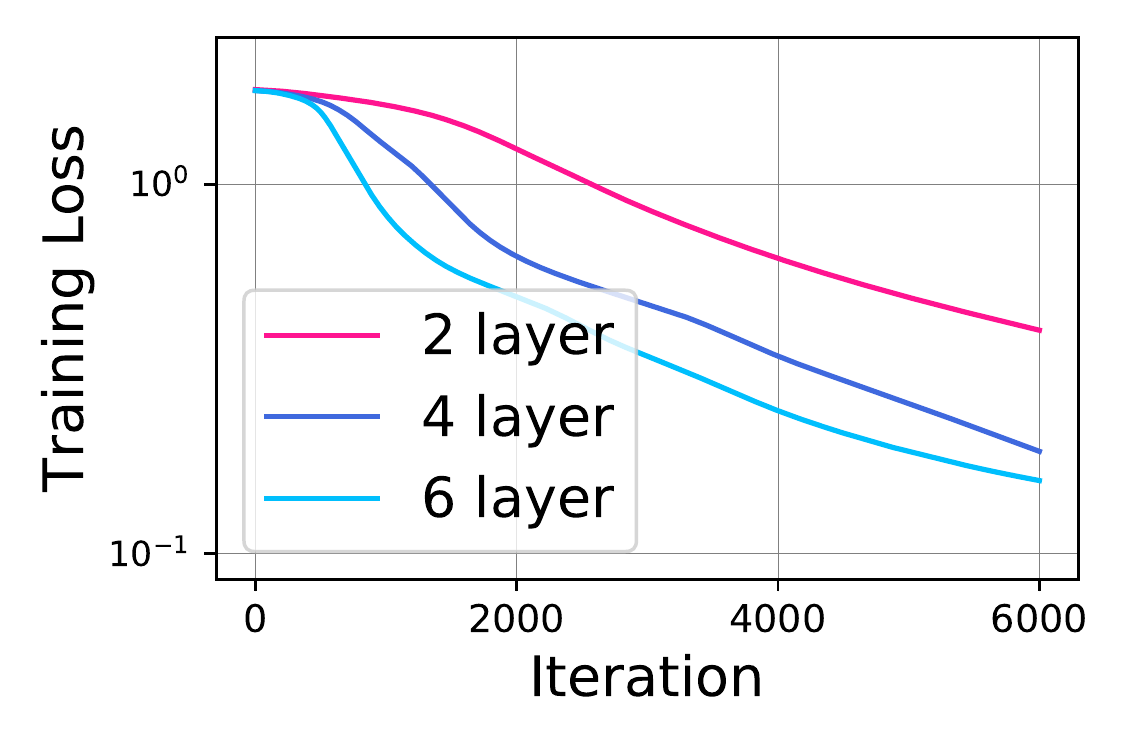} 
        \centering
        \caption{ReLU and multiscale.}
    \end{subfigure}   
    \vspace{0.2in}
    \caption{\textbf{Depth accelerates GNN training}. We plot the training curves of GNNs with ReLU and linear activation, multiscale and non-multiscale on the \textbf{Citeseer}  dataset. We use the GCN model with learning rate $5e-5$ and hidden dimension  $32$.}
    \label{fig:2b}
\end{figure*}
\clearpage

\begin{figure*}[p]
\centering
    \begin{subfigure}[b]{0.46\textwidth}
     \centering
        \includegraphics[width=0.9\textwidth]{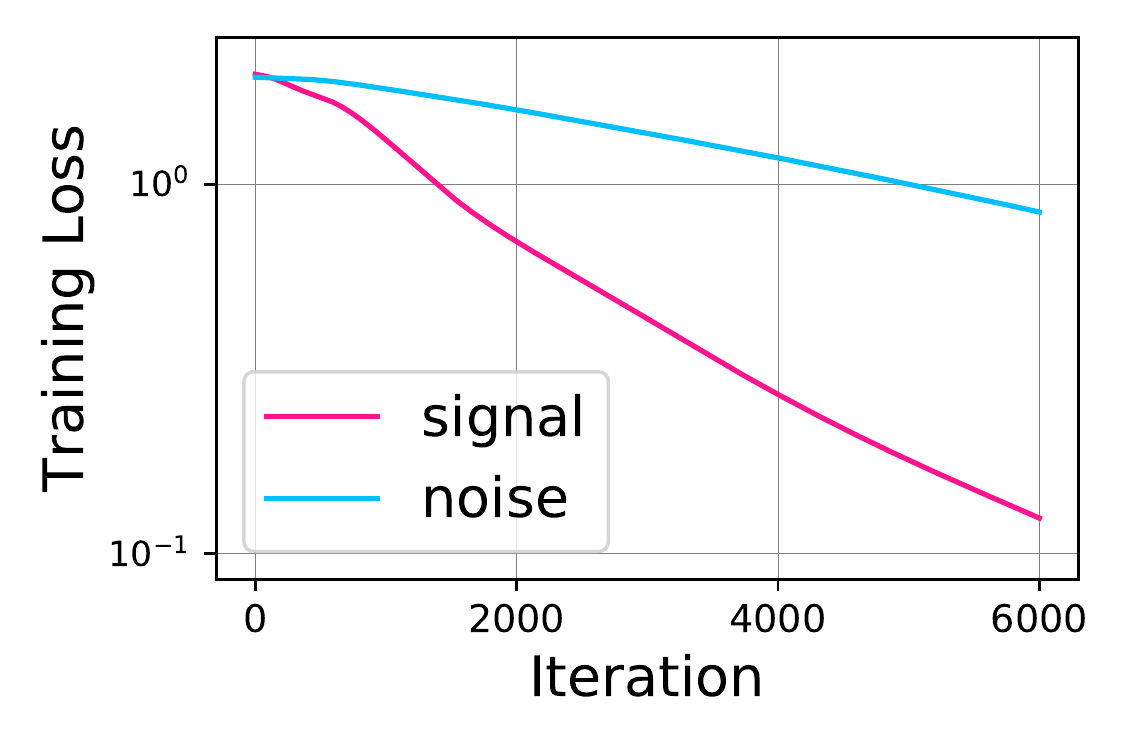} 
        \centering
        \caption{Linear and non-multiscale.}
    \end{subfigure}  
    \begin{subfigure}[b]{0.46\textwidth}
     \centering
        \includegraphics[width=0.9\textwidth]{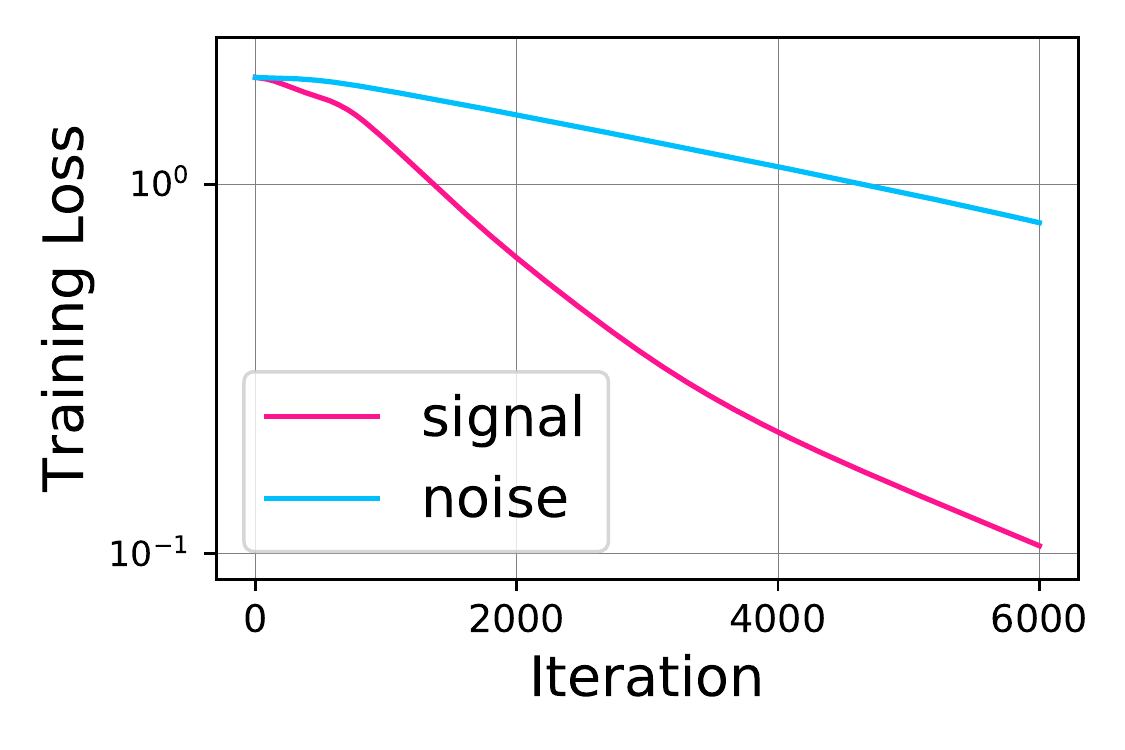} 
        \centering
        \caption{ReLU and non-multiscale.}
    \end{subfigure}   
     \begin{subfigure}[b]{0.46\textwidth}
     \centering
        \includegraphics[width=0.9\textwidth]{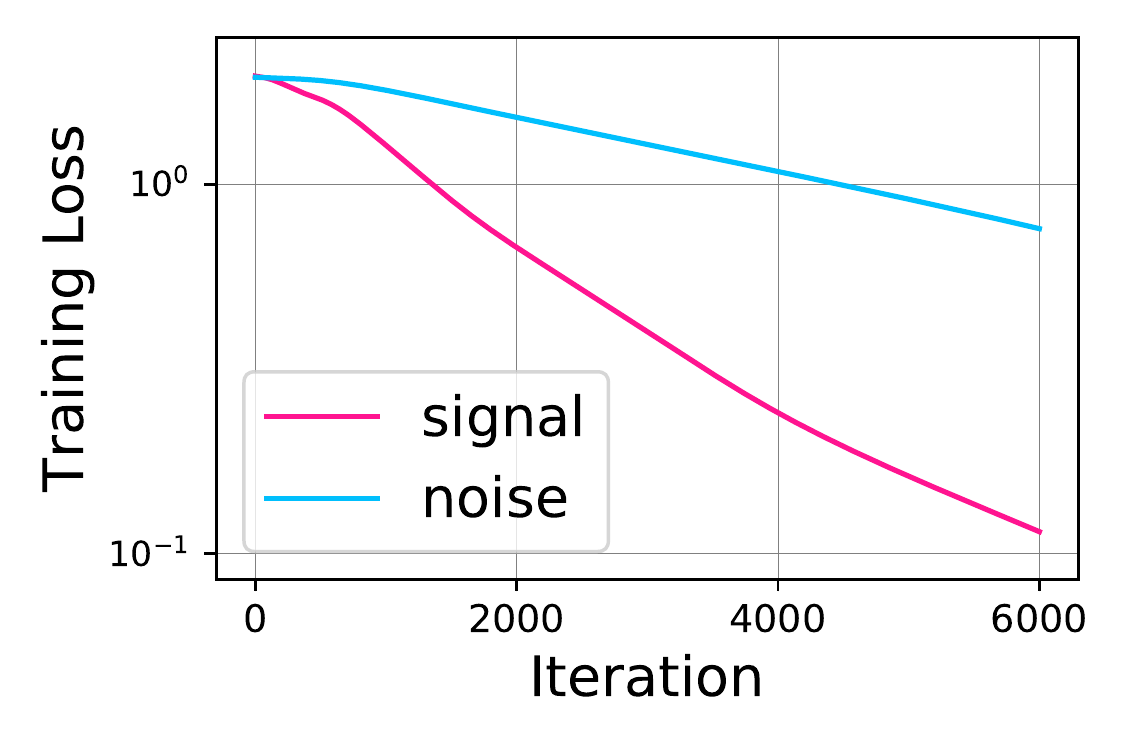} 
        \centering
        \caption{Linear and multiscale.}
    \end{subfigure}  
    \begin{subfigure}[b]{0.46\textwidth}
     \centering
        \includegraphics[width=0.9\textwidth]{figure/noise/Cora_gcn_jk_relu_2layer_lr1e-4.pdf} 
        \centering
        \caption{ReLU and multiscale.}
    \end{subfigure}   
        \vspace{0.2in}
    \caption{\textbf{GNNs train faster when the labels have signal instead of random noise}. We plot the training curves of multiscale and non-multiscale GNNs with ReLU and linear activation,  on the \textbf{Cora}  dataset. We use the two-layer GCN model with learning rate $1e-4$ and hidden dimension  $32$. }
    \label{fig:3a}
\end{figure*}
\clearpage

\begin{figure*}[p]
\centering
    \begin{subfigure}[b]{0.46\textwidth}
     \centering
        \includegraphics[width=0.9\textwidth]{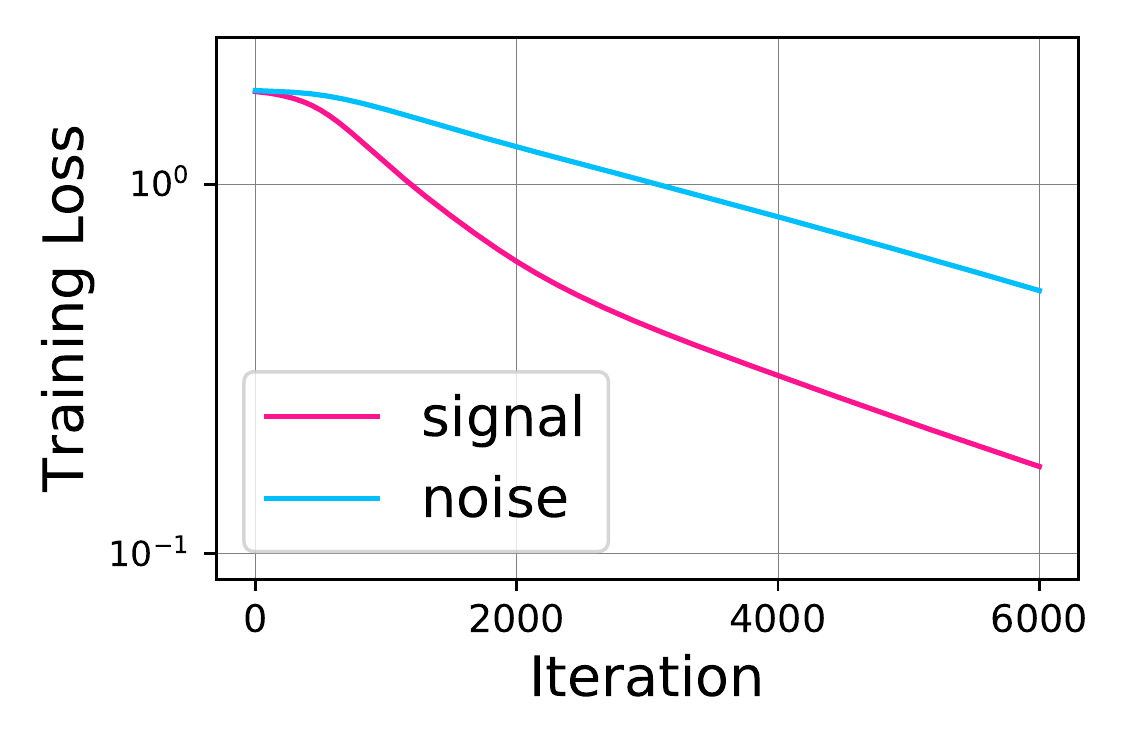} 
        \centering
        \caption{Linear and non-multiscale.}
    \end{subfigure}  
    \begin{subfigure}[b]{0.46\textwidth}
     \centering
        \includegraphics[width=0.9\textwidth]{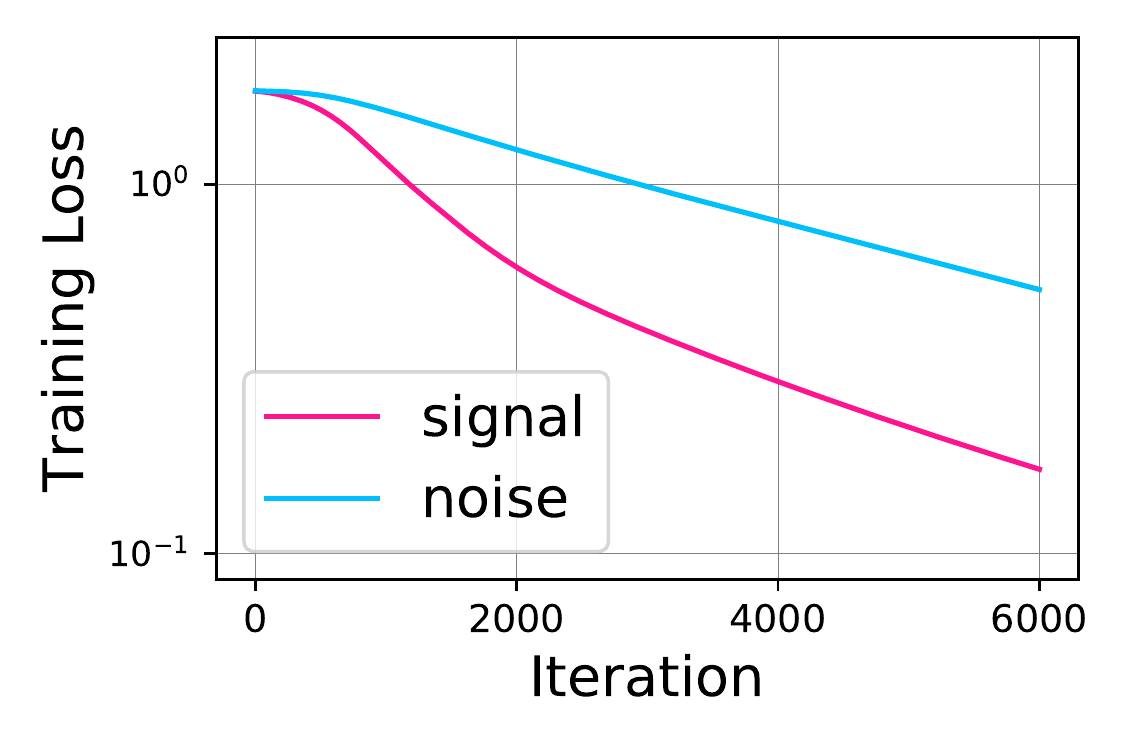} 
        \centering
        \caption{ReLU and non-multiscale.}
    \end{subfigure}   
     \begin{subfigure}[b]{0.46\textwidth}
     \centering
        \includegraphics[width=0.9\textwidth]{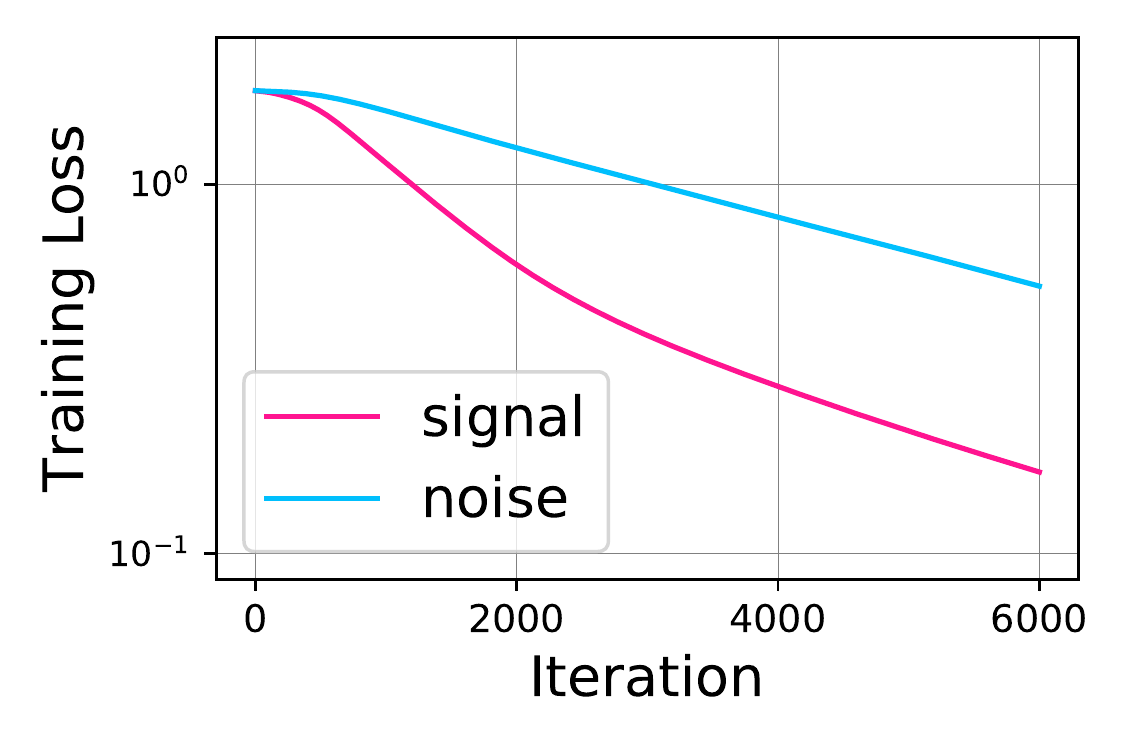} 
        \centering
        \caption{Linear and multiscale.}
    \end{subfigure}  
    \begin{subfigure}[b]{0.46\textwidth}
     \centering
        \includegraphics[width=0.9\textwidth]{figure/noise/CiteSeer_gcn_jk_relu_2layer_lr1e-4.pdf} 
        \centering
        \caption{ReLU and multiscale.}
    \end{subfigure}   
        \vspace{0.2in}
    \caption{\textbf{GNNs train faster when the labels have signal instead of random noise}. We plot the training curves of multiscale and non-multiscale GNNs with ReLU and linear activation,  on the \textbf{Citeseer}  dataset. We use the two-layer GCN model with learning rate $1e-4$ and hidden dimension  $32$. }
    \label{fig:3b}
\end{figure*}
\clearpage

\begin{figure*}[p]
\centering
    \begin{subfigure}[b]{0.55\textwidth}
     \centering
        \includegraphics[width=0.9\textwidth]{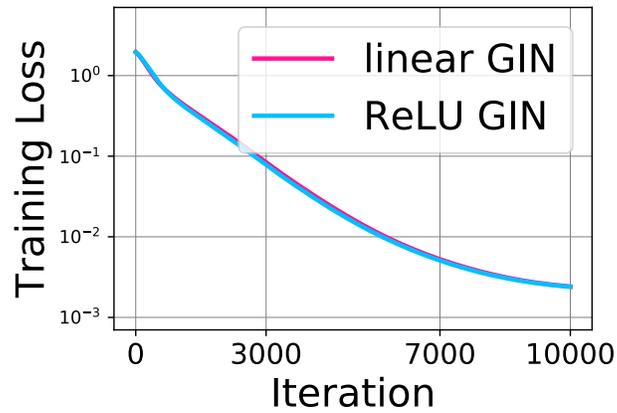} 
        \centering
        \caption{Linear GIN vs. ReLU GIN.}
    \end{subfigure}  
    \begin{subfigure}[b]{0.55\textwidth}
     \centering
        \includegraphics[width=0.9\textwidth]{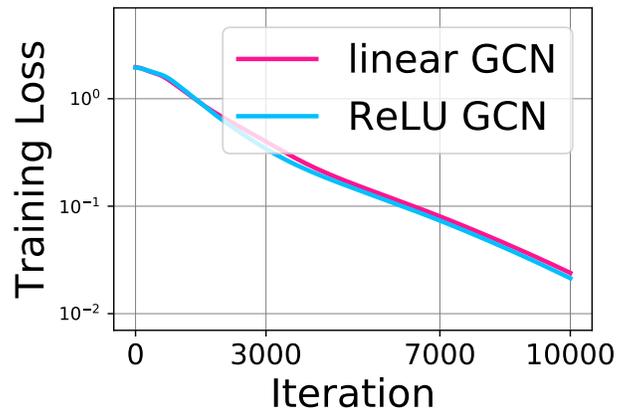} 
        \centering
        \caption{Linear  GCN vs. ReLU GCN.}
    \end{subfigure}   
    \caption{\textbf{Linear GNNs vs. ReLU GNNs}. We plot the training curves of GCN and GIN with ReLU and linear activation on the Cora dataset. The training curves of linear GNNs and ReLU  GNNs are similar, both converging to nearly zero training loss with the same linear rate. Moreover, GIN trains faster than GCN, which agrees with our bound in Theorem~\ref{thm:6}. We use the  learning rate $1e-4$, two layers, and hidden dimension  $32$.  }
    \label{fig:1a}
\end{figure*}
\clearpage

\section{Experimental Setup}
\label{sec:experiments}

In this section, we describe the experimental setup for reproducing our experiments. 

\paragraph{Dataset.} We perform all experiments on the Cora and Citeseer datasets~\cite{sen2008collective}. Cora and Citeer are citation networks and the goal is to classify academic documents into different subjects. The dataset contains bag-of-words features for each document
(node) and citation links (edges) between documents.  The tasks are semi-supervised node classification. Only a subset of nodes  have training labels. In our experiments, we use the default dataset split, i.e., which nodes have training labels, and minimize the training loss accordingly. Tabel~\ref{tab:stat_dataset} shows an overview of the dataset statistics.

\begin{table}[h!]
\begin{center}
\begin{tabular}{ccccc}
\multicolumn{1}{c}{Dataset} & \multicolumn{1}{c}{Nodes} & \multicolumn{1}{c}{Edges} & \multicolumn{1}{c}{Classes} & \multicolumn{1}{c}{Features} \\
\hline
\multicolumn{1}{c}{Citeseer} & \multicolumn{1}{c}{3,327} & \multicolumn{1}{c}{4,732} & \multicolumn{1}{c}{6} & \multicolumn{1}{c}{3,703} \\
\multicolumn{1}{c}{Cora} & \multicolumn{1}{c}{2,708} & \multicolumn{1}{c}{5,429} & \multicolumn{1}{c}{7} & \multicolumn{1}{c}{1,433} \\
\end{tabular}
\end{center}
\caption{Dataset statistics}
\label{tab:stat_dataset}
\end{table}

\paragraph{Training details.}  We describe  the training settings for our experiments. Let us first describe some common hyperparameters and settings, and then for each experiment or figure we describe the other hyperparameters. For our experiments, to more closely align with the common practice in GNN training, we use the Adam optimizer and keep  optimizer-specific hyperparameters except initial learning rate default. We set weight decay to zero. Next, we describe the settings for each experiment respectively.

For the experiment in Figure~\ref{fig:intro}, i.e., the training curves of linear vs. ReLU GNNs, we train the GCN and GIN with two layers on Cora with cross-entropy loss and learning rate 1e-4. We set  the hidden dimension to $32$.

For the experiment in Figure~\ref{fig:c11}, i.e., computing the graph condition for linear GNNs,  we use the linear GCN and GIN model with three layers on Cora and Citeseer. For linear GIN, we set $\epsilon$ to zero and MLP  layer to one.

For the experiment in Figure~\ref{fig:c12}, i.e., computing and plotting the time-dependent condition for linear GNNs, we train a  linear GCN with two layers on Cora with squared loss and learning rate 1e-4. We set the hidden dimension the input dimension for both Cora and for CiteSeer, because the global convergence theorem requires the hidden  dimension to be at least the same as input dimension. Note that  this requirement is standard in previous works as well, such as~\citet{arora2019convergence}. We use the default random initialization of PyTorch. The formula for computing the time-dependent $\lambda_T$ is given in the main paper.

For the experiment in Figure~\ref{fig:c13}, i.e., computing and plotting the time-dependent condition for linear GNNs across multiple training settings, we consider the following settings: 
\begin{enumerate}
    \item Dataset: Cora and Citeseer.
    \item Model: GCN and  GIN.
    \item Depth: Two and four layers.
    \item Activation: Linear and ReLU.
\end{enumerate}
We train the GNN with the settings above with squared loss and learning rate 1e-4. We set the hidden dimension to input dimension for Cora and CiteSeer. We use the default random initialization of PyTorch. The formula for  computing the time-dependent $\lambda_T$ is given in the main paper. For each point, we report the $\lambda_T$ at last epoch.

For the experiment in Figure~\ref{fig:c21}, i.e., computing the graph condition for multiscale linear GNNs,  we use the linear GCN and GIN model with three layers on Cora and Citeseer. For linear GIN, we set $\epsilon$ to zero and MLP  layer to one. 

For the experiment in Figure~\ref{fig:c22}, i.e., computing and plotting the time-dependent condition for multiscale linear GNNs, we train a  linear GCN with two layers on Cora with squared loss and learning rate 1e-4.
We set the hidden dimension to 2000 for Cora and 4000 for CiteSeer. We use the default random initialization of PyTorch.
The formula for  computing the time-dependent $\lambda_T$ is given in the main paper.

For the experiment in Figure~\ref{fig:c23}, i.e., computing and plotting the time-dependent condition for multiscale linear GNNs across multiple training settings, we consider the following settings: 
\begin{enumerate}
    \item Dataset: Cora and Citeseer.
    \item Model: Multiscale GCN and GIN.
    \item Depth: Two and four layers.
    \item Activation: Linear and ReLU.
\end{enumerate}
We train the multiscale GNN with the settings above with squared loss and learning rate 1e-4. We set the hidden dimension to 2000 for Cora and 4000 for CiteSeer. We use the default random initialization of PyTorch. The formula for  computing the time-dependent $\lambda_T$ is given in the main paper. For each point, we report the $\lambda_T$ at last epoch.

For the experiment in Figure~\ref{fig:c1}, i.e., multiscale vs. non-multiscale, we train the GCN with six layers and ReLU activation on Cora with cross-entropy loss and learning rate 5e-5. We set  the hidden dimension to $32$.

We perform  more extensive experiments to verify the conclusion for multiscale vs. non-multiscale in Figure~\ref{fig:1a}. There, we train the GCN with six layers with both ReLU and linear activation on both Cora and Citeseer with cross-entropy loss and learning rate 5e-5. We set  the hidden dimension to $32$.

For the experiment in Figure~\ref{fig:c2}, i.e., acceleration with  depth, we train the non-multiscale GCN with two, four, six layers and ReLU activation on Cora with cross-entropy loss and learning rate 5e-5. We set  the hidden dimension to $32$.

We perform  more extensive experiments to verify the conclusion for acceleration with depth in Figure~\ref{fig:2a} and Figure~\ref{fig:2b}. There, we train both multiscale and non-multiscale  GCN with $2$, $4$, $6$ layers with both ReLU and linear activation on both Cora and Citeseer with cross-entropy loss and learning rate 5e-5. We set  the hidden dimension to $32$.

For the experiment in Figure~\ref{fig:c3}, i.e., signal vs. noise, we train the non-multiscale GCN with two layers and ReLU activation on Cora with cross-entropy loss and learning rate 1e-4. We set  the hidden dimension to $32$. For signal, we use the default labels of Cora. For noise, we randomly choose a class as the label. 

We perform  more extensive experiments to verify the conclusion for signal vs. noise in Figure~\ref{fig:3a} and Figure~\ref{fig:3b}. There, we train both multiscale and non-multiscale  GCN with two layers with both ReLU and linear activation on both Cora and Citeseer with cross-entropy loss and learning rate 1e-4. We set  the hidden dimension to $32$.

For the experiment in Figure~\ref{fig:noise}, i.e., first term vs. second term, we use the same setting as in Figure~\ref{fig:c3}. We use the formula of our Theorem in the main paper.

\paragraph{Computing resources.} The computing hardware is based on the CPU and the
NVIDIA GeForce RTX 1080 Ti GPU. The software implementation is based on PyTorch and PyTorch Geometric~\citep{fey2019fast}. For all experiments, we train the GNNs with CPU and compute the eigenvalues with GPU.

\end{document}